\documentclass{article}

\usepackage[main, final]{neurips_2026}

\usepackage[utf8]{inputenc} 
\usepackage[T1]{fontenc}    
\usepackage{hyperref}       
\usepackage{url}            
\usepackage{booktabs}       
\usepackage{amsfonts}       
\usepackage{nicefrac}       
\usepackage{microtype}      
\usepackage[table]{xcolor}  

\usepackage{amsthm}
\usepackage{amssymb}
\usepackage{amsmath}
\usepackage[english]{babel}
\usepackage{graphicx}
\usepackage{booktabs} 
\usepackage{dsfont}
\usepackage{multirow}
\usepackage{multicol}

\usepackage{tikz}
\usetikzlibrary{arrows.meta, decorations.pathmorphing, calc, patterns, positioning}
\usepackage{subcaption}
\usepackage{bm}
\usepackage{graphicx}
\graphicspath{{../figures/}}

\newcommand{\1}{\mathds{1}}

\usepackage{comment}

\usepackage{cancel}

\RequirePackage{fancyhdr}
\RequirePackage{algorithm}
\RequirePackage{algorithmic}

\usepackage{array} 

\newtheorem{theorem}{Theorem}
\newtheorem{lemma}{Lemma}
\newtheorem{prop}{Proposition}

\newtheorem{assumption}{Assumption}

\newcommand{\y}{\boldsymbol{y}}
\newcommand{\K}{\boldsymbol{K}}
\newcommand{\p}{\boldsymbol{p}}

\newcommand{\z}{\boldsymbol{z}}

\let\r\undefined
\newcommand{\r}{\boldsymbol{r}}
\let\v\undefined
\newcommand{\v}{\boldsymbol{v}}
\let\k\undefined
\newcommand{\k}{\boldsymbol{k}}
\let\u\undefined
\newcommand{\u}{\boldsymbol{u}}

\newcommand{\X}{\boldsymbol{X}}

\newcommand{\W}{\boldsymbol{W}}
\newcommand{\A}{\boldsymbol{A}}

\newcommand{\M}{\boldsymbol{M}}

\newcommand{\V}{\boldsymbol{V}}

\newcommand{\q}{\boldsymbol{q}}
\let\a\undefined
\newcommand{\a}{\boldsymbol{a}}

\newcommand{\R}{\mathbb{R}}
\newcommand{\J}{\mathcal{J}}
\let\L\undefined
\newcommand{\L}{\mathcal{L}}
\newcommand{\Q}{\mathcal{Q}}

\newcommand{\inner}[1]{\left\langle #1\right\rangle}

\allowdisplaybreaks

\newcommand{\acc}{\mathrm{acc}}

\newcommand{\logits}{\mathrm{logits}}
\newcommand{\BLOCKN}{B_n}

\newcommand{\BLOCKK}{B_k}

\newtheorem{innercustomthm}{Theorem}
\newenvironment{customthm}[1]
  {\renewcommand\theinnercustomthm{#1}\innercustomthm}
  {\endinnercustomthm}

\newtheorem{innercustomlemma}{Lemma}
\newenvironment{customlemma}[1]
  {\renewcommand\theinnercustomlemma{#1}\innercustomlemma}
  {\endinnercustomlemma}

\title{DropKV: Decoupling Residual-Output Perturbation for Near-Optimal KV-Cache Eviction}

\author{%
  Aozhong Zhang$^1$ \; Selcuk Gurses$^1$ \; Yanxia Deng$^3$ \; Naigang Wang$^2$ \; Chi-Chun Liu$^2$ \\
  \bfseries Davis Wertheimer$^2$ \; Derrick Liu$^2$ \; Xin Li$^1$ \; Zi Yang$^1$ \; {Felix X.-F. Ye$^1$\thanks{Corresponding authors. The code is also available at \href{https://github.com/NVIDIA/kvpress/blob/main/kvpress/presses/dropkv_press.py}{NVIDIA/kvpress}}} \; {Penghang Yin$^1$\footnotemark[1]}\\[0.5em]
  \normalfont $^1$University at Albany, SUNY \quad $^2$IBM T. J. Watson Research Center \quad $^3$Northwestern University\\
  \texttt{\{azhang3, sgurses, xli48, zyang8, xye2, pyin\}@albany.edu}\\
  \texttt{\{nwang, cliu\}@us.ibm.com} \quad \texttt{\{Davis.Wertheimer, dliu\}@ibm.com}\\
  \texttt{yanxia.deng@kellogg.northwestern.edu}\\
  \texttt{https://github.com/AozhongZhang/DropKV}\\
}

\begin{document}

\maketitle

\begin{abstract}
Key-value (KV) cache management has become a critical bottleneck for scaling Large Language Models (LLMs) to long-context scenarios due to linear memory growth. While recent eviction methods have moved beyond simple attention-score heuristics to incorporate value representations, they typically face a fundamental trade-off: the underlying optimization for minimizing output perturbation is an NP-hard combinatorial problem, leading to mathematically suboptimal heuristics or implementations that are difficult to kernelize. In this work, we propose DropKV, a simple principled eviction framework based on \textbf{D}ecoupled \textbf{R}esidual-\textbf{O}utput \textbf{P}erturbation. By decoupling the joint eviction decision into independent per-token scoring, DropKV circumvents combinatorial intractability and admits a provable constant-factor approximation guarantee under a cone condition that we empirically verify on three long-context LLMs. To ensure practical viability, we provide a high-performance implementation using fused Triton kernels that avoid materializing the full attention matrix, delivering a 19.8$\times$ scoring-kernel speedup that translates to up to a 9.9\% matched-batch end-to-end prefill speedup. Extensive evaluations on the RULER and LongBench benchmarks demonstrate that DropKV consistently outperforms state-of-the-art baselines at equivalent cache budgets, achieving the lowest inference latency among eviction methods while also remaining faster than the dynamic cache baseline. 
\end{abstract}

\section{Introduction}
\label{sec:intro}

Large Language Models (LLMs)~\cite{achiam2023gpt,grattafiori2024llama,jiang2023mistral7b,gemmateam2025gemma3technicalreport,qwen2025qwen25technicalreport} are now deployed across code, scientific, and conversational tasks. As these models are increasingly used on long-form content such as entire codebases or lengthy transcripts, the underlying hardware efficiency becomes paramount. Autoregressive inference in transformer-based language models \cite{vaswani2017attention} relies heavily on key-value (KV) caching, which stores previously computed key and value projections to avoid redundant attention computation during the decoding phase. 
This mechanism effectively trades increased memory usage for significantly faster inference, making it a standard optimization in modern LLM inference systems. However, the KV cache memory requirement grows linearly with the  sequence length and can become a prohibitive burden in long-context scenarios. For example, caching 128K tokens in a Phi-3-mini~\cite{abdin2024phi3technicalreporthighly} model requires approximately 48 GB of memory in \texttt{FP16} precision for the KV cache alone, which dwarfs the roughly 7 GB needed to store the whole model weights. This rapid expansion of the memory footprint leads to substantial GPU memory saturation and increased memory I/O overhead~\cite{zhang2023h2o}, making KV cache a critical bottleneck for scalable long-context inference.

To address this challenge, a growing body of work has been proposed to reduce the KV cache memory footprint, including quantization~\cite{hooper2024kvquant,liu2024kivi}, low-rank compression~\cite{chang2024palu,sun2024shadowkv}, and eviction~\cite{cai2024pyramidkv,li2024snapkv,park2025keydiff,tang2024quest,zhang2023h2o}. Among these methods, cache eviction has emerged as a practical solution that selectively retains only a small subset of tokens during inference. Existing eviction methods roughly fall into two categories. Query-agnostic methods evaluate importance based on intrinsic properties of the cache, independent of the current decoding state, such as token position~\cite{xiao2023efficient}, the $\ell_2$-norms~\cite{devoto2024simple} or pairwise similarities~\cite{park2025keydiff} of the keys. 
These methods overlook the dynamic interaction between queries and cached keys. In contrast, query-aware methods leverage attention signals to identify critical KV pairs~\cite{li2024snapkv,zhang2023h2o}. While more adaptive, existing methods typically equate importance solely with attention weights. This perspective is incomplete, as it ignores the critical role of the associated values; even a token with a moderate attention score can significantly shift the final representation if its corresponding value vector has a large magnitude or unique direction. 

More recent methods have sought to transcend attention-score-only criteria by incorporating value representations into their selection logic~\cite{feng2025identify, geng2025accurate} in a more principled manner. Specifically, they attempt to quantify the perturbation induced by cache eviction to the final attention output, in an effort to identify the optimal eviction set that minimizes the deviation from the original output. However, this underlying optimization is inherently an NP-hard combinatorial problem, forcing existing research to settle for mathematically suboptimal heuristics. Furthermore, as policy designs become increasingly intricate, it remains unclear whether these approaches can be efficiently kernelized, limiting their viability in production inference.

\paragraph{Contributions.} 
In this work, we propose DropKV, a mathematically grounded and efficiently implemented KV-cache eviction framework. Specifically, our main contributions are as follows:
\begin{itemize}
    \item \textbf{Decoupled Perturbation Framework:} We introduce a novel scoring mechanism that approximates the optimal eviction set by decoupling the NP-hard combinatorial search into a tractable per-token perturbation minimization with linear computational complexity.
    
    \item \textbf{Theoretical Near-Optimality Guarantees:} We prove that our decoupled proxy matches the optimal solution of the NP-hard problem up to a constant multiplicative factor under the cone condition, which we empirically verify on three long-context LLMs: Llama-3.1-8B, Mistral-7B-512K, and Qwen2.5-7B-1M.

    \item \textbf{High-Performance Triton Implementation:} We develop a suite of fused Triton kernels that compute eviction scores on-the-fly. By avoiding the materialization of the attention matrix, our implementation reduces the scoring-kernel scratch from ${\sim}9$\,GB to ${\sim}17$\,MB at $n{=}128\mathrm{K}$, removing the prefill memory bottleneck and yielding end-to-end matched-batch prefill speedups.

    \item \textbf{State-of-the-Art Performance:} On RULER and LongBench at 5\% and 10\% cache budgets, DropKV achieves the highest accuracy and the lowest latency among query-aware and output-aware eviction baselines, while also remaining faster than the dynamic cache baseline.
\end{itemize}



\section{Background and Related Work}
\label{sec:background}
\subsection{Causal Attention and KV Cache}
Transformer-based language models \cite{vaswani2017attention} rely on the self-attention mechanism to model long-range interactions among tokens in a sequence. Given an input matrix of hidden representations $  \X \in \mathbb{R}^{n \times d}  $, the query, key, and value matrices are obtained via linear projections:
$
\boldsymbol{Q} = \X \W_Q \in \mathbb{R}^{n \times d}, \quad 
\K = \X \W_K \in \mathbb{R}^{n \times d}, \quad 
\V = \X \W_V \in \mathbb{R}^{n \times d},
$
where $\W_Q, \W_K, \W_V \in \mathbb{R}^{d \times d}$ are learnable projection matrices. 
The attention output is then computed as
\begin{equation}\label{eq:attn}
    \boldsymbol{A} := \mathrm{softmax}\left( \frac{\boldsymbol{Q}\boldsymbol{K}^\top + \M}{\sqrt{d}} \right) \boldsymbol{V}\in\R^{n\times d},
\end{equation}
where $\M\in\R^{n\times n}$ is a causal attention mask that prevents attending to future tokens. It is defined as $\M_{ij} = 0$ if $i\geq j$, and $-\infty$ otherwise. During autoregressive inference, the computation of self-attention proceeds in two phases: the prefill stage and the decoding stage.

In the \emph{prefill} stage, the model processes the $n$-token prompt in a single forward pass via~\eqref{eq:attn} and stores the resulting $\K,\V\in\R^{n\times d}$ in the KV cache. In the \emph{decoding} stage, given the cached $\K_{1:t},\V_{1:t}$, each new token at step $t{+}1$ is generated from its query $\q_{t+1}$ as
$$
\a_{t+1}^\top
=
\mathrm{softmax}\!\left(
\q_{t+1}^\top\K_{1:t}^\top/\sqrt{d}
\right)\V_{1:t},
$$
after which its key $\k_{t+1}$ and value $\v_{t+1}$ are appended to the cache.

\subsection{KV-Cache Eviction}

The challenge of efficient long-context inference has led to a variety of strategies for KV cache eviction. Early approaches relied on simple sliding windows \cite{beltagy2020longformer} to maintain recent context, though these often risked losing critical historical information. To address this, StreamingLLM~\cite{xiao2023efficient} introduced ``attention sinks'', demonstrating that preserving a small set of initial tokens is helpful for maintaining the functionality of the attention mechanism. KNorm~\cite{devoto2024simple} retains tokens with smaller key norms, while KeyDiff~\cite{park2025keydiff} prefers tokens whose keys are more diverse in the representation space. KVzip~\cite{kim2025kvzip} evaluates KV cache importance by prompting the model to reconstruct chunked contexts and aggregating the maximum attention weights received by the context keys. Methods like H2O~\cite{zhang2023h2o} and Scissorhands~\cite{liu2023scissorhands} moved toward importance-based selection by utilizing accumulated attention scores to identify ``heavy hitter'' entries. This line of research culminated in state-of-the-art frameworks like SnapKV~\cite{li2024snapkv}, which employs observation window-based accumulation and pooling operations to consolidate attention weights. More recently, the focus has expanded from token-level scoring to architectural optimization; for instance, AdaKV~\cite{feng2024ada} and related budget-allocation methods~\cite{cai2024pyramidkv,qin2025cake,xiao2024duoattention,yang2024pyramidinfer} dynamically redistribute cache capacity across different attention heads to better capture their unique characteristics. Despite these advancements, existing strategies remain largely empirical, relying almost exclusively on attention weights as a proxy for token importance.

Newer inquiries have begun to bridge this gap by incorporating value representations into the eviction logic~\cite{feng2025identify, geng2025accurate}. These methods can be viewed as \emph{output-aware} eviction strategies that attempt to indirectly minimize the attention-output perturbation induced by token removal. For example, CriticalKV~\cite{feng2025identify} seeks to minimize a loose upper-bound on the attention-output perturbation, while AnDPro~\cite{geng2025accurate} formulates the combinatorial perturbation minimization problem through a convex relaxation. DropKV instead scores tokens using the exact per-token output perturbation and further admits a constant-factor approximation guarantee for the original joint multi-token eviction problem under a cone condition (\S\ref{sec:theory}). See Appendix~\ref{app:prior-objectives} for detailed comparisons with existing methods.

A separate line of work targets the same memory/bandwidth bottleneck without permanently removing tokens: dynamic sparse attention keeps the full KV cache and attends to only a subset per step (Quest~\cite{tang2024quest}, SparQ~\cite{ribar2023sparq}, Loki~\cite{singhania2024loki}), while SVD-based compression and offload methods retain all token positions but compress or stream the cache between GPU and CPU (ShadowKV~\cite{sun2024shadowkv}, xKV~\cite{chang2025xkv}, InfiniGen~\cite{lee2024infinigen}). These approaches are orthogonal to the eviction track, in which DropKV operates: tokens are permanently removed and the resident cache footprint shrinks.

\section{Method}
\subsection{Attention-Output Perturbation}
Recall that in an autoregressive transformer with $n$ input tokens, the attention output $\a$ for a query vector $\q \in\R^d$ is computed as
$$
\p = \mathrm{softmax}\big( \K \q/\sqrt{d}\big)\in\R^n, \quad \a =  \V^\top \p = \sum_{i=1}^n p_i \v_i \in\R^d,
$$
where $\K, \V \in\R^{n\times d}$ are the accumulated key and value projection matrices in the KV cache, $\v_i^\top$ is the $i$-th row vector of $\V$, and $p_i\in(0,1)$ is the $i$-th attention weight after softmax, satisfying $\sum_{i=1}^n p_i = 1$. The following result quantifies the perturbation to the attention output, $\Delta(\mathcal{J})$, associated with query $\q$ induced by an index set $\J$ of evicted tokens. That is, the perturbation depends not only on the attention scores but also on the value residuals of the form $\a-\v_i$.

\begin{lemma}[Perturbation]\label{thm:exact}
    Suppose a set of tokens indexed by $\mathcal{J} \subset [n]$ is evicted from the full KV cache, and let $\a_{-\mathcal{J}}$ denote the resulting attention output. Then the post-eviction shift $\Delta(\mathcal{J}) := \a_{-\mathcal{J}} - \a$ associated with the query $\q$ satisfies
    $$
    \Delta(\mathcal{J}) = \frac{\sum_{i \in \mathcal{J}} p_i (\a-\v_i)}{1 - P_{\mathcal{J}}},
    $$
    where $P_\mathcal{J} := \sum_{i \in \mathcal{J}} p_i$ denotes the total attention weights associated with the evicted tokens. In particular, for $\mathcal{J}=\{j\}$, we have
    $\Delta(\{j\}) = \frac{p_j}{1-p_j}\,(\a-\v_j)$ for per-token eviction.
\end{lemma}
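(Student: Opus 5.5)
The plan is to use the fact that softmax restricted to a subset is just renormalization. First I will write the unnormalized weights $w_i = \exp(\k_i^\top\q/\sqrt{d})$, so that $p_i = w_i/Z$ with $Z=\sum_{i=1}^n w_i$. After evicting the tokens in $\J$, the attention weights over the remaining indices $[n]\setminus\J$ become $w_i/\sum_{l\notin\J} w_l$. Dividing numerator and denominator by $Z$ gives $p_i/(1-P_\J)$. This is well defined because every $p_i>0$, so $1-P_\J>0$ whenever $\J\neq[n]$; I will state this as the implicit assumption that $\J$ is a proper subset. Hence
$$
\a_{-\J} = \frac{1}{1-P_\J}\sum_{i\notin\J} p_i \v_i .
$$

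Next I substitute $\sum_{i\notin\J} p_i\v_i = \a - \sum_{i\in\J} p_i \v_i$, which follows from $\a=\sum_{i=1}^n p_i\v_i$. Then I put everything over the common denominator:
$$
\a_{-\J}-\a = \frac{\a - \sum_{i\in\J}p_i\v_i - (1-P_\J)\a}{1-P_\J} = \frac{P_\J\,\a - \sum_{i\in\J}p_i\v_i}{1-P_\J}.
$$
Writing $P_\J\a=\sum_{i\in\J}p_i\a$ turns the numerator into $\sum_{i\in\J}p_i(\a-\v_i)$, which is the claimed formula. The singleton case follows by setting $\J=\{j\}$, so that $P_\J=p_j$.

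There is no real obstacle here. The only point needing care is the renormalization step: the softmax over the reduced cache equals the original probabilities rescaled by $1/(1-P_\J)$. This holds because the logits of the surviving tokens are unchanged by eviction, as the keys are untouched, so only the normalizing constant changes.
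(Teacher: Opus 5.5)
Your proof is correct and follows the paper's argument step for step. You renormalize the surviving softmax weights to $p_i/(1-P_{\mathcal{J}})$, substitute $\sum_{i\notin\mathcal{J}}p_i\boldsymbol{v}_i=\boldsymbol{a}-\sum_{i\in\mathcal{J}}p_i\boldsymbol{v}_i$, and combine over the common denominator, exactly as the paper does. Deriving the renormalization from the unnormalized weights, and noting that $\mathcal{J}\neq[n]$ is needed so that $1-P_{\mathcal{J}}>0$, are small clarifications the paper leaves implicit.
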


Crucially, all quantities necessary for evaluating $\Delta(\mathcal{J})$ can be precomputed \emph{prior to the execution of any eviction decision}. Equivalent formulations of $\Delta(\mathcal{J})$ have appeared in recent studies \cite{feng2025identify,geng2025accurate}. We include the full proof in Appendix~\ref{sec:proof} for self-containedness.


\subsection{Combinatorial Objective and a Decoupled Proxy}
To determine the optimal eviction set $\J^*$ of $k$ tokens, one seeks to solve the following combinatorial optimization problem:
\begingroup
\renewcommand{\theHequation}{dropkvExact}
\begin{equation}\tag{Exact}\label{eq:opt_exact}
\J^* = \arg\min_{\substack{\mathcal{J} \subset [n] \\ |\mathcal{J}| = k}} \| \Delta(\mathcal{J})\|_2,
\end{equation}
\endgroup
where $\| \Delta(\mathcal{J})\|_2 = \frac{\| \sum_{i \in \mathcal{J}} p_i (\a - \v_i) \|_2}{1 - P_{\J}}$.
However, this optimization is NP-hard (Appendix~\ref{app:nphard}): brute-force enumeration of the $\binom{n}{k}$ feasible subsets is infeasible even for moderate $n$ and $k$, motivating a tractable proxy.

To circumvent this intractability, we propose a simple surrogate: select the $k$ tokens that individually induce the smallest output perturbation. Formally, we perform a smallest topk selection, denoted by $\mathrm{topk}^{-}$, over the per-token perturbation loss:
\begingroup
\renewcommand{\theHequation}{dropkvProxy}
\begin{equation}\tag{Proxy}\label{eq:opt_proxy}
\hat{\J} 
= \arg\mathrm{topk}_{j \in [n]}^{-} \left(\| \Delta(\{j\}) \|_2, k\right),
\end{equation}
\endgroup
where $\| \Delta(\{j\})\|_2 = \frac{p_j}{1 - p_j} \bigl\| \a - \v_j \bigr\|_2$ is the perturbation norm from evicting only the $j$-th token. Since each per-token perturbation is independent, all $n$ scores cost $O(nd)$ in parallel, making~\eqref{eq:opt_proxy} dramatically more tractable than the combinatorial problem in~\eqref{eq:opt_exact}.

Our proposed proxy decouples the joint eviction decision into independent per-token scoring by simply omitting the interactions among evicted tokens. While this independence assumption might seem crude, we prove theoretically in \S~\ref{sec:theory} that it is surprisingly \emph{near-optimal} under conditions commonly observed in practice. Specifically, when the attention weights $p_j$ are small (Assumption~\ref{ass:small-p}) and the residual directions $\a - \v_j$ are approximately aligned (Assumption~\ref{ass:cone}), the proxy $\hat\J$ closely approximates the optimal eviction policy $\J^*$ in terms of perturbation loss (Theorem~\ref{thm:optimal}).



\subsection{Multi-Query Aggregation}
For each token $j$, we define the importance score based on the cumulative eviction loss over an observation window of queries $\Q = \{\q^{(t)}\}_{t=1}^w$, similar to SnapKV \cite{li2024snapkv}. Specifically, accounting for the dependence of the attention weight $p_j^{(t)}$ and output $\a^{(t)}$ on the query $\q^{(t)}$, we score the $j$-th token by total squared perturbation error across $\Q$: 
$$\mathrm{score}_j :=\sum_{t=1}^w \|\Delta^{(t)}(\{j\})\|_2^2 = \sum_{t=1}^w \left(p_j^{(t)}/(1 - p_j^{(t)})\right)^2\| \a^{(t)} - \v_j \|_2^2.
$$
The squared form is a kernel-friendly surrogate: it enables the fused-tile implementation described in \S~\ref{subsec:triton}, while at $w{=}1$ preserving the same $\mathrm{topk}^{-}$ ranking as the unsquared proxy~\eqref{eq:opt_proxy}. Theorem~\ref{thm:optimal} establishes the constant-factor bound for the single-query proxy~\eqref{eq:opt_proxy}; the multi-query case ($w>1$) follows the standard observation-window practice~\citep{feng2025identify,li2024snapkv}. Proposition~\ref{prop:multiquery} (Appendix~\ref{sec:proof}) extends it to the multi-query score. Finally, tokens are ranked by their scores, and the $k$ tokens with the smallest scores are selected for eviction:
\begingroup
\renewcommand{\theHequation}{dropkvScore}
\begin{equation}\tag{DropKV}\label{eq:DropKV}
    \mathrm{topk}_{j \in [n]}^{-} \left(\mathrm{score}_j, k\right). 
\end{equation}
\endgroup
The pseudocode for DropKV is summarized in Algorithm~\ref{alg:dropkv}.

\subsection{Triton Kernel Implementation}\label{subsec:triton}
A na\"ive PyTorch realization of Algorithm~\ref{alg:dropkv} materializes dense window-by-cache tensors per head: the attention weights have shape $w\times n$, and a direct broadcasted implementation of the value-residual term forms $w\times n\times d$ intermediates. This memory and bandwidth pattern grows as $O(wnd)$ and dominates the scoring path at long context. We replace it with three fused Triton kernels --- a FlashAttention-2-style forward pass, the scoring kernel, and a split-K merge --- that compute $\mathrm{score}_j$ in \eqref{eq:DropKV} by streaming over key/value tiles without materializing the attention-weight or value-distance tensors.

The scoring kernel is the central contribution. Using the row-wise Log-Sum-Exp $\mathrm{LSE}^{(t)} := \log\sum_{j=1}^n \exp(\K\q^{(t)}/\sqrt{d})_j$ produced by the forward pass~\citep{dao2023flashattention2}, it streams over key tiles, recomputes logits $\mathrm{logit}_j = \k_j^\top \q^{(t)}/\sqrt{d}$ on-the-fly, derives the attention weight $p_j^{(t)} = \exp(\mathrm{logit}_j - \mathrm{LSE}^{(t)})$, and accumulates $\mathrm{score}_j \mathrel{+}= \big(p_j^{(t)}/(1-p_j^{(t)})\big)^{2}\,\|\a^{(t)} - \v_j\|_2^{2}$ across query indices $t$. By precomputing $\|\a^{(t)}\|_2^{2}$ and $\|\v_j\|_2^{2}$ once and using the identity $\|\a-\v\|_2^{2} = \|\a\|_2^{2} + \|\v\|_2^{2} - 2\langle\a,\v\rangle$, each $\V$ tile is read only once per pass, substantially reducing I/O.

The auxiliary forward and split-K merge kernels follow standard FlashAttention-style online softmax; detailed pseudocode for the scoring kernel is in Appendix~\ref{sec:triton-pseudocode}. The fused path matches the PyTorch reference to within \texttt{BF16} round-off, with a scoring-step speedup over the na\"ive PyTorch baseline that grows from $1.8{\times}$ at $n=4\mathrm{K}$ to $19.8{\times}$ at $n=128\mathrm{K}$ ($w=8$, A100-80GB; see Fig.~\ref{fig:scoring} left).

\begin{algorithm}[h]
\caption{DropKV: Decoupled Residual-Output Perturbation for KV-Cache Eviction. }
\label{alg:dropkv}
\begin{algorithmic}[1]
\REQUIRE keys $\K\in\R^{n\times d}$, values $\V\in\R^{n\times d}$,
  query window $\Q=\{\q^{(t)}\}_{t=1}^{w}$, eviction size $k$, pooling kernel size $\kappa$
\ENSURE Eviction set $\widehat{\J}\subseteq[n]$ with $|\widehat{\J}|=k$
\STATE $\forall\, t\in[w]:\ \p^{(t)}\gets\mathrm{softmax}(\K\q^{(t)}/\sqrt{d}),\;
       \a^{(t)}\gets\V^\top\p^{(t)}$\hfill
\FOR{$j = 1, \dots, n$}
  \STATE $\mathrm{score}_j \gets \sum_{t=1}^{w}\big(p_j^{(t)}/(1-p_j^{(t)})\big)^{2}\,\|\a^{(t)}-\v_j\|_2^{2}$
\ENDFOR
\STATE $\mathrm{score} \gets \mathrm{MaxPool1D}(\mathrm{score},\ \kappa)$;\ \ $\mathrm{score}_j \gets +\infty$ for $j \in \{n{-}w{+}1,\dots,n\}$\hfill
\STATE $\widehat{\J} \gets$ indices of the $k$ smallest $\mathrm{score}_j$
\RETURN $\widehat{\J}$
\end{algorithmic}
\end{algorithm}




\section{Theoretical Guarantees}\label{sec:theory}


In this section, we establish formal guarantees for our proxy~\eqref{eq:opt_proxy}. To make the analysis tractable and realistic, we restrict attention to eviction over a \emph{candidate pool} $\mathcal{L} \subseteq [n]$ (e.g., the set of tokens with the lowest attention scores) and consider only subsets $\mathcal{J} \subseteq \mathcal{L}$ with exactly $|\mathcal{J}| = k$. We begin with two assumptions and report their numerical verification in Appendix~\ref{app:assumption-verification}.

\begin{assumption}[Small Eviction Probabilities]\label{ass:small-p}
There exists $\varepsilon \in (0,1)$ with $k\varepsilon<1$ such that $p_i \le \varepsilon$ for all $i \in \mathcal{L}$.
\end{assumption}

Under this condition, any set $\mathcal{J} \subseteq \mathcal{L}$ with $|\mathcal{J}| = k$ satisfies $P_{\mathcal{J}}= \sum_{i \in \mathcal{J}} p_i \le k \varepsilon < 1$. When $\varepsilon$ is small (as is typical for low-attention tokens selected for eviction), $P_{\mathcal{J}}$ remains negligible. Next, we impose a directional alignment condition on the residual vectors $\r_i := \a - \v_i \in \mathbb{R}^d$ for $i \in \mathcal{L}$.



\begin{assumption}[Cone Condition]\label{ass:cone}
There exist a unit vector $\u \in \mathbb{R}^d$ with $\|\u\|_2 = 1$ and a constant $c \in (0,1]$ such that
$\u^\top \r_i \ge c \|\r_i\|_2, \quad \forall \, i \in \mathcal{L}.$
\end{assumption}

Intuitively, the cone condition requires the perturbations induced by different token removals to be sufficiently aligned rather than adversarially canceling each other.
Our main result, Theorem~\ref{thm:optimal}, shows that, under these assumptions, restricting the top-k selection in \eqref{eq:opt_proxy} to the candidate pool $\mathcal{L}$ yields a near-optimal objective value in \eqref{eq:opt_exact}. It provides an approximation matching the optimal solution of the NP-hard problem \eqref{eq:opt_exact} up to a constant multiplicative factor. The full proof is deferred to Appendix~\ref{sec:proof}.

\begin{theorem}[Near-Optimality]\label{thm:optimal}
Let $\J^*=\arg\min\{F(\J) := \|\Delta(\J)\|_2:\J\subseteq \L,\ |\J|=k\}$ be the optimal eviction set and
$\hat{\J}=\arg\mathrm{topk}_{j\in\L}^{-} \left(\|\Delta(\{j\})\|_2, k\right)$ the proxy.
Under Assumptions~\ref{ass:small-p} and \ref{ass:cone}, we have
$$
F(\hat{\J})
\le \frac{1}{c(1-\varepsilon)(1-k\varepsilon)}\,F(\J^*).
$$
\end{theorem}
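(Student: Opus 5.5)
We sandwich the numerator $N(\J):=\|\sum_{i\in\J}p_i\r_i\|_2$ of $F(\J)=N(\J)/(1-P_\J)$ between two additive quantities. The upper bound is the triangle inequality: $N(\J)\le \sum_{i\in\J}p_i\|\r_i\|_2$. For the lower bound we use the cone direction $\u$ from Assumption~\ref{ass:cone}. Since $\|\u\|_2=1$, Cauchy--Schwarz gives
$$N(\J)\ge \u^\top\sum_{i\in\J}p_i\r_i=\sum_{i\in\J}p_i\,\u^\top\r_i\ge c\sum_{i\in\J}p_i\|\r_i\|_2 .$$
Writing $s_i:=p_i\|\r_i\|_2$ and $S(\J):=\sum_{i\in\J}s_i$, we therefore have $cS(\J)\le N(\J)\le S(\J)$ for every $\J\subseteq\L$. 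This reduces the combinatorial objective to an additive one, up to the factor $c$ and the denominators.

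Next we relate the proxy to minimizing $S$. The proxy ranks tokens by $\|\Delta(\{j\})\|_2=s_j/(1-p_j)$, not by $s_j$, so $\hat\J$ need not minimize $S$ exactly. Assumption~\ref{ass:small-p} gives $1\le 1/(1-p_j)\le 1/(1-\varepsilon)$ on $\L$. The $k$ smallest values of $s_j/(1-p_j)$ minimize the additive objective $\sum_{j\in\J}s_j/(1-p_j)$ over $|\J|=k$. Hence
$$S(\hat\J)\le\sum_{j\in\hat\J}\frac{s_j}{1-p_j}\le\sum_{j\in\J^*}\frac{s_j}{1-p_j}\le\frac{1}{1-\varepsilon}S(\J^*).$$
This step is where the factor $(1-\varepsilon)^{-1}$ comes from.

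Finally we handle the denominators $1-P_\J$. For any admissible $\J$ we have $1-k\varepsilon\le 1-P_\J\le 1$. This gives
$$F(\hat\J)\le \frac{S(\hat\J)}{1-k\varepsilon}\qquad\text{and}\qquad F(\J^*)\ge N(\J^*)\ge cS(\J^*).$$
Chaining the three steps yields
$$F(\hat\J)\le\frac{S(\hat\J)}{1-k\varepsilon}\le\frac{S(\J^*)}{(1-\varepsilon)(1-k\varepsilon)}\le\frac{F(\J^*)}{c(1-\varepsilon)(1-k\varepsilon)},$$
which is exactly the claimed bound.

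The main subtlety is the middle step. One must avoid claiming that $\hat\J$ minimizes $S$ itself. Instead we use that $\hat\J$ exactly minimizes the reweighted additive sum, and pay $(1-\varepsilon)^{-1}$ to convert back to $S$. The remaining step to check is the cone lower bound, which needs $p_i\ge0$ so that each term $p_i\,\u^\top\r_i$ inherits the inequality.
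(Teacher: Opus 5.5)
Your proof is correct and is essentially the paper's argument: it uses the same triangle-inequality upper bound, the same cone lower bound via $\u$, the fact that $\hat\J$ exactly minimizes $G(\J)=\sum_{j\in\J}\frac{p_j}{1-p_j}\|\r_j\|_2$, and the bounds $1-k\varepsilon\le 1-P_\J\le 1$ and $1\le (1-p_j)^{-1}\le (1-\varepsilon)^{-1}$. The only difference is bookkeeping. The paper pivots on $G$ through the sandwich $c(1-\varepsilon)G(\J)\le F(\J)\le G(\J)/(1-k\varepsilon)$, while you pivot on the unweighted sum $S(\J)$ and pay the $(1-\varepsilon)^{-1}$ conversion in the middle; both give the same constant.
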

Proposition~\ref{prop:multiquery} in Appendix~\ref{sec:proof} extends Theorem~\ref{thm:optimal} to the multi-query score~\eqref{eq:DropKV}, under the same assumptions imposed on each query.

\paragraph{Numerical Verification of the Assumptions.}
Appendix~\ref{app:assumption-verification} reports the small-weight statistics for Assumption~\ref{ass:small-p}. We verify Assumption~\ref{ass:cone} on real prefill passes at the RULER deployment length $n=128\mathrm{K}$ across our three target long-context models: Llama-3.1-8B-Instruct \cite{grattafiori2024llama}, MegaBeam-Mistral-7B-512K \cite{wu2025scaling}, and Qwen2.5-7B-Instruct-1M \cite{qwen2.5}.
For each (layer, head, query-position) triple, we compute the cone constant:
$$
c^\star \;:=\; \max_{\|\u\|=1}\,\min_{i\in\mathcal{L}}\,\frac{\u^\top\r_i}{\|\r_i\|_2},
$$
where $\mathcal{L}$ is the candidate eviction pool; Assumption~\ref{ass:cone} holds at a given triple iff $c^\star>0$. As shown in Fig.~\ref{fig:cone}~(b)\&(c), $c^\star>0$ on $91.2\%$, $91.1\%$, and $76.5\%$ of triples for Llama, Mistral, and Qwen2.5, respectively, with median $c^\star = +0.107, +0.094, +0.074$. Imposing both preconditions of Theorem~\ref{thm:optimal} simultaneously ($c^\star>0$ \emph{and} $1-k\varepsilon>0$) leaves a finite multiplicative factor on $89.1\%$, $87.5\%$, and $73.4\%$ of triples (Table~\ref{tab:assumption-bound-factor}); conditional on these preconditions, the realized factor $1/[c(1-\varepsilon)(1-k\varepsilon)]$ has median $10.4$, $11.8$, and $11.3$ across the three models (full distribution in the same table). A direct brute-force study on small subsets of $\mathcal{L}$ shows the empirical approximation ratio is in fact much smaller than this worst-case bound (median $F(\hat{\mathcal{J}})/F(\mathcal{J}^*)\le 1.16$ and 95th percentile $\le 1.43$ across all three models; Appendix~\ref{app:assumption-verification}, Table~\ref{tab:exact-optimal-gap}). Per-layer prevalence stays high from Layer~1 onward on Llama and Mistral; Llama Layer~0 ($\approx 16\%$) and Qwen2.5 Layers~0--3 (below $10\%$) exhibit pronounced early-layer dips (Fig.~\ref{fig:cone}~b).

\begin{figure}[h]
\centering
\begin{subfigure}[b]{0.30\textwidth}
\centering
\begin{tikzpicture}[scale=0.78, >=stealth]
  \fill[blue!8] (0,0) -- (25:4) arc[start angle=25, end angle=-25, radius=4] -- cycle;
  \draw[blue!55, dashed, thick] (0,0) -- (25:4);
  \draw[blue!55, dashed, thick] (0,0) -- (-25:4);
  \draw[->, red!80!black, very thick] (0,0) -- (2.0,0) node[right, font=\small] {$\u$};
  \draw[->, thick] (0,0) -- (5:2.9) node[right, font=\footnotesize] {$\r_1$};
  \draw[->, thick] (0,0) -- (-12:3.3) node[right, font=\footnotesize] {$\r_2$};
  \draw[->, thick] (0,0) -- (23:2.3) node[right, font=\footnotesize] {$\r_3$};
  \draw[->, thick] (0,0) -- (-3:3.6) node[right, font=\footnotesize] {$\r_4$};
  \draw[->, thick] (0,0) -- (-22:2.7) node[below right, font=\footnotesize] {$\r_5$};
  \draw[<->, blue, thick] (1.0,0) arc[start angle=0, end angle=25, radius=1.0];
  \node[font=\scriptsize, blue, anchor=west] at (1.5, 0.55) {$\arccos(c)$};
  \fill[black] (0,0) circle (0.045);
  \node[below left, font=\footnotesize] at (0,0) {$O$};
\end{tikzpicture}
\subcaption{Cone holds: $\exists \, c>0$ and $\u$, all $\r_i$ stay within $\arccos(c)$ of $\u$.}
\label{fig:cone-a}
\end{subfigure}\hfill
\begin{subfigure}[b]{0.34\textwidth}
\centering
\includegraphics[width=\textwidth]{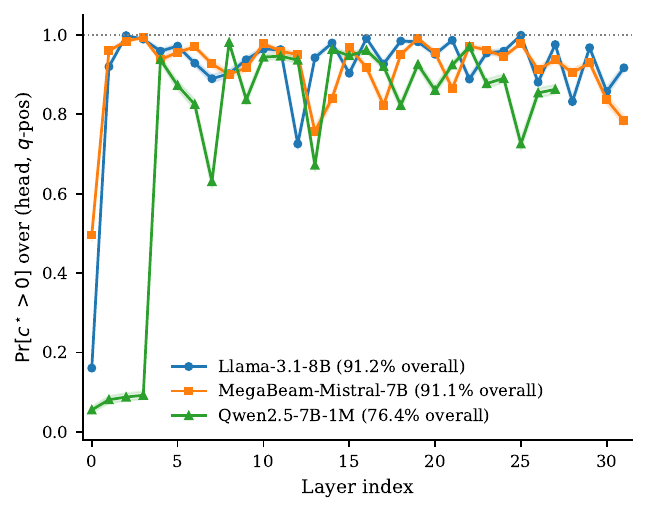}
\subcaption{Per-layer prevalence $\Pr[c^\star>0]$.}
\label{fig:cone-b}
\end{subfigure}\hfill
\begin{subfigure}[b]{0.34\textwidth}
\centering
\includegraphics[width=\textwidth]{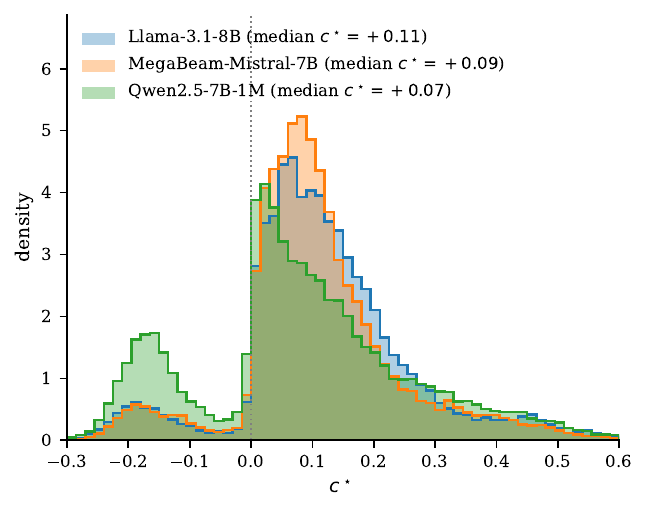}
\subcaption{Distribution of $c^\star$.}
\label{fig:cone-c}
\end{subfigure}
\caption{Cone condition: definition and empirical verification at $n{=}128\mathrm{K}$ on three long-context models. \textbf{(a)} Residuals $\r_i:=\a-\v_i$ inside a cone of half-angle $\arccos(c)$ about a single direction $\u$, so no positive combination cancels to zero. \textbf{(b)} Layerwise prevalence of $c^\star>0$ across (head, query-position) triples, with binomial standard-error bands. \textbf{(c)} Distribution of $c^\star$ per model; mass to the right of the dotted line at $c^\star{=}0$ is the cone-holds region.}
\label{fig:cone}
\end{figure}

\textbf{Robustness Analysis.} When the cone condition does not strictly hold, a relaxed approximate cone condition (Assumption~\ref{ass:approx_cone} in Appendix~\ref{subsec:robust_analysis}) bounds the weighted mass of outlier residuals by some $\mu\ge 0$. Under Assumption~\ref{ass:small-p} and this relaxed condition, the proxy satisfies the additive bound
$$
F(\hat{\J})\;\le\;\frac{1}{c(1-\varepsilon)(1-k\varepsilon)}\,\Big(F(\J^*) + (1+c)\mu\Big),
$$
which recovers Theorem~\ref{thm:optimal} when $\mu=0$. The full statement and proof are given in Appendix~\ref{subsec:robust_analysis}.

\section{Experiments}
\label{sec:experiments}
We evaluate our method on multiple long-context language models, benchmarks, and competitive baselines to examine its effectiveness across different architectures and context regimes.

{\bf Models and Baselines.}
We evaluate on three representative decoder-only LLMs: Llama-3.1-8B-Instruct~\cite{grattafiori2024llama}, MegaBeam-Mistral-7B-512K~\cite{wu2025scaling} (Mistral-7B-512K), and Qwen2.5-7B-Instruct-1M~\cite{qwen2.5}. Mistral-7B-512K is a long-context extension of Mistral-7B-Instruct-v0.2 that supports context lengths up to 512K tokens, and Qwen2.5-7B-Instruct-1M is the long-context version of Qwen2.5 with a maximum context length of 1M tokens. We compare against a broad set of competitive baselines, including SnapKV~\cite{li2024snapkv}, PyramidKV~\cite{cai2024pyramidkv}, StreamingLLM~\cite{xiao2023efficient}, KeyDiff~\cite{park2025keydiff}, CAKE~\cite{qin2025cake}, CriticalKV~\cite{feng2025identify}, and AnDPro~\cite{geng2025accurate}. For each method, we follow the default settings in the original paper whenever possible and only adjust the cache budget to enable matched-budget comparison.\footnote{Implementations of several baselines are adapted from the public codebase: \url{https://github.com/NVIDIA/kvpress}.}

{\bf Benchmarks.}
We use two widely adopted long-context benchmarks: RULER~\cite{hsieh2024ruler} and LongBench~\cite{bai2024longbench}. RULER is a synthetic benchmark with configurable sequence lengths and task complexity for evaluating long-context capabilities beyond simple in-context recall. LongBench is a multitask benchmark covering single-document QA, multi-document QA, summarization, few-shot learning, synthetic tasks, and code completion, with automatic evaluation for scalable assessment. We use the 16 English and code tasks from LongBench. The full task lists are provided in Appendix~\ref{sec:ruler_longbench_info}.

{\bf Implementation details.}
We compare all methods and report results at \textbf{5\%}, \textbf{10\%} and \textbf{30\%} KV-cache budgets. Following SnapKV~\cite{li2024snapkv}, we use the last 8 query tokens and set the pooling kernel size to 11 when computing the eviction scores. All experiments are conducted on an NVIDIA A100-80GB GPU and evaluated within a unified lm-evaluation-harness~\cite{eval-harness} pipeline to ensure a consistent and fair comparison across methods.
\vspace{-10pt}
 
\begin{table}[H]
 \centering
 \renewcommand{\arraystretch}{0.9}
 \caption{RULER results at 128K sequence length and 5\% KV-cache budget. On the Avg.\ column, the best result is bolded and the second-best is underlined; the DropKV row is shaded gray for emphasis.
 }
 \label{tab:ruler_128k_0.05}
 \resizebox{\textwidth}{!}{
 \begin{tabular}{l|c|rrrrrrrrrrr|r}
 \toprule
 \textbf{Method} & Budget & N-S1 & N-S2 & N-S3 & N-MK1 & N-MK2 & N-MQ & N-MV & VT & FWE & QA-1 & QA-2 & Avg. \\ 
 \midrule
 \multicolumn{14}{c}{\textbf{LLaMA-3.1-8B-Instruct}} \\ 
 \midrule
 Full KV & 100\% & 100 & 99.2 & 99.6 & 97.6 & 88.2 & 98.7 & 95.85 & 84 & 61.93 & 64.17 & 44 & 84.84 \\
 \midrule
 SnapKV & \multirow{8}{*}{5\%} & 100 & 99 & 71.4 & 97.6 & 76.2 & 98.6 & 95.75 & 76.92 & 52.53 & 64.03 & 43.2 & 79.57 \\
 PyramidKV & & 100 & 99.4 & 62.4 & 97.4 & 74.2 & 98.6 & 94.7 & 78.36 & 54.86 & 63.9 & 45.6 & 79.04 \\
 StreamingLLM & & 4.4 & 4.2 & 4.8 & 5.6 & 5 & 5.05 & 5 & 4.9 & 73.4 & 37.48 & 37.4 & 17.02 \\
 KeyDiff & & 100 & 98.4 & 16.6 & 81.6 & 0 & 78.3 & 75.55 & 80.2 & 26.6 & 42.58 & 36.8 & 57.88 \\
 CAKE & &  100 & 99 & 78.4 & 97.4 & 74.2 & 98.05 & 92.95 & 75.28 & 44.73 & 63.63 & 43.2 & 78.80 \\
 CriticalKV & & 100 & 99 & 69.2 & 97.6 & 75.6 & 98.55 & 95.2 & 77.4 & 46.67 & 63.9 & 42.8 & 78.72 \\
 AnDPro & & 100 & 99 & 68.4 & 97.4 & 78.4 & 98.75 & 96 & 76.52 & 54.53 & 63.76 & 43.4 & \underline{79.65} \\
 \rowcolor{gray!30} DropKV & & 100 & 98.8 & 89.6 & 97.6 & 86.2 & 98.25 & 96.9 & 77.32 & 49.73 & 64.17 & 43.4 & \textbf{82.00} \\
 \midrule
 \multicolumn{14}{c}{\textbf{Qwen2.5-7B-Instruct-1M}} \\ 
 \midrule
 Full KV & 100\% & 100 & 100 & 99.6 & 99.8 & 98.2 & 99.9 & 78.9 & 95.6 & 81.13 & 57 & 47 & 87.01 \\
 \midrule
 SnapKV &\multirow{8}{*}{5\%} & 100 & 100 & 16.4 & 99.6 & 86.2 & 99.55 & 77.1 & 96.2 & 66.4 & 56.05 & 46 & 76.68 \\
 PyramidKV & & 100 & 100 & 4 & 99.6 & 78.6 & 99.55 & 77.25 & 96.16 & 62.87 & 55.78 & 44.4 & 74.38 \\
 StreamingLLM & & 4.8 & 4.6 & 4.8 & 5.2 & 4.8 & 4.65 & 4.8 & 7.35 & 81.6 & 28.1 & 35.6 & 16.94 \\
 KeyDiff &  & 100 & 98.2 & 63.8 & 64 & 0.4 & 53.8 & 51.7 & 94.64 & 55.93 & 25.88 & 22.6 & 57.36 \\
 CAKE & & 100 & 100 & 25.6 & 99.6 & 25.8 & 99.6 & 76.45 & 96.2 & 65.67 & 56.47 & 46.6 & 72.00 \\
 CriticalKV & & 100 & 100 & 22.6 & 99.8 & 88.2 & 99.55 & 77.15 & 96.48 & 64.93 & 56.38 & 46.6  & 77.43 \\
 AnDPro & & 99.8 & 99.8 & 45.8 & 99.6 & 90 & 99.6 & 77.25 & 96.4 & 66.93 & 57.73 & 46 & \underline{79.90} \\
 \rowcolor{gray!30} DropKV & & 100 & 99.6 & 81.6 & 99.2 & 97.4 & 99.7 & 77.15 & 96.64 & 67.33 & 57.07 & 45.8 & \textbf{83.77} \\
 \midrule
 \multicolumn{14}{c}{\textbf{Mistral-7B-512K}} \\ 
 \midrule
 Full KV & 100\%  & 100 & 100 & 100 & 100 & 99.6 & 99.2 & 75.4 & 96.48 & 90.8 & 68.78 & 49 & 89.02 \\
 \midrule
SnapKV & \multirow{8}{*}{5\%} & 100 & 95.6 & 42.2 & 98 & 47.8 & 97.5 & 77.3 & 93.64 & 87.33 & 68.18 & 47.4 & 77.72 \\
PyramidKV & & 100 & 95.8 & 36 & 98.4 & 46.2 & 98.1 & 78.6 & 94.56 & 87.93 & 68.25 & 47.4 & 77.39 \\
StreamingLLM & & 5.6 & 6.4 & 7.8 & 7.2 & 6 & 6.05 & 6.2 & 7.68 & 91.2 & 33.23 & 32.2 & 19.05 \\
KeyDiff & & 100 & 64.8 & 20 & 46.6 & 1.2 & 27.35 & 31.95 & 93.36 & 73 & 34.8 & 19.6 & 46.61 \\
CAKE & & 100 & 97.2 & 56.8 & 97.6 & 29 & 97.85 & 78 & 93.24 & 84.93 & 68.12 & 47.8 & 77.32 \\
CriticalKV & & 100 & 98 & 42.2 & 99.4 & 47.4 & 97.55 & 78.05 & 93.48 & 86.87 & 68.32 & 47.4 & 78.06 \\
AnDPro & & 100 & 98.4 & 56.2 & 99.2 & 62.2 & 98.35 & 80.2 & 94.12 & 90.06 & 68.97 & 47.4 & \underline{81.37} \\
\rowcolor{gray!30} DropKV & & 100 & 96.4 & 76.4 & 99 & 92.2 & 97.05 & 79.95 & 93.48 & 88.93 & 68.63 & 47.4 & \textbf{85.40} \\
 \bottomrule
 \end{tabular}

 }
\end{table}
\vspace{-12pt}
\begin{figure}[h!]
    \centering
    \includegraphics[width=0.81\textwidth]{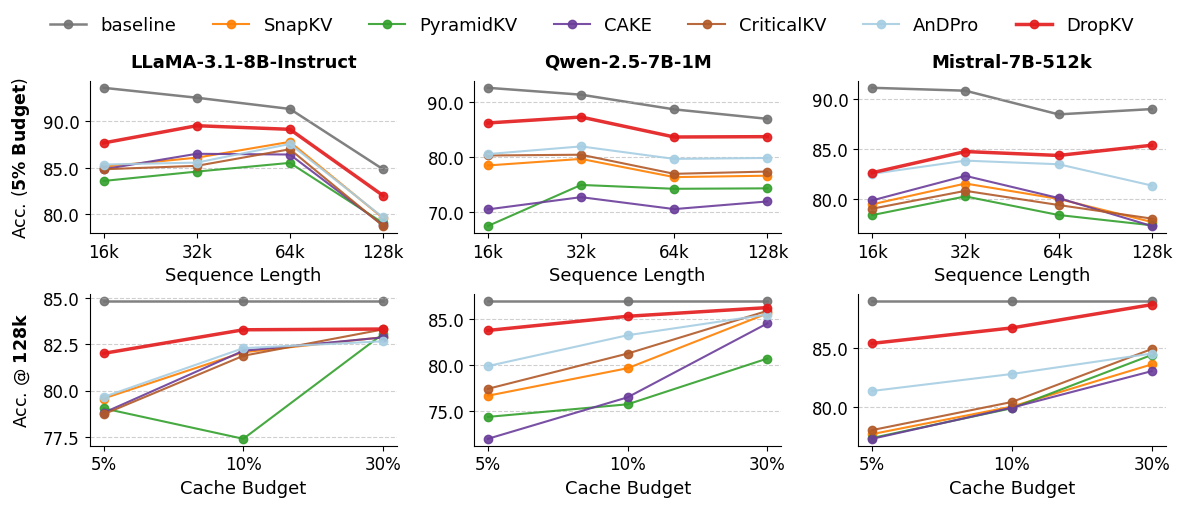}
    \vspace{-10pt}
    \caption{Average accuracy on RULER. \textbf{Top:} 5\% budget at varying sequence lengths. \textbf{Bottom:} varying cache budgets at $128\mathrm{K}$ sequence length. Our method consistently outperforms baselines and shows more graceful degradation at longer contexts.}
    \label{fig:ruler_all_results}
\end{figure}
\vspace{-5pt}
\subsection{RULER}
We evaluate all methods on the RULER across sequence lengths from 16K to 128K. Fig.~\ref{fig:ruler_all_results} shows accuracy across sequence lengths at 5\% budget (top) and across cache budgets at 128K (bottom). Table~\ref{tab:ruler_128k_0.05} reports the detailed 128K results at 5\% budget. Overall, our method consistently achieves the best performance across all three model families and remains the closest to the Full-KV baseline.

At as low as 5\% budget, our method achieves the highest average accuracy on all models: 82.00 on Llama-3.1-8B-Instruct, 83.77 on Qwen2.5-7B-1M, and 85.40 on Mistral-7B-512K. Compared with the best competing baseline, this corresponds to gains of 2.35, 3.87, and 4.03 points, respectively. Fig.~\ref{fig:ruler_all_results} further shows that the advantage of our method is consistent across context lengths. In particular, under high compression, competing methods exhibit a much sharper degradation as the sequence length increases, whereas our method degrades more gracefully and consistently outperforms prior approaches. This trend is especially clear at 128K, where our method preserves accuracy significantly better across all three models. For brevity, Table~\ref{tab:ruler_128k_0.05} presents the representative 128K results at 5\% budget. Additional RULER results are provided in Appendix~\ref{sec:additional_result}, including full per-task breakdowns as well as evaluations across different cache budget and sequence lengths; See Table~\ref{tab:ruler_128k_appendix}--\ref{tab:ruler_16k_appendix}.

\subsection{LongBench}
Table~\ref{tab:longbench_0.05} reports the LongBench results at 5\% KV-cache budget. Our method achieves the highest average score on all three model families, reaching 45.51 on Llama-3.1-8B-Instruct, 42.66 on Qwen2.5-7B-1M, and 36.22 on Mistral-7B-512K. Compared with the strongest baseline, this gives improvements of 0.87, 0.94, and 0.68 points, respectively.

These results show that our method remains effective across a diverse set of LongBench tasks under aggressive compression ratio. In addition to the gains in the overall average, our method also performs competitively on multiple task categories. In the main paper, we only report the results at 5\% budget in Table~\ref{tab:longbench_0.05}; additional results are provided in Appendix~\ref{sec:additional_result} (Table~\ref{tab:longbench_0.9} \& \ref{tab:longbench_0.7}).
 
\begin{table}[!h]
 \renewcommand{\arraystretch}{0.9}
 \centering
 \caption{LongBench results at 5\% KV-cache budget.} 
 \label{tab:longbench_0.05}
 \resizebox{\textwidth}{!}{
 \begin{tabular}{l|cccccccccccccccc|r}
 \toprule
 
 & \multicolumn{3}{c}{\textbf{Single Doc. QA}} & \multicolumn{3}{c}{\textbf{Multi Doc. QA}} & \multicolumn{3}{c}{\textbf{Summarization}} & \multicolumn{2}{c}{\textbf{Synthetic}} & \multicolumn{3}{c}{\textbf{Fewshot Learning}} & \multicolumn{2}{c}{\textbf{Code}} & \\ 
 \cmidrule(lr){2-4} \cmidrule(lr){5-7} \cmidrule(lr){8-10} \cmidrule(lr){11-12} \cmidrule(lr){13-15} \cmidrule(lr){16-17}
 Method & \rotatebox{45}{NrtvQA} & \rotatebox{45}{Qasper} & \rotatebox{45}{MF-en} & \rotatebox{45}{HotpotQA} & \rotatebox{45}{2WikiMQA} & \rotatebox{45}{Musique} & \rotatebox{45}{GovReport} & \rotatebox{45}{QMSum} & \rotatebox{45}{MultiNews} & \rotatebox{45}{PCount} & \rotatebox{45}{PR-en} & \rotatebox{45}{TREC} & \rotatebox{45}{TriviaQA} & \rotatebox{45}{SAMSum} & \rotatebox{45}{Lcc} & \rotatebox{45}{RB-P} & \textbf{Avg.} \\ 
 \midrule
  
 \multicolumn{18}{c}{\textbf{LLaMA-3.1-8B-Instruct}} \\ 
 \midrule
 FullKV & 30.51 & 45.59 & 56.34 & 58.22 & 50.02 & 31.75 & 34.3 & 25.33 & 26.76 & 9.5 & 100 & 73.5 & 92.29 & 43.96 & 64.77 & 55.7 & 49.91 \\
 \midrule
 SnapKV & 30.34 & 31.51 & 54.45 & 56.95 & 47.34 & 29.8 & 22.64 & 24.62 & 16.03 & 9.5 & 100 & 39 & 92.03 & 42.18 & 58.82 & 54.97 & 44.39 \\
 PyramidKV & 31.08 & 31.42 & 54.37 & 56.92 & 47.34 & 30.62 & 22.4 & 24.27 & 16.33 & 9.5 & 100 & 39.5 & 91.92 & 42.46 & 59.44 & 54.27 & 44.49 \\
 StreamingLLM & 25.91 & 21.94 & 33.16 & 49.06 & 41.35 & 23.91 & 21.67 & 21.5 & 16.86 & 9.5 & 97 & 26.5 & 84.55 & 41.9 & 58.02 & 49.83 & 38.92 \\
 KeyDiff & 27.93 & 27.17 & 38.52 & 51.92 & 36.8 & 26.54 & 23.02 & 22.91 & 18.31 & 9.5 & 63.5 & 30.5 & 78.52 & 22.36 & 32.21 & 31.65 & 33.84 \\
 CAKE & 30.41 & 33.54 & 54.79 & 57.09 & 48.29 & 31.07 & 23.54 & 24.36 & 17.33 & 9.5 & 100 & 37 & 91.05 & 42.54 & 58.58 & 53.7 & 44.55 \\
 CriticalKV & 30.03 & 31.07 & 54.55 & 57.33 & 47.15 & 30.09 & 23.16 & 23.92 & 18.46 & 9.5 & 100 & 41 & 92.03 & 42.14 & 59.25 & 54.54 & \underline{44.64} \\
 AnDPro & 30.97 & 32.06 & 54.44 & 57.62 & 46.44 & 31.53 & 22.92 & 24.25 & 17.98 & 9.25 & 100 & 38 & 91.65 & 41.86 & 58.58 & 54.9 & 44.53 \\
 \rowcolor{gray!30} DropKV & 30.48 & 32.49 & 55.34 & 57.29 & 46.84 & 31.25 & 23.71 & 24.94 & 18.7 & 9.5 & 100 & 48.5 & 92.27 & 43.06 & 60.29 & 53.47 & \textbf{45.51} \\
 \midrule
 \multicolumn{18}{c}{\textbf{Qwen2.5-7B-Instruct-1M}} \\ 
 \midrule
 FullKV & 28.88 & 48.39 & 51.57 & 60.27 & 53.19 & 35.63 & 33.33 & 23.22 & 24.53 & 6.5 & 100 & 66.5 & 85.28 & 45.29 & 45.61 & 35.31 & 46.47\\
 \midrule
 SnapKV & 27.76 & 35.9 & 48.74 & 59.15 & 51.51 & 34.19 & 23.42 & 22.32 & 13.13 & 6.5 & 100 & 33 & 84.66 & 44.2 & 38.75 & 34.47 & 41.11 \\
 PyramidKV & 27.19 & 34.45 & 48.49 & 57.01 & 51.57 & 32.37 & 22.67 & 21.5 & 13.09 & 5.5 & 99 & 33.5 & 84.93 & 43.55 & 38.46 & 33.07 & 40.40 \\
 StreamingLLM & 21.48 & 28.86 & 27.5 & 47.08 & 42.58 & 22.99 & 21.86 & 18.51 & 13.97 & 6.5 & 16.5 & 27 & 80.62 & 42.26 & 37.84 & 32.34 & 30.49 \\
 KeyDiff & 20.48 & 24.5 & 21.94 & 38.44 & 25.67 & 17.35 & 21.19 & 17.17 & 12.11 & 6.5 & 13 & 19.5 & 42.65 & 13.11 & 20.75 & 18.96 & 20.83 \\
 CAKE & 28.75 & 38.51 & 49.22 & 59.59 & 52.14 & 33.85 & 24.12 & 22.59 & 14.08 & 7 & 100 & 36 & 84.71 & 44.35 & 37.53 & 35.06 & \underline{41.72} \\
 CriticalKV & 28.31 & 38.22 & 48.83 & 57.63 & 51.99 & 34.11 & 24.4 & 22.35 & 14.83 & 7 & 100 & 36.5 & 84.61 & 43.83 & 38.59 & 34.73 & 41.52\\
 AnDPro & 28.61 & 36.11 & 49.88 & 59.44 & 50.87 & 34.56 & 23.61 & 22.24 & 14.22 & 6.5 & 100 & 37.5 & 84.71 & 44.21 & 38.08 & 34.76 & 41.58 \\
 \rowcolor{gray!30} DropKV & 28.28 & 39.09 & 50.54 & 58.87 & 51.6 & 33.93 & 25.25 & 21.42 & 15.36 & 7 & 100 & 48.5 & 85.06 & 44.85 & 38.95 & 33.88 & \textbf{42.66} \\
 \midrule
 \multicolumn{18}{c}{\textbf{Mistral-7B-512K}} \\ 
 \midrule
 FullKV & 22.04 & 16.47 & 42.75 & 24.06 & 20.72 & 13.04 & 33.35 & 25.25 & 26.97 & 0 & 100 & 74.5 & 89.55 & 41.69 & 55.54 & 56.39 & 40.15 \\
 \midrule
 SnapKV & 21.39 & 10.98 & 41.92 & 18.68 & 16.25 & 9.21 & 23.38 & 24.53 & 17.4 & 0 & 100 & 38.5 & 90.3 & 39.23 & 50.64 & 53.6 & 34.75 \\
 PyramidKV & 20.4 & 11.25 & 42.22 & 19.62 & 15.67 & 8.3 & 23.34 & 24.74 & 17.18 & 0 & 100 & 38.5 & 90.13 & 39.09 & 50.67 & 52.81 & 34.62 \\
 StreamingLLM & 18.25 & 9.28 & 25.16 & 14.07 & 13.96 & 5.92 & 18.79 & 21.41 & 16.65 & 0.5 & 20.5 & 22.5 & 86.81 & 38.74 & 47.26 & 49.87 & 25.60 \\
 KeyDiff & 19.49 & 10.43 & 22.08 & 18.99 & 14.74 & 7.25 & 22.09 & 21.76 & 19.13 & 0.5 & 6.5 & 16 & 58.84 & 17.53 & 24.25 & 23.84 & 18.96 \\
 CAKE & 21.3 & 11.7 & 43.07 & 18.54 & 14.95 & 11.11 & 24.3 & 24.67 & 19.67 & 0 & 100 & 46 & 89.39 & 39.62 & 50.1 & 54.25 & \underline{35.54} \\
 CriticalKV & 21.19 & 13.72 & 42.78 & 17.72 & 15.83 & 9.89 & 24.15 & 24.18 & 20.12 & 0 & 100 & 43 & 90.08 & 40.29 & 51.77 & 53.88 & \underline{35.54} \\
 AnDPro & 19.93 & 12.96 & 40.49 & 19.01 & 14.17 & 9.74 & 23.57 & 24.51 & 19.11 & 0.5 & 100 & 44.5 & 90.3 & 39.23 & 49.36 & 53.23 & 35.04 \\
 \rowcolor{gray!30} DropKV & 21.52 & 11.64 & 41 & 17.6 & 14.39 & 10.58 & 25.23 & 24.98 & 20.13 & 0.5 & 100 & 57 & 90.3 & 39.99 & 52.76 & 51.9 & \textbf{36.22} \\
 \bottomrule
 \end{tabular}

 }
\end{table}
\vspace{-5pt}

\subsection{Extended Results with Adaptive Head Budget} 
To evaluate Dropkv under adaptive compression constraints, Table~\ref{tab:ruler_adadropkv_ruler_longbench} presents results on RULER (at 64K sequence length) and LongBench across different KV-cache budgets ($5\%, 10\%$, and $30\%$). We implement AdaDropKV, which adopts the head budget allocation strategy of AdaKV while utilizing DropKV's token-scoring mechanism. AdaDropKV consistently outperforms competing adaptive baselines (such as CAKE, AdaKV, and AnDPro) across both model architectures. Additional results are provided in Appendix~\ref{sec:additional_result}, including full per-task breakdowns as well as evaluations across different cache budget and sequence lengths; See Table~\ref{tab:ruler_adadropkv_appendix} \& ~\ref{tab:longbench_adadropkv}.
\begin{table}[htbp]
 \centering
 \renewcommand{\arraystretch}{0.7}
 \caption{Average scores on RULER (64K sequence length) and LongBench across different KV-cache budgets ($5\%, 10\%,$ and $30\%$). The best result is highlighted in bold.
 }
 \label{tab:ruler_adadropkv_ruler_longbench}
 \begin{tabular}{l|ccc|ccc}
 \toprule
 \multirow{2}{*}{Method} & \multicolumn{3}{c}{RULER} & \multicolumn{3}{|c}{LongBench} \\
 \cmidrule(lr){2-4} \cmidrule(lr){5-7}
 & 5\% & 10\% & 30\% & 5\% & 10\% & 30\% \\
 \midrule
 \multicolumn{7}{c}{\textbf{LLaMA-3.1-8B-Instruct}} \\ 
 \midrule
 Full KV & 91.35 & 91.35 & 91.35 & 49.91 & 49.91 & 49.91 \\
 \midrule
 CAKE & 86.43 & 89.67 & 90.93 & 44.55 & 46.74 & 48.82 \\
 AdaKV & 88.19 & 90.03 & 91.03 & 44.76 & 46.95 & 49.05 \\
 AnDPro & 87.57 & 90.01 & 91.17 & 44.53 & 46.95 & 49.06 \\
 AdaDropKV & \textbf{90.08} & \textbf{90.54} & \textbf{91.30} & \textbf{46.45} & \textbf{48.43} & \textbf{49.47} \\
 \midrule
 \multicolumn{7}{c}{\textbf{Qwen2.5-7B-Instruct-1M}} \\ 
 \midrule
 FullKV & 88.73 & 88.73 & 88.73 & 46.47 & 46.47 & 46.47 \\
 \midrule
 CAKE & 70.61 & 75.91 & 84.21 & 41.72 & 43.69 & 45.79 \\
 AdaKV & 76.45 & 79.60 & 85.40 & 41.38 & 43.38 & 45.62 \\
 AnDPro & 79.73 & 82.99 & 86.29 & 41.58 & 43.94 & 45.85 \\
 AdaDropKV & \textbf{84.25} & \textbf{85.79} & \textbf{87.76} & \textbf{43.02} & \textbf{44.80} & \textbf{46.00} \\
 \bottomrule
 \end{tabular}

\end{table}

\subsection{Memory and Latency}
We evaluate the Triton path against a matched PyTorch reference on the three long-context LLMs on an A100-80GB.

\textbf{Isolated scoring} (Fig.~\ref{fig:scoring} left). At $n{=}128\mathrm{K}$, $B{=}1$, $w{=}8$, and $5\%$ budget the fused Triton kernel is $19.8{\times}$ faster than the PyTorch reference while holding scoring-kernel scratch flat at ${\sim}17$\,MB vs. ${\sim}9$\,GB (a ${\sim}540{\times}$ scratch reduction; total peak allocation drops from $10.6$\,GB to $1.6$\,GB on the heaviest of the three target models).

\textbf{End-to-end prefill at matched batch size} (Fig.~\ref{fig:scoring} right). At the lower of the two methods' max batch per cell, matched-batch prefill speedup $\delta:=(t_{\mathrm{py}}{-}t_{\mathrm{tri}})/t_{\mathrm{tri}}$ is positive in eight of nine $(n,\text{model})$ cells of $n{\in}\{4\mathrm{K},32\mathrm{K},128\mathrm{K}\}$, ranging $+1.9\%$ to $+9.9\%$; the lone exception is Qwen at $n{=}32\mathrm{K}$ ($\delta{=}{-}0.1\%$). The dynamic cache (no eviction) admits a strictly smaller batch than DropKV at every $n$ and model, so a matched-batch comparison with dynamic is not defined; we instead compare each method at its own capacity ceiling below.

\textbf{Serving throughput vs.\ dynamic cache.} Decoding throughput at each method's maximum feasible batch is $1.75$--$8.22{\times}$ higher than the dynamic cache at $5\%$ budget (Table~\ref{tab:serving-throughput} in Appendix~\ref{sec:additional_result}). This gain comes from the KV-bandwidth reduction shared by any eviction method at this budget: per-step decode latency is bandwidth-bound by the (post-eviction) KV size and is therefore comparable across eviction methods. Triton's contribution is concentrated in prefill (the $19.8{\times}$ scoring speedup and ${\sim}540{\times}$ scratch reduction reported above), not per-step decode; Triton vs.\ PyTorch differences in $B$ within Table~\ref{tab:serving-throughput} reflect the ${\sim}9$\,GB prefill-scratch reduction crossing the batch-fit boundary in some cells (e.g., Mistral $n{=}32\mathrm{K}$: $B{=}12$ vs.\ $11$).

\begin{figure}[h]
\centering
\newlength{\figthreerowht}%
\settoheight{\figthreerowht}{\includegraphics[width=0.72\textwidth]{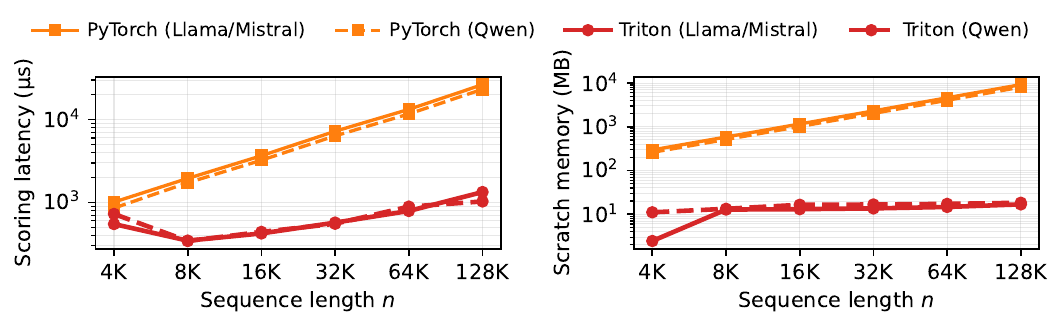}}%
\begin{minipage}[t][\figthreerowht]{0.72\textwidth}
  \centering
  \vspace{0pt}
  \includegraphics[width=\textwidth]{fig3_scoring_variants.pdf}%
  \label{fig:scoring-a}
\end{minipage}\hfill
\begin{minipage}[t][\figthreerowht]{0.26\textwidth}
  \centering
  \vspace{10pt}
  \footnotesize
  \setlength{\tabcolsep}{3pt}
  \renewcommand{\arraystretch}{1.15}
  \begin{tabular}{lccc}
  \toprule
          & $4\mathrm{K}$ & $32\mathrm{K}$ & $128\mathrm{K}$ \\
  \midrule
  Llama   & $+7.2$ & $+9.9$ & $+2.5$ \\
  Mistral & $+7.0$ & $+4.9$ & $+2.7$ \\
  Qwen    & $+1.9$ & $-0.1$ & $+3.3$ \\
  \bottomrule
  \end{tabular}%
  \label{fig:scoring-b}
\end{minipage}
\vspace{-5pt}
\caption{DropKV scoring at $B{=}1$, $w{=}8$, and $5\%$ budget. \textbf{Left:} Triton vs.\ PyTorch on scoring latency and peak scratch memory. At $n{=}128\mathrm{K}$ Triton is ${\sim}20{\times}$ faster with scratch flat at ${\sim}17$\,MB vs.\ ${\sim}9$\,GB. \textbf{Right:} Matched-batch prefill relative speedup percentage is positive in 8 of 9 cells; Qwen at $n{=}32\mathrm{K}$ is essentially neutral ($-0.1\%$).}
\label{fig:scoring}
\end{figure}

\textbf{End-to-end full generation latency.} Table~\ref{tab:latency_bs} reports full-generation latency and time-to-first-token (TTFT) on Mistral-7B-512K with $32\mathrm{K}$ seq. length, 500 decoding steps, 5\% budget. DropKV achieves the lowest full-generation latency across all batch sizes; the dynamic cache runs out of memory beyond $B{=}4$, while all eviction methods scale to $B{=}12$. Among eviction methods, DropKV consistently delivers the lowest total generation time and the lowest TTFT at every batch size; its TTFT also matches the dynamic baseline within $0.6\%$ where Dynamic is feasible, indicating that the prefill eviction overhead remains minimal while the compressed cache improves decoding efficiency.

\vspace{-8pt}
\begin{table}[htbp]
  \centering
  \renewcommand{\arraystretch}{0.88}
  \caption{End-to-End and Time to First Token latencies (in seconds) across batch sizes $B\in\{1,2,4,8,12\}$. All evaluations were conducted using the Mistral-7B-512K model with 32K sequence length. $\downarrow$ indicates lower values are better. ``OOM'' denotes Out-of-Memory.}
  \label{tab:latency_bs}
  \resizebox{\textwidth}{!}{
  \begin{tabular}{l ccccc|ccccc}
    \toprule
    \multirow{2}{*}{Method} & \multicolumn{5}{c}{End-to-End Latency (s) $\downarrow$} & \multicolumn{5}{|c}{Time to First Token (s) $\downarrow$} \\
    \cmidrule(lr){2-6} \cmidrule(lr){7-11}
    & 1 & 2 & 4 & 8 & 12 & 1 & 2 & 4 & 8 & 12 \\
    \midrule
    Dynamic     & 19.08 & 30.91 & 54.72 & OOM   & OOM   & 3.69          & \textbf{7.36} & 14.83          & OOM            & OOM \\
    SnapKV      & 16.44 & 20.22 & 27.71 & 43.08 & 59.10 & 3.95          & 7.47          & 14.92          & 30.36          & 46.38 \\
    CriticalKV  & 16.72 & 21.10 & 29.55 & 46.36 & 64.75 & 4.24          & 8.27          & 16.68          & 33.62          & 51.96 \\
    AnDPro      & 17.79 & 23.06 & 33.22 & 53.91 & 76.25 & 5.31          & 10.30         & 20.45          & 41.00          & 63.43 \\
    \textbf{DropKV} & \textbf{16.21} & \textbf{20.13} & \textbf{27.50} & \textbf{42.71} & \textbf{59.01} & \textbf{3.68} & 7.40          & \textbf{14.75} & \textbf{29.94} & \textbf{46.29} \\
    \bottomrule
  \end{tabular}
  }
\end{table}


\section{Discussion and Limitation}
\label{sec:discussion}
In this work, we introduced DropKV, a framework that bridges the gap between theoretical optimality and practical scalability in KV cache eviction. By decoupling the joint perturbation problem, we demonstrated that near-optimal cache eviction can be achieved with minimal computational overhead. Our empirical results suggest that the cone condition, defined as the alignment of value residuals, holds at the majority of (layer, head, query-position) triples in our three target long-context LLMs, providing a mathematical justification for why independent per-token scoring remains effective. 
In future work, we plan to extend DropKV to a Triton decode-time kernel, since our current implementation accelerates prefill scoring only.

\begin{ack}
This work was in part supported by NSF grant DMS-2208126, two UAlbany-IBM CEAIS seed grants, and an IBM PhD Fellowship Award.

\end{ack}
\bibliographystyle{plain}
\bibliography{reference}

\newpage

\appendix

\normalsize
\section*{Appendix}

\section{Additional Experimental Results}\label{sec:additional_result}

\subsection{Extended RULER Results}\label{app:ruler-extended}

Tables~\ref{tab:ruler_128k_appendix}--\ref{tab:ruler_16k_appendix} extend the RULER evaluation in \S\ref{sec:experiments} to additional sequence lengths ($n\in\{16\mathrm{K}, 32\mathrm{K}, 64\mathrm{K}\}$ at $5\%$, $10\%$, and $30\%$ budgets) and to looser cache budgets at $n{=}128\mathrm{K}$ ($10\%$ and $30\%$), complementing the headline $5\%$ result in Table~\ref{tab:ruler_128k_0.05}.

\begin{table}[H]
 \centering
 \caption{RULER results at 128K sequence length, 10\% and 30\% KV-cache budgets. The best result is highlighted in gray, and the second-best is underlined.
 }
 \label{tab:ruler_128k_appendix}
 \resizebox{\textwidth}{!}{
 \begin{tabular}{l|c|rrrrrrrrrrr|r}
 \toprule
 \textbf{Method} & Budget & N-S1 & N-S2 & N-S3 & N-MK1 & N-MK2 & N-MQ & N-MV & VT & FWE & QA-1 & QA-2 & Avg. \\ 
 \midrule
 \multicolumn{14}{c}{\textbf{LLaMA-3.1-8B-Instruct}} \\ 
 \midrule
 FullKV & 100\% & 100 & 99.2 & 99.6 & 97.6 & 88.2 & 98.7 & 95.85 & 84 & 61.93 & 64.17 & 44 & 84.84 \\
 \midrule
 SnapKV & \multirow{8}{*}{10\%} & 100 & 99 & 90.8 & 97.4 & 78.2 & 98.45 & 96.4 & 78.92 & 56.47 & 63.23 & 43.8 & 82.06\\
 PyramidKV & & 100 & 100 & 55.47 & 97.66 & 71.88 & 98.83 & 93.75 & 76.09 & 53.91 & 60.81 & 42.97 & 77.40 \\
 StreamingLLM & & 11.72 & 8.59 & 14.84 & 11.72 & 12.50 & 10.55 & 8.79 & 10.94 & 75.78 & 34.83 & 32.82 & 21.19 \\
 KeyDiff & & 100 & 100 & 57.03 & 94.53 & 1.56 & 95.51 & 94.92 & 78.44 & 45.31 & 43.68 & 35.16 & 67.83 \\
 CAKE & & 100 & 99 & 94.2 & 97.6 & 80 & 98.7 & 95.35 & 77.08 & 54.4 & 63.57 & 43.6 & 82.14 \\
 CriticalKV & & 100 & 98.8 & 90.4 & 97.4 & 81.8 & 98.5 & 95.55 & 78.64 & 52.73 & 63.3 & 43.4 & 81.87 \\
 AnDPro & & 100 & 99 & 89.2 & 97.4 & 82.6 & 99 & 95.6 & 78.96 & 56.27 & 63.3 & 43.8 & \underline{82.28} \\
 \rowcolor{gray!30} DropKV & & 100 & 98.6 & 97.6 & 97.6 & 87.6 & 98.8 & 97.1 & 79.24 & 51.87 & 64.16 & 43.4 & \textbf{83.27} \\
 \midrule
 SnapKV & \multirow{8}{*}{30\%} & 100 & 100 & 99.22 & 97.66 & 84.38 & 99.22 & 94.09 & 78.44 & 57.9 & 59.24 & 41.41 & 82.87 \\
 PyramidKV & & 100 & 100 & 99.22 & 97.66 & 85.94 & 99.22 & 94.34 & 79.22 & 56.25 & 59.51 & 42.19 & 83.05 \\
 StreamingLLM & & 32.03 & 28.13 & 34.38 & 35.16 & 26.56 & 31.84 & 29.1 & 32.81 & 80.73 & 67.32 & 36.72 & 39.53 \\
 KeyDiff & & 100 & 100 & 99.22 & 98.44 & 17.19 & 99.22 & 100 & 77.97 & 58.59 & 50 & 37.5 & 76.19 \\
 CAKE & & 100 & 100 & 99.22 & 97.66 & 85.16 & 99.22 & 94.92 & 78.44 & 60.24 & 59.24 & 41.41 & 83.23 \\
 CriticalKV & & 100 & 100 & 99.22 & 97.66 & 85.94 & 99.22 & 95.51 & 77.26 & 60.68 & 59.25 & 41.41 & \underline{83.29} \\
 AnDPro & & 100 & 100 & 97.66 & 97.66 & 84.38 & 99.02 & 93.55 & 76.56 & 58.33 & 60.55 & 41.41 & 82.65 \\
 \rowcolor{gray!30} DropKV & & 100 & 100 & 99.22 & 97.66 & 85.94 & 99.22 & 96.09 & 77.5 & 59.38 & 60.03 & 41.41 & \textbf{83.31} \\
 \midrule
 \multicolumn{14}{c}{\textbf{Qwen2.5-7B-Instruct-1M}} \\ 
 \midrule
 FullKV & 100\% & 100 & 100 & 99.6 & 99.8 & 98.2 & 99.9 & 78.9 & 95.6 & 81.13 & 57 & 47 & 87.01 \\
 \midrule
 SnapKV &\multirow{8}{*}{10\%} & 100 & 100 & 42.6 & 99.8 & 87.8 & 99.6 & 76.55 & 96.32 & 70.53 & 56.67 & 46.6 & 79.68 \\
 PyramidKV & & 100 & 100 & 11 & 99.8 & 82.8 & 99.7 & 76.95 & 96.24 & 66.13 & 55.58 & 45 & 75.75 \\
 StreamingLLM & & 7.81 & 8.59 & 12.50 & 8.59  & 13.28 & 9.96 & 8.98 & 15.16 & 84.11 & 23.50 & 37.50 & 20.91 \\
 KeyDiff &  & 100 & 99.22 & 92.97 & 85.16 & 2.34  & 81.84 & 73.24 & 95.47 & 58.85 & 25.59 & 31.25 & 67.81 \\
 CAKE & & 100 & 100 & 60.4 & 99.6 & 37.2 & 99.75 & 76.95 & 95.88 & 69.8 & 56.2 & 45.8 & 76.51 \\
 CriticalKV & & 100 & 100 & 58.2 & 99.6 & 92 & 99.6 & 76.65 & 96.12 & 69.13 & 56.73 & 45.8 & 81.26 \\
 AnDPro & & 99.8 & 100 & 74 & 99.8 & 94.6 & 99.75 & 76.9 & 96.48 & 71.4 & 57.4 & 45.6 & \underline{83.25} \\
 \rowcolor{gray!30} DropKV & & 100 & 99.4 & 97 & 99.2 & 97.8 & 99.7 & 77.1 & 96.48 & 69.6 & 56.13 & 46 & \textbf{85.31} \\
 \midrule
 SnapKV & \multirow{8}{*}{30\%} & 100 & 100 & 92.97 & 100 & 96.09 & 99.81 & 75.98 & 96.09 & 76.82 & 56.38 & 47.66 & 85.62 \\
 PyramidKV & & 100 & 100 & 50 & 100 & 89.84 & 100 & 76.56 & 95.31 & 72.14 & 56.38 & 47.66 & 80.72 \\
 StreamingLLM & & 30.47 & 32.81 & 35.16 & 30.47 & 32.03 & 30.66 & 28.52 & 47.66 & 84.9 & 59.44 & 38.28 & 40.95 \\
 KeyDiff & & 100 & 100 & 100 & 98.44 & 16.41 & 100 & 81.45 & 95.16 & 67.97 & 35.48 & 36.72 & 75.60 \\
 CAKE & & 100 & 100 & 99.22 & 100 & 79.69 & 99.81 & 75.98 & 95.94 & 77.34 & 56.64 & 45.31 & 84.54 \\
 CriticalKV & & 100 & 100 & 98.44 & 100 & 97.66 & 99.8 & 75.39 & 96.41 & 74.74 & 54.82 & 47.66 & \underline{85.90} \\
 AnDPro & & 100 & 100 & 89.06 & 100 & 97.66 & 99.61 & 75.97 & 95.78 & 78.39 & 55.6 & 48.44 & 85.50 \\
 \rowcolor{gray!30} DropKV & & 100 & 100 & 99.22 & 100 & 99.22 & 99.61 & 75.78 & 96.09 & 73.96 & 57.16 & 47.66 & \textbf{86.25} \\
 \midrule
 \multicolumn{14}{c}{\textbf{Mistral-7B-512K}} \\ 
 \midrule
 FullKV & 100\%  & 100 & 100 & 100 & 100 & 99.6 & 99.2 & 75.4 & 96.48 & 90.8 & 68.78 & 49 & 89.02 \\
 \midrule
 SnapKV & \multirow{8}{*}{10\%} & 100 & 96 & 72.8 & 98.6 & 38.8 & 98.5 & 79.7 & 94.8 & 86.27 & 68.12 & 47 & 80.05 \\
 PyramidKV & & 100 & 97.4 & 66.4 & 99.2 & 40.2 & 98.9 & 78.7 & 95.16 & 87.53 & 68.12 & 47.4 & 79.91 \\
 StreamingLLM & & 7.03 & 14.84 & 14.06 & 11.72 & 10.16 & 11.13 & 12.31 & 12.50 & 90.89 & 28.71 & 28.91 & 22.02 \\
 KeyDiff & & 100 & 89.84 & 64.84 & 75.78 & 3.91 & 73.44 & 57.42 & 94.06 & 78.13  & 41.02 & 24.22 & 63.88 \\
 CAKE & & 100 & 97.2 & 84.2 & 98.8 & 29 & 98.85 & 78.5 & 93.88 & 83.8 & 67.78 & 47.6 & 79.96 \\
 CriticalKV & & 100 & 98 & 73 & 99.2 & 41.4 & 98.45 & 78.55 & 94.16 & 87.4 & 67.73 & 47 & 80.44 \\
 AnDPro & & 100 & 98 & 85.4 & 99.6 & 48.4 & 99.1 & 79.95 & 94.44 & 90.47 & 67.78 & 47.8 & \underline{82.81} \\
 \rowcolor{gray!30} DropKV & & 100 & 97 & 88.4 & 100 & 93.2 & 98.2 & 79.05 & 94.24 & 88.2 & 67.77 & 47.8 & \textbf{86.71}\\
 \midrule
 SnapKV & \multirow{8}{*}{30\%} & 100 & 100 & 98.44 & 100 & 42.19 & 99.41 & 75.98 & 95.62 & 87.24 & 71.74 & 49.22 & 83.62 \\
 PyramidKV & & 100 & 100 & 96.88 & 100 & 55.47 & 99.61 & 75.78 & 95.78 & 86.46 & 70.18 & 48.44 & 84.42 \\
 StreamingLLM & & 30.47 & 27.34 & 31.25 & 26.56 & 31.25 & 27.34 & 31.05 & 40.78 & 90.63 & 73.83 & 36.72 & 40.66 \\
 KeyDiff & & 100 & 98.44 & 98.44 & 94.53 & 21.88 & 98.24 & 72.85 & 95.62 & 80.99 & 56.12 & 37.5 & 77.69 \\
 CAKE & & 100 & 99.22 & 100 & 100 & 36.72 & 99.61 & 76.17 & 95.62 & 85.16 & 71.22 & 49.22 & 82.99 \\
 CriticalKV & & 100 & 100 & 100 & 100 & 56.25 & 99.41 & 75.78 & 94.53 & 88.8 & 70.96 & 48.44 & \underline{84.92} \\
 AnDPro & & 100 & 99.22 & 99.22 & 100 & 48.44 & 99.41 & 76.95 & 95.62 & 90.89 & 70.96 & 49.22 & 84.54 \\
 \rowcolor{gray!30} DropKV & & 100 & 97.66 & 100 & 99.22 & 97.65 & 99.02 & 78.32 & 95.62 & 87.76 & 70.96 & 49.22 & \textbf{88.68} \\
 \bottomrule
 \end{tabular}
 }
\end{table}
 
\begin{table}[H]
 \centering
 \renewcommand{\arraystretch}{0.88}
 \caption{RULER results at 64K sequence length and 5\%, 10\% and 30\% KV-cache budget. The best result is highlighted in gray, and the second-best is underlined.
 }
 \label{tab:ruler_64k_appendix}
 \resizebox{\textwidth}{!}{
 \begin{tabular}{l|c|rrrrrrrrrrr|r}
 \toprule
 \textbf{Method} & Budget & N-S1 & N-S2 & N-S3 & N-MK1 & N-MK2 & N-MQ & N-MV & VT & FWE & QA-1 & QA-2 & Avg. \\ 
 \midrule
 \multicolumn{14}{c}{\textbf{LLaMA-3.1-8B-Instruct}} \\ 
 \midrule
 FullKV & 100\% & 100 & 100 & 100 & 100 & 100 & 100 & 99.02 & 98.91 & 85.68 & 71.22 & 50 & 91.35 \\
 \midrule
 SnapKV & \multirow{8}{*}{5\%} & 100 & 99.40 & 84.6 & 99 & 94.4 & 99.5 & 98.7 & 96.24 & 74.87 & 69.9 & 49.2 & \underline{87.80} \\
 PyramidKV & & 100 & 100.00 & 57.8 & 99.6 & 94.8 & 99.5 & 99.1 & 97.24 & 73.13 & 69.9 & 49.8 & 85.53 \\
 StreamingLLM & & 5.80 & 7.40 & 4.80 & 7.60 & 5.40 & 4.25 & 5.05 & 9.12 & 94.87 & 35.80 & 40.77 & 20.08 \\
 KeyDiff & & 100 & 99.40 & 42.40 & 89.80 & 1.40 & 87.35 & 90.55 & 95.20 & 54.53 & 38 & 45.85 & 67.68 \\
 CAKE & & 100 & 99.00 & 82.4 & 98.6 & 90.6 & 98.45 & 96.4 & 96.4 & 71.27 & 68.57 & 49 & 86.43 \\
 CriticalKV & & 100 & 100.00 & 76.2 & 99.2 & 95.4 & 99.55 & 99.15 & 96.24 & 72.73 & 69.3 & 49.2 & 87.00 \\
 AnDPro & & 100 & 100.00 & 76.8 & 99.8 & 97.6 & 99.85 & 98.7 & 96.92 & 74.53 & 69.67 & 49.4 & 87.57 \\
 \rowcolor{gray!30} DropKV & & 100 & 100.00 & 96.8 & 99.6 & 99 & 99.45 & 98.35 & 96.2 & 71.67 & 69.2 & 50.4 & \textbf{89.15}\\
 \midrule
 SnapKV & \multirow{8}{*}{10\%} & 100 & 100 & 99.22 & 99.22 & 98.44 & 99.61 & 99.22 & 97.03 & 80.02 & 70.44 & 46.09 & 89.94 \\
 PyramidKV & & 100 & 100 & 91.41 & 100 & 96.88 & 99.61 & 99.81 & 97.5 & 78.91 & 72.79 & 50.78 & 89.79 \\
 StreamingLLM & & 13.28 & 12.5 & 7.81 & 10.16 & 17.19 & 8.98 & 10.55 & 15.16 & 96.35 & 44.08 & 37.5 & 24.87 \\
 KeyDiff & & 100 & 100 & 95.31 & 97.66 & 3.91 & 99.22 & 98.63 & 95.47 & 64.58 & 50.59 & 43.75 & 77.19 \\
 CAKE & & 100 & 99.22 & 99.22 & 99.22 & 96.09 & 99.81 & 98.05 & 97.03 & 77.6 & 70.96 & 49.22 & 89.67 \\
 CriticalKV & & 100 & 100 & 96.09 & 100 & 99.22 & 99.81 & 99.61 & 96.88 & 80.21 & 70.44 & 47.66 & 89.99 \\
 AnDPro & & 100 & 100 & 96.09 & 100 & 98.44 & 100 & 99.41 & 97.18 & 78.85 & 71.74 & 48.44 & \underline{90.01} \\
 \rowcolor{gray!30} DropKV & & 100 & 100 & 100 & 100 & 100 & 100 & 99.22 & 97.19 & 77.87 & 72.01 & 46.88 & \textbf{90.29} \\
 \midrule
 SnapKV & \multirow{8}{*}{30\%} & 100 & 100 & 100 & 100 & 100 & 100 & 99.41 & 98.13 & 85.94 & 70.44 & 48.44 & 91.12 \\
 PyramidKV & & 100 & 100 & 100 & 100 & 99.22 & 100 & 98.83 & 98.44 & 84.38 & 71.48 & 48.44 & 90.98 \\
 StreamingLLM & & 34.38 & 35.17 & 31.25 & 33.59 & 34.38 & 30.47 & 31.45 & 36.72 & 95.05 & 39.13 & 38.28 & 39.99 \\
 KeyDiff & & 100 & 100 & 100 & 100 & 29.69 & 100 & 99.61 & 97.34 & 86.2 & 53.52 & 43.75 & 82.74 \\
 CAKE & & 100 & 100 & 99.22 & 100 & 99.22 & 99.81 & 99.61 & 98.12 & 86.2 & 70.44 & 47.66 & 90.93 \\
 CriticalKV & & 100 & 100 & 100 & 100 & 100 & 100 & 98.63 & 97.97 & 85.42 & 71.22 & 46.88 & 90.92 \\
 AnDPro & & 100 & 100 & 100 & 100 & 100 & 100 & 99.81 & 98.12 & 86.58 & 70.7 & 47.66 & \underline{91.17} \\
 \cellcolor{gray!30}DropKV & \cellcolor{gray!30} & \cellcolor{gray!30} 100 & \cellcolor{gray!30} 100 & \cellcolor{gray!30} 100 & \cellcolor{gray!30} 100 & \cellcolor{gray!30} 100 & \cellcolor{gray!30} 100 & \cellcolor{gray!30} 99.02 & \cellcolor{gray!30} 98.28 & \cellcolor{gray!30} 86.46 & \cellcolor{gray!30} 71.75 & \cellcolor{gray!30} 48.09 & \cellcolor{gray!30} \textbf{91.24} \\
 \midrule
 \multicolumn{14}{c}{\textbf{Qwen2.5-7B-Instruct-1M}} \\ 
 \midrule
 FullKV & 100\% & 100 & 100 & 100 & 100 & 99.22 & 100 & 80.86 & 94.53 & 83.33 & 60.29 & 57.81 & 88.73 \\
 \midrule
 SnapKV & \multirow{8}{*}{5\%} & 100 & 100 & 10.2 & 100 & 83.4 & 99.9 & 77.8 & 94.84 & 66.87 & 55.38 & 52.2 & 76.42 \\
 PyramidKV & & 100 & 100 & 1 & 99.8 & 73 & 99.8 & 77.5 & 92.4 & 66.67 & 55.57 & 51.6 & 74.30 \\
 StreamingLLM & & 5.80 & 5.60 & 4.40 & 3.80 & 4.80 & 4.85 & 4.75 & 9.32 & 81.13 & 32.40 & 29.35 & 16.93 \\
 KeyDiff & & 100 & 99.80 & 91 & 78.20 & 0.80 & 65.45 & 63.45 & 92.68 & 66.13 & 23.40 & 21.75 & 63.88 \\
 CAKE & & 100 & 100 & 14 & 99.6 & 19 & 99.45 & 75.7 & 94.72 & 67.27 & 55 & 52 & 70.61 \\
 CriticalKV & & 100 & 100 & 12.6 & 100 & 88.2 & 99.9 & 77.55 & 94.84 & 66.73 & 55.38 & 52 & 77.02 \\
 AnDPro & & 100 & 99.8 & 39.2 & 100 & 89.4 & 99.8 & 78.7 & 95.36 & 67 & 55.33 & 52.4 & \underline{79.73} \\
 \rowcolor{gray!30} DropKV & & 100 & 99.6 & 74.4 & 99.6 & 98.4 & 99.85 & 78.95 & 95.36 & 67.13 & 55.73 & 51.8 & \textbf{83.71} \\
 \midrule
 SnapKV & \multirow{8}{*}{10\%} & 100 & 100 & 28.13 & 100 & 88.28 & 100 & 79.1 & 93.75 & 67.19 & 58.2 & 57.03 & 79.24 \\
 PyramidKV & & 100 & 100 & 3.91 & 100 & 80.47 & 100 & 79.1 & 94.06 & 66.67 & 58.14 & 56.25 & 76.24 \\
 StreamingLLM & & 9.38 & 7.81 & 5.47 & 7.03 & 9.38 & 8.4 & 10.55 & 19.37 & 83.07 & 26.89 & 35.16 & 20.23 \\
 KeyDiff & & 100 & 100 & 100 & 93.75 & 1.56 & 91.8 & 77.73 & 95.47 & 66.67 & 23.76 & 31.25 & 71.09 \\
 CAKE & & 100 & 100 & 48.44 & 100 & 32.81 & 100 & 77.92 & 94.69 & 67.19 & 57.68 & 56.25 & 75.91 \\
 CriticalKV & & 100 & 100 & 42.97 & 100 & 91.41 & 100 & 77.93 & 94.06 & 67.19 & 56.84 & 58.59 & 80.82 \\
 AnDPro & & 100 & 99.22 & 64.84 & 100 & 94.53 & 99.81 & 78.52 & 95.16 & 67.19 & 57.42 & 56.25 & \underline{82.99} \\
 \rowcolor{gray!30} DropKV & & 100 & 99.22 & 85.94 & 100 & 98.44 & 100 & 79.1 & 94.69 & 68.49 & 58.72 & 56.25 & \textbf{85.53} \\
 \midrule
 SnapKV & \multirow{8}{*}{30\%} & 100 & 99.22 & 83.59 & 100 & 96.09 & 100 & 78.32 & 94.06 & 72.14 & 58.2 & 57.03 & 85.33 \\
 PyramidKV & & 100 & 100 & 39.84 & 100 & 92.97 & 99.81 & 80.27 & 94.69 & 69.79 & 58.46 & 55.47 & 81.03 \\
 StreamingLLM & & 29.69 & 25 & 24.22 & 30.47 & 30.47 & 25.78 & 34.77 & 47.03 & 85.68 & 26.3 & 35.16 & 35.87 \\
 KeyDiff & & 100 & 99.22 & 100 & 100 & 16.41 & 100 & 82.62 & 94.84 & 69.53 & 34.96 & 46.09 & 76.70 \\
 CAKE & & 100 & 100 & 95.31 & 100 & 70.31 & 99.61 & 79.69 & 93.75 & 72.14 & 57.68 & 57.81 & 84.21 \\
 CriticalKV & & 100 & 100 & 96.09 & 100 & 97.66 & 99.81 & 78.91 & 94.69 & 74.48 & 57.88 & 57.03 & \underline{86.96} \\
 AnDPro & & 100 & 100 & 91.41 & 100 & 96.09 & 99.61 & 79.69 & 94.53 & 73.44 & 58.2 & 56.25 & 86.29 \\
 \rowcolor{gray!30} DropKV & & 100 & 100 & 100 & 100 & 100 & 99.41 & 79.1 & 94.69 & 75.26 & 58.72 & 57.03 & \textbf{87.66} \\
 \midrule
 \multicolumn{14}{c}{\textbf{Mistral-7B-512K}} \\ 
 \midrule
 FullKV & 100\% & 100 & 100 & 100 & 99.22 & 99.22 & 99.81 & 75 & 96.41 & 79.43 & 70.44 & 53.91 & 88.49 \\
 \midrule
 SnapKV & \multirow{8}{*}{5\%} & 100 & 99.4 & 48 & 99.2 & 67.2 & 98.45 & 75.1 & 96.56 & 74.67 & 70.97 & 50.8 & 80.03 \\
 PyramidKV & & 100 & 99.6 & 30.4 & 99.6 & 66.4 & 98.65 & 76.85 & 96.2 & 73.53 & 70.83 & 50.6 & 78.42 \\
 StreamingLLM & & 6.60 & 4.40 & 6.80 & 7.20 & 5.80 & 8.75 & 6.15 & 7.60 & 92.60 & 29.80 & 35.83 & 19.23 \\
 KeyDiff & & 100 & 78.80 & 23.40 & 58.80 & 0.60 & 34.35 & 35.10 & 95.76 & 69.93 & 22.40 & 33.20 & 50.21 \\
 CAKE & & 100 & 99.6 & 66.2 & 99 & 50.8 & 99 & 75.95 & 95.32 & 74.87 & 70.17 & 50.4 & 80.12 \\
 CriticalKV & & 100 & 99.6 & 44 & 98.8 & 65.6 & 98.45 & 75.9 & 95.6 & 73.67 & 70.63 & 51.4 & 79.42 \\
 AnDPro & & 100 & 99.4 & 64.2 & 99.8 & 82.4 & 99.2 & 78.15 & 96.16 & 74.73 & 71.23 & 53.2 & \underline{83.50} \\
 \rowcolor{gray!30} DropKV & & 100 & 99.2 & 69.2 & 99.4 & 93.8 & 97.75 & 77.4 & 96.28 & 73.93 & 69.77 & 51.4 & 84.38\\
 \midrule
 SnapKV & \multirow{8}{*}{10\%} & 100 & 100 & 77.34 & 99.22 & 63.28 & 99.02 & 77.15 & 96.41 & 74.48 & 71.22 & 50 & 82.56 \\
 PyramidKV & & 100 & 100 & 71.09 & 100 & 64.84 & 99.81 & 77.15 & 96.41 & 73.44 & 73.5 & 51.56 & 82.53 \\
 StreamingLLM & & 6.25 & 11.72 & 11.72 & 9.38 & 10.16 & 13.28 & 12.31 & 12.81 & 94.27 & 31.71 & 32.03 & 22.33 \\
 KeyDiff & & 100 & 91.41 & 75 & 82.81 & 4.69 & 73.83 & 58.59 & 96.09 & 70.05 & 38.02 & 31.25 & 65.61 \\
 CAKE & & 100 & 100 & 94.53 & 99.22 & 57.03 & 99.61 & 76.38 & 96.56 & 75 & 71.48 & 50.78 & 83.69 \\
 CriticalKV & & 100 & 98.44 & 78.91 & 98.44 & 63.28 & 99.41 & 77.54 & 95.16 & 73.7 & 72.01 & 51.56 & 82.59 \\
 AnDPro & & 100 & 100 & 91.41 & 99.22 & 76.56 & 99.81 & 76.17 & 96.25 & 74.22 & 72 & 52.34 & \underline{85.27} \\
 \rowcolor{gray!30} DropKV & & 100 & 99.22 & 89.94 & 99.22 & 92.97 & 98.83 & 78.13 & 96.88 & 73.7 & 72.01 & 53.13 & \textbf{86.73} \\
 \midrule
 SnapKV & \multirow{8}{*}{30\%} & 100 & 100 & 100 & 99.22 & 63.91 & 100 & 75 & 96.09 & 76.04 & 71.22 & 53.13 & 84.96 \\
 PyramidKV & & 100 & 100 & 98.44 & 99.22 & 66.41 & 100 & 74.81 & 96.25 & 74.22 & 72.2 & 53.91 & 85.04 \\
 StreamingLLM & & 28.91 & 35.16 & 26.56 & 26.56 & 26.56 & 31.64 & 28.52 & 35.31 & 93.49 & 31.58 & 30.47 & 35.89 \\
 KeyDiff & & 100 & 100 & 100 & 96.09 & 21.09 & 96.48 & 72.07 & 96.56 & 73.18 & 53.39 & 38.28 & 77.01 \\
 CAKE & & 100 & 100 & 100 & 99.22 & 48.44 & 99.81 & 75 & 96.41 & 76.82 & 70.7 & 51.56 & 83.45 \\
 CriticalKV & & 100 & 100 & 100 & 99.22 & 64.06 & 99.81 & 75 & 94.53 & 77.08 & 71.48 & 53.13 & \underline{84.94} \\
 AnDPro & & 100 & 100 & 100 & 99.22 & 61.72 & 99.61 & 75.78 & 96.41 & 76.04 & 71.22 & 53.91 & 84.90 \\
 \rowcolor{gray!30} DropKV & & 100 & 100 & 100 & 99.22 & 95.31 & 100 & 75.39 & 96.09 & 76.3 & 71.22 & 53.13 & \textbf{87.88} \\
 \bottomrule
 \end{tabular}

 }
\end{table}
 
\begin{table}[H]
 \centering
 \renewcommand{\arraystretch}{0.88}
 \caption{RULER results at 32K sequence length and 5\%, 10\% and 30\% KV-cache budget. The best result is highlighted in gray, and the second-best is underlined.
 }
 \label{tab:ruler_32k_appendix}
 \resizebox{\textwidth}{!}{
 \begin{tabular}{l|c|rrrrrrrrrrr|r}
 \toprule
 \textbf{Method} & Budget & N-S1 & N-S2 & N-S3 & N-MK1 & N-MK2 & N-MQ & N-MV & VT & FWE & QA-1 & QA-2 & Avg. \\ 
 \midrule
 \multicolumn{14}{c}{\textbf{LLaMA-3.1-8B-Instruct}} \\ 
 \midrule
 Full KV & 100\% & 100 & 100 & 100 & 100 & 100 & 99.61 & 98.24 & 99.84 & 95.57 & 71.03 & 53.91 & 92.56 \\
 \midrule
 SnapKV & \multirow{8}{*}{5\%} & 100 & 99.80 & 48.6 & 99.4 & 97.4 & 99.75 & 98.35 & 98.84 & 81.27 & 68.15 & 55.4 & 86.09 \\
 PyramidKV & & 100 & 99.80 & 29 & 100 & 97.4 & 99.75 & 98.2 & 99.24 & 82.6 & 68.85 & 55.6 & 84.59 \\
 StreamingLLM & & 5 & 6 & 4.40 & 5.40 & 6.20 & 5.15 & 5.15 & 8.96 & 92.80 & 43.20 & 42.68 & 20.45 \\
 KeyDiff & & 100 & 100 & 61.60 & 95 & 2.80 & 91.25 & 91.95 & 98.96 & 65.33 & 42 & 45.50 & 72.22 \\
 CAKE & & 100 & 99.60 & 53 & 99.4 & 97.2 & 99.6 & 98.15 & 99.2 & 80.2 & 69.88 & 55.4 & \underline{86.51} \\
 CriticalKV & & 100 & 99.80 & 43 & 100 & 97.4 & 99.75 & 97.75 & 99.04 & 76.93 & 68.68 & 55 & 85.21 \\
 AnDPro & & 100 & 100.00 & 44.4 & 100 & 99.4 & 99.9 & 98.2 & 99.04 & 76.6 & 68.82 & 54.8 & 85.56 \\
 \rowcolor{gray!30} DropKV & & 100 & 100.00 & 84.8 & 100 & 99.6 & 99.6 & 96.8 & 98.72 & 79.33 & 69.25 & 56.8 & \textbf{89.54} \\
 \midrule
 SnapKV & \multirow{8}{*}{10\%} & 100 & 100 & 91.75 & 100 & 100 & 99.61 & 97.83 & 99.38 & 86.98 & 70.68 & 53.13 & \underline{90.85} \\
 PyramidKV & & 100 & 100 & 71.88 & 100 & 100 & 99.81 & 99.02 & 99.53 & 86.72 & 71.29 & 52.34 & 89.14 \\
 StreamingLLM & & 14.84 & 10.16 & 13.28 & 13.28 & 9.38 & 10.35 & 10.16 & 17.66 & 95.83 & 44.6 & 42.97 & 25.68 \\
 KeyDiff & & 100 & 100 & 94.53 & 100 & 8.59 & 99.61 & 96.29 & 99.53 & 72.92 & 47.59 & 50 & 79.01 \\
 CAKE & & 100 & 99.22 & 92.18 & 98.44 & 100 & 99.81 & 98.61 & 99.53 & 84.38 & 69.6 & 54.68 & 90.59 \\
 CriticalKV & & 100 & 100 & 89.06 & 100 & 100 & 99.61 & 98.24 & 99.38 & 81.77 & 70.38 & 51.56 & 90.00 \\
 AnDPro & & 100 & 100 & 89.84 & 100 & 100 & 99.81 & 98.61 & 99.68 & 85.15 & 69.6 & 53.91 & 90.60 \\
 \rowcolor{gray!30} DropKV & & 100 & 100 & 99.22 & 100 & 100 & 99.81 & 97.66 & 99.38 & 82.29 & 69.6 & 53.91 & \textbf{91.08} \\
 \midrule
 SnapKV & \multirow{8}{*}{30\%} & 100 & 100 & 100 & 100 & 100 & 99.61 & 98.44 & 99.53 & 91.41 & 70.64 & 53.13 & \underline{92.07} \\
 PyramidKV & & 100 & 100 & 100 & 100 & 100 & 99.61 & 98.44 & 99.53 & 90.63 & 70.51 & 53.13 & 91.99 \\
 StreamingLLM & & 30.47 & 37.5 & 35.16 & 35.16 & 34.38 & 31.64 & 33.59 & 41.72 & 96.09 & 41.99 & 45.31 & 42.09 \\
 KeyDiff & & 100 & 100 & 100 & 100 & 38.28 & 99.22 & 98.05 & 99.53 & 87.76 & 61.85 & 54.69 & 85.04 \\
 CAKE & & 100 & 99.22 & 100 & 98.44 & 100 & 99.81 & 99.22 & 99.53 & 91.67 & 69.2 & 52.34 & 91.77 \\
 CriticalKV & & 100 & 100 & 100 & 100 & 100 & 99.41 & 98.44 & 99.53 & 90.63 & 69.88 & 53.91 & 91.98 \\
 AnDPro & & 100 & 100 & 100 & 100 & 100 & 99.61 & 98.05 & 99.53 & 90.67 & 70 & 53.91 & 91.98 \\
 \rowcolor{gray!30} DropKV & & 100 & 100 & 100 & 100 & 100 & 99.41 & 98.63 & 99.53 & 90.76 & 70 & 54.69 & \textbf{92.09} \\
 \midrule
 \multicolumn{14}{c}{\textbf{Qwen2.5-7B-Instruct-1M}} \\ 
 \midrule
 Full KV & 100\% & 100 & 100 & 100 & 100 & 100 & 100 & 94.34 & 97.34 & 95.57 & 62.63 & 55.47 & 91.40 \\
 \midrule
 SnapKV & \multirow{8}{*}{5\%} & 100 & 100 & 2.8 & 100 & 82 & 99.9 & 86.45 & 97.92 & 89.33 & 62.85 & 55.8 & 79.73 \\
 PyramidKV & & 100 & 100 & 0.6 & 100 & 62.2 & 98.85 & 83.1 & 87.76 & 76.67 & 61.65 & 54.2 & 75.00 \\
 StreamingLLM & & 5 & 4.80 & 5 & 5.40 & 3 & 4.55 & 5.65 & 8.96 & 87.80 & 38.60 & 32.67 & 18.31 \\
 KeyDiff & & 100 & 99.80 & 98.40 & 86.40 & 0.20 & 73.15 & 68.40 & 94.60 & 86.07 & 24 & 30.53 & 69.23 \\
 CAKE & & 100 & 99.8 & 3.6 & 99.4 & 15.2 & 97.65 & 79.3 & 98 & 89.13 & 62.73 & 55.8 & 72.78 \\
 CriticalKV & & 100 & 100 & 4 & 100 & 88.6 & 99.9 & 86.6 & 97.92 & 90 & 62.78 & 55.6 & 80.49 \\
 AnDPro & & 99.8 & 99.8 & 20.4 & 100 & 87 & 99.95 & 89.6 & 98.44 & 88.2 & 63.05 & 55.8 & \underline{82.00} \\
 \rowcolor{gray!30} DropKV & & 100 & 99.6 & 63.8 & 100 & 99 & 99.8 & 90 & 98.2 & 90.8 & 62.95 & 56.6 & \textbf{87.34} \\ 
 \midrule
 SnapKV & \multirow{8}{*}{10\%} & 100 & 100 & 14.06 & 100 & 92.19 & 100 & 87.11 & 97.81 & 91.93 & 62.89 & 53.91 & 81.81 \\
 PyramidKV & & 100 & 100 & 1.56 & 100 & 82.03 & 100 & 88.09 & 94.84 & 88.8 & 62.63 & 55.47 & 79.40 \\
 StreamingLLM & & 14.06 & 14.06 & 10.94 & 9.38 & 9.38 & 10.55 & 12.31 & 15.58 & 90.63 & 31.58 & 38.28 & 23.34 \\
 KeyDiff & & 100 & 99.22 & 30.47 & 98.44 & 3.13 & 95.7 & 84.96 & 97.81 & 90.63 & 37.57 & 32.03 & 70.00 \\
 CAKE & & 100 & 100 & 20.31 & 100 & 28.91 & 100 & 87.5 & 97.97 & 92.19 & 62.37 & 55.47 & 76.79 \\
 CriticalKV & & 100 & 100 & 23.44 & 100 & 95.31 & 100 & 87.11 & 98.13 & 93.49 & 62.83 & 54.69 & 83.18 \\
 AnDPro & & 100 & 100 & 64.84 & 100 & 96.88 & 100 & 89.84 & 98.59 & 90.89 & 64.97 & 54.68 & \underline{87.34} \\
 \rowcolor{gray!30} DropKV & & 100 & 100 & 85.94 & 100 & 100 & 100 & 90.82 & 98.44 & 93.75 & 63.93 & 56.25 & \textbf{89.92} \\
 \midrule
 SnapKV & \multirow{8}{*}{30\%} & 100 & 100 & 71.09 & 100 & 97.66 & 100 & 88.09 & 97.5 & 95.57 & 62.37 & 55.47 & 87.98 \\
 PyramidKV & & 100 & 100 & 15.63 & 100 & 92.19 & 100 & 91.99 & 97.81 & 93.75 & 62.63 & 56.25 & 82.75 \\
 StreamingLLM & & 25 & 35.94 & 31.25 & 39.06 & 36.72 & 38.09 & 35.35 & 43.13 & 91.15 & 27.6 & 42.19 & 40.50 \\
 KeyDiff & & 100 & 100 & 100 & 100 & 26.56 & 99.81 & 92.38 & 96.88 & 96.35 & 53.52 & 49.22 & 83.16 \\
 CAKE & & 100 & 100 & 89.84 & 100 & 67.97 & 100 & 90.82 & 97.19 & 94.79 & 62.37 & 55.47 & 87.13 \\
 CriticalKV & & 100 & 100 & 87.5 & 100 & 97.66 & 100 & 90.82 & 97.97 & 95.57 & 64.19 & 55.47 & 89.93 \\
 AnDPro & & 100 & 100 & 95.31 & 100 & 100 & 100 & 89.45 & 97.97 & 93.23 & 62.13 & 55.25 & \underline{90.30} \\
 \rowcolor{gray!30} DropKV & & 100 & 100 & 100 & 100 & 100 & 100 & 91.21 & 97.81 & 95.31 & 62.63 & 55.47 & \textbf{91.13} \\
 \midrule
 \multicolumn{14}{c}{\textbf{Mistral-7B-512K}} \\ 
 \midrule
 Full KV & 100\%  & 100 & 100 & 100 & 100 & 100 & 99.81 & 78.32 & 98.44 & 96.09 & 77.54 & 49.22 & 90.86 \\
 \midrule
 SnapKV & \multirow{8}{*}{5\%} & 100 & 100 & 25 & 99.6 & 87.6 & 99.05 & 77.05 & 97.68 & 86.4 & 72.9 & 52 & 81.57 \\
 PyramidKV & & 100 & 100 & 13.2 & 99.4 & 86.4 & 99.15 & 78.05 & 96.72 & 86.87 & 72.3 & 51 & 80.28 \\
 StreamingLLM & & 4.60 & 6.60 & 5.60 & 6.40 & 5.40 & 5.80 & 7 & 6.56 & 83.40 & 34.40 & 37.32 & 18.46 \\
 KeyDiff & & 100 & 75.40 & 10 & 49.60 & 0.80 & 28.50 & 31.45 & 97.16 & 74 & 23 & 38.77 & 48.06 \\
 CAKE & & 100 & 100 & 47.7 & 99.4 & 72.8 & 99.1 & 76.8 & 97.2 & 87.27 & 73.43 & 52 & 82.34 \\
 CriticalKV & & 100 & 100 & 20.6 & 99.4 & 86 & 99.05 & 77.3 & 97.44 & 85.4 & 72.9 & 51.2 & 80.84 \\
 AnDPro & & 100 & 99.6 & 36.6 & 99.6 & 95.4 & 98.45 & 81.05 & 97.48 & 86.88 & 73.77 & 51.4 & \underline{83.86} \\
 \rowcolor{gray!30} DropKV & & 100 & 99.8 & 46.8 & 99.6 & 99.4 & 97.68 & 79.45 & 97.72 & 87.6 & 74.17 & 50.2 & \textbf{84.77}\\
 \midrule
 SnapKV & \multirow{8}{*}{10\%} & 100 & 100 & 57.81 & 100 & 89.84 & 99.61 & 79.3 & 98.13 & 89.06 & 75.13 & 47.66 & 85.14 \\
 PyramidKV & & 100 & 100 & 50 & 100 & 87.5 & 99.61 & 80.08 & 98.28 & 87.76 & 76.38 & 47.66 & 84.30 \\
 StreamingLLM & & 6.25 & 15.63 & 14.84 & 14.84 & 11.72 & 13.28 & 14.06 & 14.22 & 92.19 & 29.82 & 32.81 & 23.61 \\
 KeyDiff & & 100 & 92.97 & 55.47 & 78.91 & 3.13 & 72.66 & 55.47 & 97.66 & 80.73 & 43.1 & 25.78 & 64.17 \\
 CAKE & & 100 & 100 & 90.63 & 100 & 77.34 & 99.81 & 78.91 & 97.66 & 91.15 & 76.69 & 48.44 & \underline{87.33} \\
 CriticalKV & & 100 & 100 & 65.63 & 100 & 87.5 & 99.61 & 80.27 & 97.66 & 88.8 & 76.17 & 47.66 & 85.75 \\
 AnDPro & & 100 & 100 & 73.44 & 100 & 92.18 & 99.61 & 81.05 & 97.66 & 91.41 & 74.61 & 47.66 & 87.03 \\
 \rowcolor{gray!30} DropKV & & 100 & 100 & 79.69 & 100 & 99.22 & 98.83 & 81.25 & 97.81 & 91.41 & 76.43 & 46.88 & \textbf{88.32} \\
 \midrule
 SnapKV & \multirow{8}{*}{30\%} & 100 & 100 & 98.44 & 100 & 85.94 & 99.81 & 79.88 & 98.44 & 92.45 & 76.95 & 48.44 & 89.12 \\
 PyramidKV & & 100 & 100 & 97.66 & 100 & 90.63 & 99.81 & 79.88 & 98.91 & 91.15 & 77.47 & 49.22 & 89.52 \\
 StreamingLLM & & 23.44 & 35.16 & 33.59 & 35.16 & 33.59 & 30.67 & 33.01 & 40.78 & 89.06 & 30.86 & 35.94 & 38.30 \\
 KeyDiff & & 100 & 99.22 & 99.22 & 96.09 & 20.31 & 96.29 & 72.85 & 97.66 & 88.8 & 63.28 & 38.28 & 79.27 \\
 CAKE & & 100 & 100 & 100 & 100 & 82.81 & 99.81 & 78.13 & 98.44 & 94.53 & 77.54 & 49.22 & 89.13 \\
 CriticalKV & & 100 & 100 & 99.22 & 100 & 89.6 & 99.81 & 80.27 & 98.13 & 94.27 & 76.95 & 49.22 & \underline{89.77} \\
 AnDPro & & 100 & 100 & 99.22 & 100 & 89.06 & 99.81 & 80.27 & 98.44 & 94.01 & 76.95 & 49.22 & 89.73 \\
 \rowcolor{gray!30} DropKV & & 100 & 100 & 100 & 100 & 100 & 99.61 & 79.88 & 98.44 & 94.27 & 77.28 & 48.44 & \textbf{90.72} \\
 \bottomrule
 \end{tabular}

 }
\end{table}
 
\begin{table}[H]
 \centering
 \renewcommand{\arraystretch}{0.88}
 \caption{RULER results at 16K sequence length and 5\%, 10\% and 30\% KV-cache budget. The best result is highlighted in gray, and the second-best is underlined.
 }
 \label{tab:ruler_16k_appendix}
 \resizebox{\textwidth}{!}{
 \begin{tabular}{l|c|rrrrrrrrrrr|r}
 \toprule
 \textbf{Method} & Budget & N-S1 & N-S2 & N-S3 & N-MK1 & N-MK2 & N-MQ & N-MV & VT & FWE & QA-1 & QA-2 & Avg. \\ 
 \midrule
 \multicolumn{14}{c}{\textbf{LLaMA-3.1-8B-Instruct}} \\ 
 \midrule
 Full KV & 100\% & 100 & 100 & 100 & 100 & 100 & 100 & 98.05 & 99.84 & 96.35 & 76.3 & 59.38 & 93.63 \\
 \midrule
 SnapKV & \multirow{8}{*}{5\%} & 100 & 100.00 & 19.92 & 100 & 98.44 & 99.9 & 97.56 & 99.06 & 91.28 & 71.62 & 58.59 & 85.12 \\
 PyramidKV & & 100 & 100.00 & 7.81 & 100 & 96.88 & 99.61 & 98.63 & 99.14 & 91.67 & 67.45 & 58.2 & 83.58 \\
 StreamingLLM & & 3.80 & 6.20 & 5 & 4.80 & 5.40 & 4 & 5.95 & 8.04 & 91.73 & 43.40 & 44.12 & 20.22 \\
 KeyDiff & & 100 & 100 & 49.60 & 93.60 & 3 & 89.15 & 89.20 & 98.88 & 83.67 & 44.80 & 45.25 & 72.47 \\
 CAKE & & 100 & 100.00 & 21.88 & 99.61 & 97.27 & 99.71 & 97.66 & 99.22 & 91.15 & 67.77 & 58.98 & 84.84 \\
 CriticalKV & & 100 & 100.00 & 19.53 & 100 & 100 & 99.81 & 97.36 & 99.38 & 90.1 & 68.88 & 57.81 & 84.81 \\
 AnDPro & & 100 & 100.00 & 24.61 & 100 & 98.82 & 100 & 98.44 & 98.98 & 91.28 & 68.95 & 57.81 & \underline{85.35} \\
 \rowcolor{gray!30} DropKV & & 100 & 100.00 & 49.61 & 100 & 99.61 & 99.02 & 97.36 & 98.83 & 93.1 & 69.14 & 57.81 & \textbf{87.68} \\
 \midrule
 SnapKV & \multirow{8}{*}{10\%} & 100 & 100 & 71.09 & 100 & 99.22 & 100 & 98.44 & 99.53 & 94.53 & 76.04 & 59.38 & 90.75 \\
 PyramidKV & & 100 & 100 & 38.28 & 100 & 100 & 100 & 99.02 & 99.84 & 95.05 & 74.74 & 59.38 & 87.85 \\
 StreamingLLM & & 6.25 & 11.72 & 14.87 & 6.25 & 11.72 & 8.39 & 9.77 & 14.49 & 92.19 & 45.64 & 43.75 & 24.09 \\
 KeyDiff & & 100 & 100 & 96.09 & 100 & 10.94 & 100 & 96.88 & 99.53 & 85.16 & 51.37 & 49.22 & 80.84 \\
 CAKE & & 100 & 100 & 71.09 & 100 & 99.22 & 99.81 & 99.02 & 99.69 & 92.71 & 75.26 & 59.38 & 90.56 \\
 CriticalKV & & 100 & 100 & 61.72 & 100 & 100 & 100 & 98.24 & 99.69 & 94.53 & 75.26 & 60.15 & 89.96 \\
 AnDPro & & 100 & 100 & 71.88 & 100 & 100 & 100 & 98.63 & 99.69 & 94.53 & 75.85 & 59.38 & \underline{90.91} \\
 \rowcolor{gray!30} DropKV & & 100 & 100 & 81.25 & 100 & 100 & 99.61 & 98.05 & 99.69 & 94.27 & 75.26 & 59.38 & \textbf{91.59} \\
 \midrule
 SnapKV & \multirow{8}{*}{30\%} & 100 & 100 & 100 & 100 & 100 & 100 & 98.41 & 99.84 & 96.41 & 74.48 & 60.14 & 93.57 \\
 PyramidKV & & 100 & 100 & 98.44 & 100 & 100 & 100 & 98.63 & 98.44 & 96.35 & 73.11 & 60.16 & 93.19 \\
 StreamingLLM & & 22.66 & 32.03 & 39.84 & 30.47 & 33.59 & 30.27 & 33.79 & 40.78 & 91.93 & 74.74 & 40.78 & 42.81 \\
 KeyDiff & & 100 & 100 & 100 & 100 & 46.09 & 100 & 98.05 & 99.84 & 94.1 & 59.11 & 57.03 & 86.75 \\
 CAKE & & 100 & 99.22 & 100 & 100 & 99.22 & 100 & 99.22 & 99.84 & 96.35 & 73.96 & 58.59 & 93.31 \\
 CriticalKV & & 100 & 100 & 100 & 100 & 100 & 100 & 98.02 & 99.84 & 96.88 & 74.74 & 59.38 & \underline{93.53} \\
 AnDPro & & 100 & 100 & 100 & 100 & 100 & 100 & 97.74 & 99.84 & 96.03 & 75 & 60.23 & \underline{93.53} \\
 \rowcolor{gray!30} DropKV & & 100 & 100 & 100 & 100 & 100 & 100 & 97.85 & 99.84 & 96.09 & 75.59 & 60.16 & \textbf{93.59} \\
 \midrule
 \multicolumn{14}{c}{\textbf{Qwen2.5-7B-Instruct-1M}} \\ 
 \midrule
 Full KV & 100\% & 100 & 100 & 100 & 100 & 99.22 & 100 & 92.38 & 99.53 & 95.31 & 66.34 & 66.41 & 92.65 \\
 \midrule
 SnapKV & \multirow{8}{*}{5\%} & 100 & 99.61 & 1.17 & 100 & 70.31 & 99.71 & 84.08 & 99.53 & 91.93 & 60.55 & 57.03 & 78.54 \\
 PyramidKV & & 100 & 99.61 & 0 & 100 & 36.72 & 87.99 & 59.77 & 68.52 & 79.95 & 58.59 & 50.78 & 67.54 \\
 StreamingLLM & & 5.60 & 5.40 & 4 & 7 & 4.20 & 5.20 & 4.90 & 7.84 & 85.53 & 40.80 & 33.28 & 18.52 \\
 KeyDiff & & 100 & 99.60 & 93.40 & 70.60 & 0.80 & 58.70 & 56.65 & 96.60 & 83.40 & 18.80 & 24.32 & 63.90 \\
 CAKE & & 100 & 99.61 & 1.5 & 97.66 & 8.59 & 94.04 & 65.53 & 99.3 & 92.58 & 59.9 & 57.42 & 70.56\\
 CriticalKV & & 100 & 99.61 & 2.34 & 100 & 87.5 & 99.51 & 85.45 & 99.45 & 92.06 & 60.55 & 57.42 & 80.35 \\
 AnDPro & & 100 & 99.61 & 11.72 & 100 & 76.18 & 99.9 & 88.67 & 99.22 & 92.58 & 61.78 & 57.03 & \underline{80.61} \\
 \rowcolor{gray!30} DropKV & & 100 & 99.22 & 51.17 & 100 & 100 & 99.81 & 88.28 & 99.45 & 91.41 & 62.89 & 56.64 & \textbf{86.26} \\
 \midrule
 SnapKV & \multirow{8}{*}{10\%} & 100 & 100 & 7.8 & 100 & 85.94 & 99.41 & 87.7 & 99.69 & 92.45 & 66.41 & 62.5 & 81.99 \\
 PyramidKV & & 100 & 100 & 1.56 & 100 & 62.5 & 98.05 & 83.2 & 92.5 & 88.02 & 68.23 & 63.28 & 77.94 \\
 StreamingLLM & & 10.94 & 17.97 & 9.38 & 12.5 & 7.03 & 10.35 & 11.13 & 14.69 & 88.28 & 30.66 & 37.5 & 22.77 \\
 KeyDiff & & 100 & 100 & 100 & 95.31 & 3.91 & 88.48 & 80.47 & 97.97 & 84.9 & 26.63 & 31.25 & 73.54 \\
 CAKE & & 100 & 100 & 12.5 & 100 & 21.09 & 98.83 & 83.59 & 99.84 & 91.93 & 66.67 & 64.06 & 76.23 \\
 CriticalKV & & 100 & 100 & 14.06 & 100 & 91.41 & 100 & 88.87 & 100 & 94.01 & 68.49 & 66.41 & 83.93 \\
 AnDPro & & 100 & 100 & 38.28 & 100 & 85.16 & 100 & 90.04 & 99.84 & 94.01 & 67.19 & 65.63 & \underline{85.47} \\
 \rowcolor{gray!30} DropKV & & 100 & 100 & 88.28 & 100 & 100 & 100 & 89.26 & 99.84 & 93.75 & 68.75 & 64.84 & \textbf{91.34} \\
 \midrule
 SnapKV & \multirow{8}{*}{30\%} & 100 & 100 & 52.34 & 100 & 96.88 & 100 & 88.09 & 99.53 & 96.35 & 66.67 & 65.63 & 87.77 \\
 PyramidKV & & 100 & 100 & 10.16 & 100 & 85.16 & 99.81 & 89.65 & 99.69 & 94.27 & 66.15 & 66.41 & 82.85 \\
 StreamingLLM & & 27.34 & 35.94 & 34.38 & 35.16 & 22.66 & 33.98 & 32.81 & 40.94 & 90.89 & 67.45 & 42.19 & 42.16 \\
 KeyDiff & & 100 & 100 & 100 & 100 & 21.09 & 100 & 92.19 & 99.38 & 90.89 & 46.75 & 50.78 & 81.92 \\
 CAKE & & 100 & 100 & 67.19 & 100 & 55.47 & 100 & 88.87 & 99.84 & 96.09 & 64.78 & 66.41 & 85.33 \\
 CriticalKV & & 100 & 100 & 75 & 100 & 98.44 & 100 & 89.06 & 99.69 & 96.62 & 66.15 & 67.19 & 90.20 \\
 AnDPro & & 100 & 100 & 83.59 & 100 & 96.09 & 100 & 88.28 & 100 & 96.88 & 66.12 & 66.41 & \underline{90.67} \\
 \rowcolor{gray!30} DropKV & & 100 & 100 & 100 & 100 & 100 & 100 & 89.06 & 100 & 96.09 & 66.15 & 64.06 & \textbf{92.31} \\
 \midrule
 \multicolumn{14}{c}{\textbf{Mistral-7B-512K}} \\ 
 \midrule
 Full KV & 100\% & 100 & 100 & 100 & 100 & 100 & 99.61 & 85.35 & 98.28 & 87.24 & 75.13 & 57.03 & 91.15 \\
 \midrule
 SnapKV & \multirow{8}{*}{5\%} & 100 & 100 & 14.45 & 99.22 & 87.11 & 98.14 & 72.27 & 95.94 & 84.51 & 69.6 & 53.13 & 79.49 \\
 PyramidKV & & 100 & 100 & 6.64 & 99.22 & 87.89 & 97.95 & 74.9 & 93.91 & 83.59 & 68 & 50.39 & 78.41 \\
 StreamingLLM & & 4.20 & 5.20 & 5.80 & 4.20 & 4.60 & 4.65 & 2.80 & 6.72 & 84.93 & 33.80 & 39.48 & 17.85 \\
 KeyDiff & & 100 & 75.80 & 2.60 & 43.60 & 0.20 & 24.80 & 25.95 & 96.16 & 70.80 & 21.20 & 31.63 & 44.79 \\
 CAKE & & 99.61 & 99.61 & 26.95 & 98.44 & 80.08 & 98.54 & 74.61 & 95.55 & 84.38 & 68.16 & 52.73 & 79.88 \\
 CriticalKV & & 100 & 99.61 & 12.89 & 99.22 & 85.55 & 97.75 & 75.29 & 95.78 & 82.81 & 67.87 & 52.73 & 79.05 \\
 AnDPro & & 100 & 100 & 24.61 & 99.61 & 96.88 & 98.61 & 85.06 & 95.08 & 86.46 & 69.25 & 52.34 & \underline{82.54} \\
 \rowcolor{gray!30} DropKV & & 100 & 99.61 & 31.64 & 98.83 & 99.61 & 95.51 & 83.69 & 95.86 & 84.25 & 67.38 & 52.73 & \textbf{82.65} \\
 \midrule
 SnapKV & \multirow{8}{*}{10\%} & 100 & 100 & 48.44 & 100 & 90.63 & 99.61 & 83.4 & 98.13 & 84.38 & 75.13 & 53.13 & 84.80 \\
 PyramidKV & & 100 & 100 & 36.72 & 100 & 89.84 & 99.41 & 83.79 & 95.78 & 83.85 & 75.07 & 53.13 & 83.42 \\
 StreamingLLM & & 12.5 & 16.41 & 13.28 & 13.28 & 11.72 & 12.89 & 8.01 & 17.5 & 87.76 & 37.31 & 35.16 & 24.17 \\
 KeyDiff & & 100 & 85.94 & 34.38 & 71.09 & 0.78 & 60.35 & 45.12 & 98.44 & 73.7 & 42.58 & 26.56 & 58.09 \\
 CAKE & & 100 & 100 & 72.66 & 98.44 & 82.03 & 99.81 & 83.98 & 98.12 & 84.64 & 74.09 & 54.69 & 86.22 \\
 CriticalKV & & 100 & 100 & 43.75 & 100 & 92.19 & 99.02 & 85.35 & 97.66 & 83.59 & 73.83 & 53.91 & 84.48 \\
 AnDPro & & 100 & 100 & 62.5 & 100 & 97.66 & 99.61 & 87.11 & 97.5 & 86.2 & 73.56 & 53.12 & \underline{87.02} \\
 \rowcolor{gray!30} DropKV & & 100 & 100 & 64.84 & 100 & 100 & 99.22 & 86.13 & 98.28 & 84.12 & 75.46 & 53.91 & \textbf{87.45} \\
 \midrule
 SnapKV & \multirow{8}{*}{30\%} & 100 & 100 & 95.31 & 100 & 91.41 & 100 & 85.55 & 98.13 & 84.64 & 74.35 & 55.47 & 89.53 \\
 PyramidKV & & 100 & 100 & 91.41 & 100 & 92.19 & 100 & 85.16 & 98.28 & 83.85 & 75.39 & 56.25 & 89.32 \\
 StreamingLLM & & 31.25 & 25 & 31.25 & 32.81 & 34.38 & 33.2 & 25.2 & 44.06 & 86.98 & 31.84 & 40.63 & 37.87 \\
 KeyDiff & & 100 & 100 & 97.66 & 92.97 & 21.09 & 94.53 & 76.56 & 97.97 & 85.42 & 57.55 & 41.41 & 78.65 \\
 CAKE & & 100 & 100 & 99.22 & 100 & 89.06 & 100 & 83.98 & 98.12 & 85.16 & 74.35 & 56.25 & 89.65 \\
 CriticalKV & & 100 & 100 & 95.31 & 100 & 93.75 & 100 & 85.74 & 97.34 & 85.16 & 74.35 & 56.25 & 89.81 \\
 AnDPro & & 100 & 100 & 97.22 & 100 & 96.09 & 99.61 & 85.55 & 97.34 & 85.94 & 74.34 & 56.25 & \underline{90.21} \\
 \rowcolor{gray!30} DropKV & & 100 & 100 & 96.88 & 100 & 100 & 100 & 87.11 & 98.28 & 85.16 & 75.72 & 55.47 & \textbf{90.78} \\
 \bottomrule
 \end{tabular}

 }
\end{table}

\subsection{Extended LongBench Results}\label{app:longbench-extended}

Tables~\ref{tab:longbench_0.9} and~\ref{tab:longbench_0.7} extend the LongBench evaluation in \S\ref{sec:experiments} to $10\%$ and $30\%$ cache budgets.

\begin{table}[!h]
 \centering
 \caption{LongBench results at 10\% cache budget.} 
 \label{tab:longbench_0.9}
 \resizebox{\textwidth}{!}{
 \begin{tabular}{l|cccccccccccccccc|r}
 \toprule
 & \multicolumn{3}{c}{\textbf{Single Doc. QA}} & \multicolumn{3}{c}{\textbf{Multi Doc. QA}} & \multicolumn{3}{c}{\textbf{Summarization}} & \multicolumn{2}{c}{\textbf{Synthetic}} & \multicolumn{3}{c}{\textbf{Fewshot Learning}} & \multicolumn{2}{c}{\textbf{Code}} & \\ 
 \cmidrule(lr){2-4} \cmidrule(lr){5-7} \cmidrule(lr){8-10} \cmidrule(lr){11-12} \cmidrule(lr){13-15} \cmidrule(lr){16-17}
 Method & \rotatebox{45}{NrtvQA} & \rotatebox{45}{Qasper} & \rotatebox{45}{MF-en} & \rotatebox{45}{HotpotQA} & \rotatebox{45}{2WikiMQA} & \rotatebox{45}{Musique} & \rotatebox{45}{GovReport} & \rotatebox{45}{QMSum} & \rotatebox{45}{MultiNews} & \rotatebox{45}{PCount} & \rotatebox{45}{PR-en} & \rotatebox{45}{TREC} & \rotatebox{45}{TriviaQA} & \rotatebox{45}{SAMSum} & \rotatebox{45}{Lcc} & \rotatebox{45}{RB-P} & \textbf{Avg.} \\ 
 \midrule
  
 \multicolumn{18}{c}{\textbf{LLaMA-3.1-8B-Instruct}} \\ 
 \midrule
 FullKV & 30.51 & 45.59 & 56.34 & 58.22 & 50.02 & 31.75 & 34.3 & 25.33 & 26.76 & 9.5 & 100 & 73.5 & 92.29 & 43.96 & 64.77 & 55.7 & 49.91 \\
 \midrule
 SnapKV & 30.11 & 37.66 & 55.52 & 57.27 & 48.37 & 31.51 & 25.77 & 24.24 & 19.55 & 9.5 & 100 & 49.5 & 91.54 & 41.68 & 62.02 & 54.64 & 46.18 \\
 PyramidKV & 30.33 & 38.03 & 55.11 & 57.17 & 48.65 & 31.26 & 25.28 & 24.73 & 19.79 & 9.25 & 100 & 43 & 91.68 & 42.04 & 61.98 & 54.61 & 45.81 \\
 StreamingLLM & 27.82 & 24.8 & 33.13 & 50.57 & 42.3 & 24.17 & 24.06 & 21.69 & 19.06 & 9.5 & 92.5 & 42.5 & 87 & 41.32 & 61.45 & 51.98 & 40.87 \\
 KeyDiff & 30.41 & 33.2 & 44.1 & 53.59 & 41.45 & 28.98 & 25.33 & 23.22 & 20.94 & 9.5 & 74.5 & 44 & 88.38 & 30.44 & 36.82 & 35.82 & 38.79 \\
 CAKE & 29.6 & 40.32 & 56.08 & 57.69 & 49.45 & 31.18 & 26.11 & 24.56 & 20.03 & 9.5 & 100 & 50 & 92.53 & 43.72 & 62.05 & 54.98 & 46.74 \\
 CriticalKV & 30.39 & 38.44 & 55.87 & 57.22 & 48.46 & 31.85 & 26.29 & 24.9 & 20.96 & 9.25 & 100 & 55 & 91.6 & 42.94 & 63.15 & 54.42 & 46.92 \\
 AnDPro & 30.9 & 37.39 & 56.18 & 57.9 & 48.13 & 32.07 & 25.46 & 24.38 & 20.48 & 9.25 & 100 & 56 & 91.97 & 43.77 & 61.74 & 55.64 & \underline{46.95} \\
 \rowcolor{gray!30} DropKV & 31.34 & 40.66 & 57.3 & 57.9 & 48.49 & 31.65 & 26.77 & 24.37 & 21 & 9.5 & 100 & 59 & 91.75 & 43.61 & 62.29 & 54.38 & \textbf{47.50} \\
 \midrule
 \multicolumn{18}{c}{\textbf{Qwen2.5-7B-Instruct-1M}} \\ 
 \midrule
 FullKV & 28.88 & 48.39 & 51.57 & 60.27 & 53.19 & 35.63 & 33.33 & 23.22 & 24.53 & 6.5 & 100 & 66.5 & 85.28 & 45.29 & 45.61 & 35.31 & 46.47\\
 \midrule
 SnapKV & 28.02 & 42.45 & 50.51 & 60.51 & 52.39 & 35.34 & 26.94 & 22.9 & 16.65 & 6.5 & 100 & 44.67 & 85.91 & 44.39 & 42.28 & 35.6 & 43.44 \\
 PyramidKV & 27.25 & 40.68 & 49.42 & 57.89 & 51.61 & 34.66 & 24.32 & 21.95 & 16.57 & 7 & 100 & 35.67 & 84.73 & 44.38 & 41.51 & 32.58 & 41.89 \\
 StreamingLLM & 23.09 & 32.37 & 28.91 & 46.45 & 45.03 & 24.28 & 24.61 & 19.19 & 16.52 & 6 & 24.5 & 38.5 & 82.74 & 43.18 & 39.9 & 34.78 & 33.13 \\
 KeyDiff & 24.73 & 27.54 & 31.99 & 44.68 & 34 & 23.41 & 23.55 & 20.23 & 14.54 & 6 & 53.5 & 32.54 & 67.15 & 23.45 & 23.63 & 20.36 & 29.46 \\
 CAKE & 28.85 & 41.88 & 50.94 & 60.09 & 52.09 & 34.75 & 27.41 & 22.97 & 17.23 & 7 & 100 & 48 & 85.03 & 44.31 & 43.09 & 35.35 & 43.69 \\
 CriticalKV & 27.93 & 42.81 & 50.71 & 60.55 & 52.34 & 35.1 & 27.67 & 22.69 & 17.52 & 7 & 100 & 48.5 & 85.39 & 44.22 & 42.46 & 35.64 & 43.78 \\
 AnDPro & 27.98 & 42.07 & 50.68 & 59.84 & 51.75 & 36.02 & 26.97 & 22.86 & 16.56 & 7 & 100 & 53 & 85.34 & 45.1 & 42.65 & 35.23 & \underline{43.94} \\
 \rowcolor{gray!30} DropKV & 28.48 & 43.43 & 51 & 60.08 & 52.84 & 34.46 & 28.23 & 22.15 & 18.14 & 6.5 & 100 & 59.5 & 85.15 & 44.62 & 43.01 & 35.73 & \textbf{44.58} \\
 \midrule
 \multicolumn{18}{c}{\textbf{Mistral-7B-512k}} \\ 
 \midrule
 FullKV & 22.04 & 16.47 & 42.75 & 24.06 & 20.72 & 13.04 & 33.35 & 25.25 & 26.97 & 0 & 100 & 74.5 & 89.55 & 41.69 & 55.54 & 56.39 & 40.15 \\
 \midrule
 SnapKV & 21.18 & 12.48 & 40.73 & 19.75 & 16.58 & 10.24 & 26.92 & 24.97 & 21.49 & 0 & 100 & 58.5 & 90.3 & 40.4 & 53.55 & 54.6 & 36.98 \\
 PyramidKV & 21.44 & 13.73 & 40.92 & 17.38 & 16.41 & 9.38 & 25.8 & 24.77 & 21.44 & 0 & 100 & 55 & 90.3 & 40.71 & 53.29 & 53.75 & 36.52 \\
 StreamingLLM & 19.85 & 9.35 & 26.55 & 14.24 & 15.21 & 6.83 & 22.28 & 22.11 & 19.18 & 0.5 & 32 & 47 & 88.72 & 39.82 & 51.23 & 51.66 & 29.16 \\
 KeyDiff & 20.64 & 12.21 & 26.77 & 16.57 & 14 & 7.14 & 24.66 & 23.21 & 21.78 & 0 & 16 & 26.5 & 74.6 & 28.22 & 29.6 & 30.2 & 23.26 \\
 CAKE & 21.83 & 13.39 & 40.09 & 20.51 & 17.09 & 11.52 & 27.51 & 25.21 & 22.11 & 0 & 100 & 60.5 & 90.13 & 40.38 & 52.47 & 55.35 & \underline{37.38} \\
 CriticalKV & 22.9 & 12.9 & 40.14 & 20.21 & 16.17 & 10.7 & 27.58 & 25.18 & 22.47 & 0 & 100 & 60 & 90.3 & 40.08 & 53.19 & 54.75 & 37.29 \\
 AnDPro & 21.37 & 12.46 & 41.25 & 20.06 & 16.26 & 9.84 & 26.81 & 25.07 & 21.37 & 0 & 100 & 63.5 & 90.3 & 40.57 & 53.26 & 54.24 & 37.27 \\
 \rowcolor{gray!30} DropKV & 21.62 & 13.49 & 41.16 & 21.75 & 16.02 & 10.5 & 28.11 & 25.11 & 22.85 & 0 & 100 & 69 & 90.3 & 40.55 & 53.44 & 54.11 & \textbf{38.00} \\
 \bottomrule
 \end{tabular}

 }
\end{table}
 
\begin{table}[H]
 \centering
 \caption{Additional LongBench results at 30\% budget. The best average result is highlighted in gray, and the second-best is underlined.}
 \label{tab:longbench_0.7}
 \resizebox{\textwidth}{!}{
 \begin{tabular}{l|cccccccccccccccc|r}
 \toprule
 & \multicolumn{3}{c}{\textbf{Single Doc. QA}} & \multicolumn{3}{c}{\textbf{Multi Doc. QA}} & \multicolumn{3}{c}{\textbf{Summarization}} & \multicolumn{2}{c}{\textbf{Synthetic}} & \multicolumn{3}{c}{\textbf{Fewshot Learning}} & \multicolumn{2}{c}{\textbf{Code}} & \\ 
 \cmidrule(lr){2-4} \cmidrule(lr){5-7} \cmidrule(lr){8-10} \cmidrule(lr){11-12} \cmidrule(lr){13-15} \cmidrule(lr){16-17}
 Method & \rotatebox{45}{NrtvQA} & \rotatebox{45}{Qasper} & \rotatebox{45}{MF-en} & \rotatebox{45}{HotpotQA} & \rotatebox{45}{2WikiMQA} & \rotatebox{45}{Musique} & \rotatebox{45}{GovReport} & \rotatebox{45}{QMSum} & \rotatebox{45}{MultiNews} & \rotatebox{45}{PCount} & \rotatebox{45}{PR-en} & \rotatebox{45}{TREC} & \rotatebox{45}{TriviaQA} & \rotatebox{45}{SAMSum} & \rotatebox{45}{Lcc} & \rotatebox{45}{RB-P} & \textbf{Avg.} \\ 
 \midrule
  
 \multicolumn{18}{c}{\textbf{LLaMA-3.1-8B-Instruct}} \\ 
 \midrule
 FullKV & 30.51 & 45.59 & 56.34 & 58.22 & 50.02 & 31.75 & 34.3 & 25.33 & 26.76 & 9.5 & 100 & 73.5 & 92.29 & 43.96 & 64.77 & 55.7 & 49.91 \\
 \midrule
 SnapKV & 30.28 & 45.93 & 56.13 & 58.05 & 50.38 & 32.07 & 30.58 & 24.96 & 23.74 & 9.5 & 100 & 65.5 & 91.87 & 42.71 & 64.83 & 55.95 & 48.91 \\
 PyramidKV & 30.58 & 44.36 & 55.8 & 57.98 & 49.83 & 31.7 & 30.06 & 24.99 & 23.53 & 9.5 & 100 & 62.5 & 91.88 & 43.42 & 64.37 & 55.68 & 48.51 \\
 StreamingLLM & 28.33 & 31.55 & 37.18 & 54.34 & 48.13 & 26.98 & 28.64 & 22.44 & 23.25 & 9.5 & 83.5 & 59.5 & 91.54 & 43.42 & 63.18 & 54.13 & 44.10 \\
 KeyDiff & 30.23 & 40.88 & 54.35 & 57.59 & 48.15 & 30.29 & 29.87 & 24.58 & 24.14 & 9.17 & 99.5 & 61.5 & 90.74 & 41.1 & 49.15 & 48.17 & 46.21 \\
 CAKE & 29.65 & 44.72 & 56.87 & 57.98 & 49.5 & 32.1 & 30.29 & 24.99 & 23.69 & 9.5 & 100 & 66.5 & 92.26 & 43.44 & 64.4 & 55.21 & 48.82 \\
 CriticalKV & 30.61 & 45.54 & 56.57 & 57.93 & 49.89 & 32.04 & 30.92 & 24.95 & 24.12 & 9.5 & 100 & 68.5 & 91.87 & 42.83 & 64.63 & 55.12 & \underline{49.06} \\
 AnDPro & 30.78 & 44.93 & 56.67 & 57.95 & 49.58 & 31.95 & 30.56 & 25.39 & 23.72 & 9.5 & 100 & 67.5 & 92.33 & 43.51 & 64.62 & 55.95 & \underline{49.06} \\
 \rowcolor{gray!30} DropKV & 30.29 & 45.22 & 57.22 & 58.02 & 50.06 & 31.42 & 31.07 & 24.93 & 24.13 & 9.5 & 100 & 68.5 & 91.73 & 44.18 & 64.46 & 55.57 & \textbf{49.14} \\
 \midrule
 \multicolumn{18}{c}{\textbf{Qwen2.5-7B-Instruct-1M}} \\ 
 \midrule
 FullKV & 28.88 & 48.39 & 51.57 & 60.27 & 53.19 & 35.63 & 33.33 & 23.22 & 24.53 & 6.5 & 100 & 66.5 & 85.28 & 45.29 & 45.61 & 35.31 & 46.47\\
 \midrule
 SnapKV & 28.44 & 46.63 & 51.16 & 59.98 & 53.11 & 36.02 & 31.17 & 23.35 & 20.53 & 7.5 & 100 & 61.5 & 85.03 & 45.04 & 45.77 & 35.74 & 45.69 \\
 PyramidKV & 29.12 & 42.89 & 51.19 & 59.8 & 52.79 & 35.42 & 27.35 & 22.43 & 19.05 & 7 & 100 & 48 & 84.67 & 44.74 & 43.81 & 34.52 & 43.92 \\
 StreamingLLM & 25.91 & 37.66 & 32.18 & 50.75 & 48.04 & 27.35 & 29.46 & 19.84 & 20.97 & 5.5 & 46 & 54.5 & 85.29 & 44.41 & 44.31 & 36.62 & 38.05 \\
 KeyDiff & 27.13 & 37.4 & 44.17 & 55.13 & 46.07 & 30.67 & 29.02 & 21.94 & 19.79 & 5 & 100 & 61 & 82.8 & 40.72 & 31.11 & 25.05 & 41.06 \\
 CAKE & 28.81 & 46.36 & 51.38 & 59.78 & 53.34 & 35.62 & 31.22 & 23.18 & 20.95 & 7.5 & 100 & 63 & 84.85 & 45.48 & 45.74 & 35.49 & 45.79\\
 CriticalKV & 28.52 & 46.75 & 51.72 & 59.95 & 52.9 & 35.65 & 31.37 & 22.84 & 20.94 & 6.5 & 100 & 64.5 & 84.99 & 45.26 & 45.32 & 35.9 & 45.82 \\
 AnDPro & 27.77 & 46.58 & 50.75 & 60.3 & 53.41 & 35.55 & 31.37 & 23.14 & 20.6 & 7 & 100 & 66 & 84.99 & 44.92 & 45.4 & 35.84 & \underline{45.85} \\
 \rowcolor{gray!30} DropKV & 28.28 & 47.4 & 51.18 & 60.13 & 52.84 & 35.12 & 31.55 & 22.86 & 21.55 & 7 & 100 & 65 & 84.99 & 45.13 & 45.52 & 35.99 & \textbf{45.91} \\
 \midrule
 \multicolumn{18}{c}{\textbf{Mistral-7B-512k}} \\ 
 \midrule
 FullKV & 22.04 & 16.47 & 42.75 & 24.06 & 20.72 & 13.04 & 33.35 & 25.25 & 26.97 & 0 & 100 & 74.5 & 89.55 & 41.69 & 55.54 & 56.39 & 40.15 \\
 \midrule
 SnapKV & 21.88 & 16.3 & 42.1 & 21.77 & 19.02 & 11.4 & 30.74 & 25.33 & 24.5 & 0 & 100 & 69 & 89.71 & 40.78 & 54.93 & 56.14 & 38.98 \\
 PyramidKV & 21.92 & 17.28 & 41.48 & 18.51 & 15.3 & 10.68 & 30.05 & 25.63 & 24.08 & 0 & 100 & 66.5 & 90.05 & 41.1 & 54 & 54.5 & 38.19 \\
 StreamingLLM & 21.53 & 12.1 & 27.22 & 18.28 & 16.83 & 8.58 & 27.68 & 22.44 & 23.4 & 0 & 63 & 60.5 & 89.38 & 39.97 & 53.96 & 54.68 & 33.72 \\
 KeyDiff & 22.82 & 15.87 & 36.11 & 18.54 & 15.44 & 9.29 & 28.75 & 24.98 & 24.81 & 0 & 71.5 & 56 & 85.25 & 38.3 & 41.82 & 47.91 & 33.59 \\
 CAKE &  21.46 & 15.56 & 41.85 & 20.99 & 19.32 & 11.69 & 31.28 & 25.05 & 24.91 & 0 & 100 & 68.5 & 89.71 & 41.83 & 54.56 & 56.51 & 38.95 \\
 CriticalKV & 22.22 & 15.62 & 41.47 & 21.9 & 19.95 & 11.38 & 30.98 & 25.41 & 25.02 & 0 & 100 & 72.5 & 89.96 & 41.4 & 55.36 & 56.01 & \underline{39.32} \\
 AnDPro & 21.18 & 15.97 & 41.65 & 22.16 & 18.72 & 11.67 & 30.59 & 25.16 & 24.84 & 0 & 100 & 73.5 & 89.79 & 42.28 & 54.94 & 56.1 & 39.28 \\
 \rowcolor{gray!30} DropKV & 22.71 & 15.8 & 41.1 & 22.34 & 19.82 & 11.82 & 30.98 & 25.5 & 25.28 & 0 & 100 & 73.5 & 89.71 & 42.48 & 55.08 & 55.78 & \textbf{39.49} \\
 \bottomrule
 \end{tabular}

 }
\end{table}

\subsection{Extended Results with adaptive head budget}\label{app:adadropkv-extended}
Tables~\ref{tab:ruler_adadropkv_appendix} and \ref{tab:longbench_adadropkv} extend the evaluation of AdaDropKV to broader sequence lengths and expanded KV-cache budgets ($5\%$, $10\%$, and $30\%$) on RULER and LongBench, respectively. These extended results further demonstrate the versatility and robustness of combining DropKV's output-perturbation scoring with adaptive head budget allocation under diverse long-context workloads.
 
\begin{table}[H]
 \centering
 \renewcommand{\arraystretch}{0.88}
 \caption{RULER results at 128K and 64K sequence length (S. L.) and 5\%, 10\% and 30\% KV-cache budget. The best result is highlighted in gray, and the second-best is underlined.
 }
 \label{tab:ruler_adadropkv_appendix}
 \resizebox{\textwidth}{!}{
 \begin{tabular}{l|c|c|rrrrrrrrrrr|r}
 \toprule
 S. L.& \textbf{Method} & Budget & N-S1 & N-S2 & N-S3 & N-MK1 & N-MK2 & N-MQ & N-MV & VT & FWE & QA-1 & QA-2 & Avg. \\ 
 \midrule
 \multicolumn{15}{c}{\textbf{LLaMA-3.1-8B-Instruct}} \\ 
 \midrule
 \multirow{15}{*}{128k} & Full KV & 100\% & 100 & 99.2 & 99.6 & 97.6 & 88.2 & 98.7 & 95.85 & 84 & 61.93 & 64.17 & 44 & 84.84 \\
 \cmidrule(lr){2-15}
 & CriticalKV & \multirow{4}{*}{5\%} & 100 & 99 & 69.2 & 97.6 & 75.6 & 98.55 & 95.2 & 77.4 & 46.67 & 63.9 & 42.8 & 78.72 \\
 & AnDPro & & 100 & 99 & 68.4 & 97.4 & 78.4 & 98.75 & 96 & 76.52 & 54.53 & 63.76 & 43.4 & 79.65 \\
 & DropKV & & 100 & 98.8 & 89.6 & 97.6 & 86.2 & 98.25 & 96.9 & 77.32 & 49.73 & 64.17 & 43.4 & \underline{82.00} \\
 & \cellcolor{gray!30}AdaDropKV & \cellcolor{gray!30}& \cellcolor{gray!30}100 & \cellcolor{gray!30}100 & \cellcolor{gray!30}93.75 & \cellcolor{gray!30}97.66 & \cellcolor{gray!30}86.8 & \cellcolor{gray!30}99.02 & \cellcolor{gray!30}95.7 & \cellcolor{gray!30}77.66 & \cellcolor{gray!30}54.34 & \cellcolor{gray!30}64.32 & \cellcolor{gray!30}43.4 & \cellcolor{gray!30}\textbf{82.97} \\
 \cmidrule(lr){2-15}
 & CriticalKV & \multirow{4}{*}{10\%} & 100 & 98.8 & 90.4 & 97.4 & 81.8 & 98.5 & 95.55 & 78.64 & 52.73 & 63.3 & 43.4 & 81.87 \\
 & AnDPro & & 100 & 99 & 89.2 & 97.4 & 82.6 & 99 & 95.6 & 78.96 & 56.27 & 63.3 & 43.8 & 82.28 \\
 & DropKV & & 100 & 98.6 & 97.6 & 97.6 & 87.6 & 98.8 & 97.1 & 79.24 & 51.87 & 64.16 & 43.4 & \underline{83.27} \\
 & \cellcolor{gray!30}AdaDropkv &\cellcolor{gray!30}& \cellcolor{gray!30}100 & \cellcolor{gray!30}99.22 & \cellcolor{gray!30}98.44 & \cellcolor{gray!30}97.66 & \cellcolor{gray!30}84.59 & \cellcolor{gray!30}99.41 & \cellcolor{gray!30}95.7 & \cellcolor{gray!30}78.97 & \cellcolor{gray!30}56.73 & \cellcolor{gray!30}63.94 & \cellcolor{gray!30}43.4 & \cellcolor{gray!30}\textbf{83.46} \\
 \cmidrule(r){2-15}
 & CriticalKV & \multirow{4}{*}{30\%} & 100 & 100 & 99.22 & 97.66 & 85.94 & 99.22 & 95.51 & 77.26 & 60.68 & 59.25 & 41.41 & \underline{83.29} \\
 & AnDPro & & 100 & 100 & 97.66 & 97.66 & 84.38 & 99.02 & 93.55 & 76.56 & 58.33 & 60.55 & 41.41 & 82.65 \\
 & DropKV & & 100 & 100 & 99.22 & 97.66 & 85.94 & 99.22 & 96.09 & 77.5 & 59.38 & 60.03 & 41.41 & \underline{83.31} \\
 & \cellcolor{gray!30}AdaDropkv & \cellcolor{gray!30}& \cellcolor{gray!30}100 & \cellcolor{gray!30}100 & \cellcolor{gray!30}99.22 & \cellcolor{gray!30}97.66 & \cellcolor{gray!30}85.94 & \cellcolor{gray!30}99.22 & \cellcolor{gray!30}95.53 & \cellcolor{gray!30}82.03 & \cellcolor{gray!30}60.11 & \cellcolor{gray!30}60.81 & \cellcolor{gray!30}41.41 & \cellcolor{gray!30}\textbf{83.81} \\
 \midrule
 \multirow{15}{*}{64k} & FullKV & 100\% & 100 & 100 & 100 & 100 & 100 & 100 & 99.02 & 98.91 & 85.68 & 71.22 & 50 & 91.35 \\
 \cmidrule(lr){2-15}
 & AdaKV & \multirow{5}{*}{5\%} & 100 & 100 & 90.63 & 100 & 90.63 & 99.61 & 98.44 & 97.18 & 74.74 & 71.22 & 47.66 & 88.19 \\
 & CriticalKV &  & 100 & 100 & 76.2 & 99.2 & 95.4 & 99.55 & 99.15 & 96.24 & 72.73 & 69.3 & 49.2 & 87.00 \\
 & AnDPro & & 100 & 100 & 76.8 & 99.8 & 97.6 & 99.85 & 98.7 & 96.92 & 74.53 & 69.67 & 49.4 & 87.57 \\
 & DropKV & & 100 & 100 & 96.8 & 99.6 & 99 & 99.45 & 98.35 & 96.2 & 71.67 & 69.2 & 50.4 & \underline{89.15} \\
 & \cellcolor{gray!30}AdaDropKV &\cellcolor{gray!30} & \cellcolor{gray!30}100 & \cellcolor{gray!30}100 & \cellcolor{gray!30}98.44 & \cellcolor{gray!30}100 & \cellcolor{gray!30}100 & \cellcolor{gray!30}99.41 & \cellcolor{gray!30}99.22 & \cellcolor{gray!30}97.66 & \cellcolor{gray!30}75.52 & \cellcolor{gray!30}72.01 & \cellcolor{gray!30}48.66 & \cellcolor{gray!30}\textbf{90.08} \\
 \cmidrule(lr){2-15}
 & AdaKV & \multirow{5}{*}{10\%} & 100 & 100 & 100 & 100 & 97.66 & 99.81 & 99.22 & 97.29 & 78.25 & 70.44 & 47.66 & 90.03 \\
 & CriticalKV & & 100 & 100 & 96.09 & 100 & 99.22 & 99.81 & 99.61 & 96.88 & 80.21 & 70.44 & 47.66 & 89.99 \\
 & AnDPro & & 100 & 100 & 96.09 & 100 & 98.44 & 100 & 99.41 & 97.18 & 78.85 & 71.74 & 48.44 & 90.01 \\
 & DropKV & & 100 & 100 & 100 & 100 & 100 & 100 & 99.22 & 97.19 & 77.87 & 72.01 & 46.88 & \underline{90.29} \\
 & \cellcolor{gray!30} AdaDropKV & \cellcolor{gray!30}& \cellcolor{gray!30}100 & \cellcolor{gray!30}100 & \cellcolor{gray!30}100 & \cellcolor{gray!30}100 & \cellcolor{gray!30}100 & \cellcolor{gray!30}100 & \cellcolor{gray!30}99.22 & \cellcolor{gray!30}97.66 & \cellcolor{gray!30}79.17 & \cellcolor{gray!30}72.27 & \cellcolor{gray!30}47.66 & \cellcolor{gray!30}\textbf{90.54} \\
 \cmidrule(lr){2-15}
 & AdaKV & \multirow{5}{*}{30\%} & 100 & 100 & 100 & 100 & 100 & 100 & 98.83 & 98.75 & 86.42 & 70.44 & 46.88 & 91.03 \\
 & CriticalKV & & 100 & 100 & 100 & 100 & 100 & 100 & 98.63 & 97.97 & 85.42 & 71.22 & 46.88 & 90.92 \\
 & AnDPro & & 100 & 100 & 100 & 100 & 100 & 100 & 99.81 & 98.12 & 86.58 & 70.7 & 47.66 & 91.17 \\
 & DropKV & & 100 & 100 & 100 & 100 & 100 & 100 & 99.02 & 98.28 & 86.46 & 71.75 & 48.09 & \underline{91.24} \\
 & \cellcolor{gray!30}AdaDropKV & \cellcolor{gray!30} & \cellcolor{gray!30} 100 & \cellcolor{gray!30} 100 & \cellcolor{gray!30} 100 & \cellcolor{gray!30} 100 & \cellcolor{gray!30} 100 & \cellcolor{gray!30} 100 & \cellcolor{gray!30} 98.83 & \cellcolor{gray!30} 98.91 & \cellcolor{gray!30} 86.46 & \cellcolor{gray!30} 71.7 & \cellcolor{gray!30} 48.44 & \cellcolor{gray!30} \textbf{91.30} \\
 \midrule
 \multicolumn{15}{c}{\textbf{Qwen2.5-7B-Instruct-1M}} \\ 
 \midrule
 \multirow{15}{*}{128k} & FullKV & 100\% & 100 & 100 & 99.6 & 99.8 & 98.2 & 99.9 & 78.9 & 95.6 & 81.13 & 57 & 47 & 87.01 \\
 \cmidrule(lr){2-15}
 & CriticalKV & \multirow{4}{*}{5\%} & 100 & 100 & 22.6 & 99.8 & 88.2 & 99.55 & 77.15 & 96.48 & 64.93 & 56.38 & 46.6  & 77.43 \\
 & AnDPro & & 99.8 & 99.8 & 45.8 & 99.6 & 90 & 99.6 & 77.25 & 96.4 & 66.93 & 57.73 & 46 & 79.90 \\
 & DropKV & & 100 & 99.6 & 81.6 & 99.2 & 97.4 & 99.7 & 77.15 & 96.64 & 67.33 & 57.07 & 45.8 & \underline{83.77} \\
 & \cellcolor{gray!30}AdaDropKV &\cellcolor{gray!30} & \cellcolor{gray!30}100 & \cellcolor{gray!30}100 & \cellcolor{gray!30}81.56 & \cellcolor{gray!30}100 & \cellcolor{gray!30}98.44 & \cellcolor{gray!30}100 & \cellcolor{gray!30}77.76 & \cellcolor{gray!30}96.25 & \cellcolor{gray!30}67.89 & \cellcolor{gray!30}57.34 & \cellcolor{gray!30}49.22 & \cellcolor{gray!30}\textbf{84.41} \\
 \cmidrule(lr){2-15}
 & CriticalKV & \multirow{4}{*}{10\%} & 100 & 100 & 58.2 & 99.6 & 92 & 99.6 & 76.65 & 96.12 & 69.13 & 56.73 & 45.8 & 81.26 \\
 & AnDPro & & 99.8 & 100 & 74 & 99.8 & 94.6 & 99.75 & 76.9 & 96.48 & 71.4 & 57.4 & 45.6 & 83.25 \\
 & DropKV & & 100 & 99.4 & 97 & 99.2 & 97.8 & 99.7 & 77.1 & 96.48 & 69.6 & 56.13 & 46 & \underline{85.31} \\
 & \cellcolor{gray!30}AdaDropKV & \cellcolor{gray!30}& \cellcolor{gray!30}100 & \cellcolor{gray!30}100 & \cellcolor{gray!30}97.38 & \cellcolor{gray!30}100 & \cellcolor{gray!30}97.66 & \cellcolor{gray!30}99.81 & \cellcolor{gray!30}77.78 & \cellcolor{gray!30}96.94 & \cellcolor{gray!30}69.23 & \cellcolor{gray!30}56.12 & \cellcolor{gray!30}47.66 & \cellcolor{gray!30}\textbf{85.69} \\
 \cmidrule(lr){2-15}
 & CriticalKV & \multirow{4}{*}{30\%} & 100 & 100 & 98.44 & 100 & 97.66 & 99.8 & 75.39 & 96.41 & 74.74 & 54.82 & 47.66 & \underline{85.90} \\
 & AnDPro & & 100 & 100 & 89.06 & 100 & 97.66 & 99.61 & 75.97 & 95.78 & 78.39 & 55.6 & 48.44 & 85.50 \\
 & DropKV & & 100 & 100 & 99.22 & 100 & 99.22 & 99.61 & 75.78 & 96.09 & 73.96 & 57.16 & 47.66 & \underline{86.25} \\
 & \cellcolor{gray!30}AdaDropKV & \cellcolor{gray!30}& \cellcolor{gray!30}100 & \cellcolor{gray!30}100 & \cellcolor{gray!30}99.22 & \cellcolor{gray!30}100 & \cellcolor{gray!30}99.22 & \cellcolor{gray!30}99.81 & \cellcolor{gray!30}75.78 & \cellcolor{gray!30}96.94 & \cellcolor{gray!30}74.23 & \cellcolor{gray!30}57.12 & \cellcolor{gray!30}47.66 & \cellcolor{gray!30}\textbf{86.36} \\
 \midrule
 \multirow{15}{*}{64k} & FullKV & 100\% & 100 & 100 & 100 & 100 & 99.22 & 100 & 80.86 & 94.53 & 83.33 & 60.29 & 57.81 & 88.73 \\
 \cmidrule(lr){2-15}
 & AdaKV & \multirow{5}{*}{5\%} & 100 & 100 & 7.03 & 100 & 80.47 & 100 & 78.32 & 94.9 & 66.93 & 57.88 & 55.47 & 76.45 \\
 & CriticalKV & & 100 & 100 & 12.6 & 100 & 88.2 & 99.9 & 77.55 & 94.84 & 66.73 & 55.38 & 52 & 77.02 \\
 & AnDPro & & 100 & 99.8 & 39.2 & 100 & 89.4 & 99.8 & 78.7 & 95.36 & 67 & 55.33 & 52.4 & 79.73 \\
 & DropKV & & 100 & 99.6 & 74.4 & 99.6 & 98.4 & 99.85 & 78.95 & 95.36 & 67.13 & 55.73 & 51.8 & \underline{83.71} \\
 & \cellcolor{gray!30}AdaDropKV & \cellcolor{gray!30}& \cellcolor{gray!30}100 & \cellcolor{gray!30}100 & \cellcolor{gray!30}75.34 & \cellcolor{gray!30}100 & \cellcolor{gray!30}95.31 & \cellcolor{gray!30}100 & \cellcolor{gray!30}77.93 & \cellcolor{gray!30}93.75 & \cellcolor{gray!30}68.23 & \cellcolor{gray!30}58.4 & \cellcolor{gray!30}57.81 & \cellcolor{gray!30}\textbf{84.25} \\
 \cmidrule(lr){2-15}
 & AdaKV & \multirow{5}{*}{10\%} & 100 & 100 & 30.47 & 100 & 89.84 & 100 & 78.52 & 94.37 & 67.19 & 58.2 & 57.03 & 79.60 \\
 & CriticalKV & & 100 & 100 & 42.97 & 100 & 91.41 & 100 & 77.93 & 94.06 & 67.19 & 56.84 & 58.59 & 80.82 \\
 & AnDPro & & 100 & 99.22 & 64.84 & 100 & 94.53 & 99.81 & 78.52 & 95.16 & 67.19 & 57.42 & 56.25 & 82.99 \\
 & DropKV & & 100 & 99.22 & 85.94 & 100 & 98.44 & 100 & 79.1 & 94.69 & 68.49 & 58.72 & 56.25 & \underline{85.53} \\
 & \cellcolor{gray!30}AdaDropKV & \cellcolor{gray!30}& \cellcolor{gray!30}100 & \cellcolor{gray!30}100 & \cellcolor{gray!30}88.16 & \cellcolor{gray!30}100 & \cellcolor{gray!30}97.66 & \cellcolor{gray!30}100 & \cellcolor{gray!30}76.95 & \cellcolor{gray!30}94.9 & \cellcolor{gray!30}69.27 & \cellcolor{gray!30}58.2 & \cellcolor{gray!30}58.59 & \cellcolor{gray!30}\textbf{85.79} \\
 \cmidrule(lr){2-15}
 & AdaKV & \multirow{5}{*}{30\%} & 100 & 100 & 88.28 & 100 & 92.19 & 100 & 77.93 & 94.9 & 69.27 & 58.98 & 57.81 & 85.40 \\
 & CriticalKV & & 100 & 100 & 96.09 & 100 & 97.66 & 99.81 & 78.91 & 94.69 & 74.48 & 57.88 & 57.03 & 86.96 \\
 & AnDPro & & 100 & 100 & 91.41 & 100 & 96.09 & 99.61 & 79.69 & 94.53 & 73.44 & 58.2 & 56.25 & 86.29 \\
 & DropKV & & 100 & 100 & 100 & 100 & 100 & 99.41 & 79.1 & 94.69 & 75.26 & 58.72 & 57.03 & \underline{87.66} \\
 & \cellcolor{gray!30}AdaDropKV & \cellcolor{gray!30}& \cellcolor{gray!30}100 & \cellcolor{gray!30}100 & \cellcolor{gray!30}99.22 & \cellcolor{gray!30}100 & \cellcolor{gray!30}99.22 & \cellcolor{gray!30}100 & \cellcolor{gray!30}79.1 & \cellcolor{gray!30}94.53 & \cellcolor{gray!30}75.48 & \cellcolor{gray!30}58.83 & \cellcolor{gray!30}58.98 & \cellcolor{gray!30}\textbf{87.76} \\
 \bottomrule
 \end{tabular}

 }
\end{table}

\begin{table}[!h]
 \centering
 \caption{LongBench results at 5\%, 10\% and 30\% cache budget. The best result is highlighted in gray, and the second-best is underlined.}
 \label{tab:longbench_adadropkv}
 \resizebox{\textwidth}{!}{
 \begin{tabular}{l|c|cccccccccccccccc|r}
 \toprule
 & & \multicolumn{3}{c}{\textbf{Single Doc. QA}} & \multicolumn{3}{c}{\textbf{Multi Doc. QA}} & \multicolumn{3}{c}{\textbf{Summarization}} & \multicolumn{2}{c}{\textbf{Synthetic}} & \multicolumn{3}{c}{\textbf{Fewshot Learning}} & \multicolumn{2}{c}{\textbf{Code}} & \\ 
 \cmidrule(lr){3-5} \cmidrule(lr){6-8} \cmidrule(lr){9-11} \cmidrule(lr){12-13} \cmidrule(lr){14-16} \cmidrule(lr){17-18}
 Method & Budget & \rotatebox{45}{NrtvQA} & \rotatebox{45}{Qasper} & \rotatebox{45}{MF-en} & \rotatebox{45}{HotpotQA} & \rotatebox{45}{2WikiMQA} & \rotatebox{45}{Musique} & \rotatebox{45}{GovReport} & \rotatebox{45}{QMSum} & \rotatebox{45}{MultiNews} & \rotatebox{45}{PCount} & \rotatebox{45}{PR-en} & \rotatebox{45}{TREC} & \rotatebox{45}{TriviaQA} & \rotatebox{45}{SAMSum} & \rotatebox{45}{Lcc} & \rotatebox{45}{RB-P} & \textbf{Avg.} \\ 
 \midrule
  
 \multicolumn{19}{c}{\textbf{LLaMA-3.1-8B-Instruct}} \\ 
 \midrule
 FullKV & 100\% & 30.51 & 45.59 & 56.34 & 58.22 & 50.02 & 31.75 & 34.3 & 25.33 & 26.76 & 9.5 & 100 & 73.5 & 92.29 & 43.96 & 64.77 & 55.7 & 49.91 \\
 \midrule
 CAKE & \multirow{6}{*}{5\%} & 30.41 & 33.54 & 54.79 & 57.09 & 48.29 & 31.07 & 23.54 & 24.36 & 17.33 & 9.5 & 100 & 37 & 91.05 & 42.54 & 58.58 & 53.7 & 44.55 \\
 AdaKV & & 29.88 & 32.53 & 54.38 & 57.25 & 46.79 & 30.55 & 23.33 & 24.21 & 17.08 & 9.5 & 100 & 42 & 91.8 & 40.99 & 60.62 & 55.28 & 44.76 \\
 CriticalKV & & 30.03 & 31.07 & 54.55 & 57.33 & 47.15 & 30.09 & 23.16 & 23.92 & 18.46 & 9.5 & 100 & 41 & 92.03 & 42.14 & 59.25 & 54.54 & 44.64 \\
 AnDPro & & 30.97 & 32.06 & 54.44 & 57.62 & 46.44 & 31.53 & 22.92 & 24.25 & 17.98 & 9.25 & 100 & 38 & 91.65 & 41.86 & 58.58 & 54.9 & 44.53 \\
 DropKV & & 30.48 & 32.49 & 55.34 & 57.29 & 46.84 & 31.25 & 23.71 & 24.94 & 18.7 & 9.5 & 100 & 48.5 & 92.27 & 43.06 & 60.29 & 53.47 & \underline{45.51} \\
 \rowcolor{gray!30} AdaDropKV & & 30.57 & 38.4 & 55.64 & 57.84 & 48.03 & 30.71 & 24.73 & 24.67 & 19.05 & 9.5 & 100 & 52.5 & 92.21 & 43.01 & 62.02 & 54.35 & \textbf{46.45} \\
 \midrule
 CAKE & \multirow{6}{*}{10\%} & 29.6 & 40.32 & 56.08 & 57.69 & 49.45 & 31.18 & 26.11 & 24.56 & 20.03 & 9.5 & 100 & 50 & 92.53 & 43.72 & 62.05 & 54.98 & 46.74 \\
 AdaKV & & 30.63 & 38.26 & 55.15 & 57.21 & 48.11 & 32.06 & 25.73 & 24.42 & 20.34 & 9.2 & 100 & 58 & 91.57 & 42.33 & 63.03 & 55.08 & 46.95 \\
 CriticalKV & & 30.39 & 38.44 & 55.87 & 57.22 & 48.46 & 31.85 & 26.29 & 24.9 & 20.96 & 9.25 & 100 & 55 & 91.6 & 42.94 & 63.15 & 54.42 & 46.92 \\
 AnDPro & & 30.9 & 37.39 & 56.18 & 57.9 & 48.13 & 32.07 & 25.46 & 24.38 & 20.48 & 9.25 & 100 & 56 & 91.97 & 43.77 & 61.74 & 55.64 & 46.95 \\
 DropKV & & 31.34 & 40.66 & 57.3 & 57.9 & 48.49 & 31.65 & 26.77 & 24.37 & 21 & 9.5 & 100 & 59 & 91.75 & 43.61 & 62.29 & 54.38 & \underline{47.50} \\
 \rowcolor{gray!30} AdaDropKV & & 31.25 & 43.24 & 56.75 & 58.03 & 49.63 & 32.28 & 27.39 & 24.95 & 21.75 & 9.5 & 100 & 66 & 91.72 & 43.69 & 63.49 & 55.23 & \textbf{48.43} \\
 \midrule
 CAKE & \multirow{6}{*}{30\%} & 29.65 & 44.72 & 56.87 & 57.98 & 49.5 & 32.1 & 30.29 & 24.99 & 23.69 & 9.5 & 100 & 66.5 & 92.26 & 43.44 & 64.4 & 55.21 & 48.82 \\
 AdaKV & & 30.16 & 45.14 & 56.91 & 58.09 & 50.18 & 31.34 & 30.34 & 24.92 & 24.06 & 9.5 & 100 & 68.5 & 91.58 & 43.79 & 64.65 & 55.63 & 49.05 \\
 CriticalKV & & 30.61 & 45.54 & 56.57 & 57.93 & 49.89 & 32.04 & 30.92 & 24.95 & 24.12 & 9.5 & 100 & 68.5 & 91.87 & 42.83 & 64.63 & 55.12 & 49.06 \\
 AnDPro & & 30.78 & 44.93 & 56.67 & 57.95 & 49.58 & 31.95 & 30.56 & 25.39 & 23.72 & 9.5 & 100 & 67.5 & 92.33 & 43.51 & 64.62 & 55.95 & 49.06 \\
 DropKV & & 30.29 & 45.22 & 57.22 & 58.02 & 50.06 & 31.42 & 31.07 & 24.93 & 24.13 & 9.5 & 100 & 68.5 & 91.73 & 44.18 & 64.46 & 55.57 & \underline{49.14} \\
 \rowcolor{gray!30} AdaDropKV & & 30.59 & 46.1 & 56.82 & 58.17 & 50.39 & 31.83 & 31.08 & 25.34 & 24.39 & 9.4 & 100 & 71.5 & 91.85 & 44.05 & 64.09 & 55.86 & \textbf{49.47} \\
 \midrule
 \multicolumn{19}{c}{\textbf{Qwen2.5-7B-Instruct-1M}} \\ 
 \midrule
 FullKV &100\% & 28.88 & 48.39 & 51.57 & 60.27 & 53.19 & 35.63 & 33.33 & 23.22 & 24.53 & 6.5 & 100 & 66.5 & 85.28 & 45.29 & 45.61 & 35.31 & 46.47\\
 \midrule
 CAKE & \multirow{6}{*}{5\%} & 28.75 & 38.51 & 49.22 & 59.59 & 52.14 & 33.85 & 24.12 & 22.59 & 14.08 & 7 & 100 & 36 & 84.71 & 44.35 & 37.53 & 35.06 & 41.72 \\
 AdaKV & & 28.29 & 36.09 & 47.33 & 58.78 & 51.62 & 34.09 & 23.55 & 21.72 & 13.89 & 6.5 & 100 & 36.5 & 85.18 & 44.07 & 39.21 & 35.22 & 41.38 \\
 CriticalKV & & 28.31 & 38.22 & 48.83 & 57.63 & 51.99 & 34.11 & 24.4 & 22.35 & 14.83 & 7 & 100 & 36.5 & 84.61 & 43.83 & 38.59 & 34.73 & 41.52\\
 AnDPro & & 28.61 & 36.11 & 49.88 & 59.44 & 50.87 & 34.56 & 23.61 & 22.24 & 14.22 & 6.5 & 100 & 37.5 & 84.71 & 44.21 & 38.08 & 34.76 & 41.58 \\
 DropKV & & 28.28 & 39.09 & 50.54 & 58.87 & 51.6 & 33.93 & 25.25 & 21.42 & 15.36 & 7 & 100 & 48.5 & 85.06 & 44.85 & 38.95 & 33.88 & \underline{42.66} \\
 \rowcolor{gray!30} AdaDropKV & & 28.09 & 42.39 & 49.41 & 59.4 & 52.52 & 35.04 & 26.12 & 22.18 & 15.71 & 6.5 & 100 & 48 & 84.25 & 44.14 & 39.85 & 34.78 & \textbf{43.02} \\
 \midrule
 CAKE & \multirow{6}{*}{10\%} & 28.85 & 41.88 & 50.94 & 60.09 & 52.09 & 34.75 & 27.41 & 22.97 & 17.23 & 7 & 100 & 48 & 85.03 & 44.31 & 43.09 & 35.35 & 43.69 \\
 AdaKV & & 27.55 & 41.62 & 50.55 & 59.72 & 52.91 & 34.78 & 26.53 & 22.72 & 17.28 & 6.5 & 100 & 45.67 & 84.83 & 45.21 & 42.45 & 35.7 & 43.38 \\ 
 CriticalKV & & 27.93 & 42.81 & 50.71 & 60.55 & 52.34 & 35.1 & 27.67 & 22.69 & 17.52 & 7 & 100 & 48.5 & 85.39 & 44.22 & 42.46 & 35.64 & 43.78 \\
 AnDPro & & 27.98 & 42.07 & 50.68 & 59.84 & 51.75 & 36.02 & 26.97 & 22.86 & 16.56 & 7 & 100 & 53 & 85.34 & 45.1 & 42.65 & 35.23 & 43.94 \\
 DropKV & & 28.48 & 43.43 & 51 & 60.08 & 52.84 & 34.46 & 28.23 & 22.15 & 18.14 & 6.5 & 100 & 59.5 & 85.15 & 44.62 & 43.01 & 35.73 & \underline{44.58} \\
 \rowcolor{gray!30} AdaDropKV & & 27.91 & 44.77 & 50.98 & 60.01 & 52.82 & 34.6 & 28.61 & 22.63 & 18.33 & 6.5 & 100 & 60.17 & 85.2 & 44.62 & 44.13 & 35.44 & \textbf{44.80} \\ 
 \midrule
 CAKE & \multirow{6}{*}{30\%} & 28.81 & 46.36 & 51.38 & 59.78 & 53.34 & 35.62 & 31.22 & 23.18 & 20.95 & 7.5 & 100 & 63 & 84.85 & 45.48 & 45.74 & 35.49 & 45.79\\
 AdaKV & & 28.35 & 45.87 & 50.76 & 59.89 & 53.49 & 35.66 & 30.77 & 23.02 & 20.69 & 7.5 & 100 & 63 & 85.11 & 44.69 & 45.68 & 35.48 & 45.62 \\ 
 CriticalKV & & 28.52 & 46.75 & 51.72 & 59.95 & 52.9 & 35.65 & 31.37 & 22.84 & 20.94 & 6.5 & 100 & 64.5 & 84.99 & 45.26 & 45.32 & 35.9 & 45.82 \\
 AnDPro & & 27.77 & 46.58 & 50.75 & 60.3 & 53.41 & 35.55 & 31.37 & 23.14 & 20.6 & 7 & 100 & 66 & 84.99 & 44.92 & 45.4 & 35.84 & 45.85 \\
 DropKV & & 28.28 & 47.4 & 51.18 & 60.13 & 52.84 & 35.12 & 31.55 & 22.86 & 21.55 & 7 & 100 & 65 & 84.99 & 45.13 & 45.52 & 35.99 & \underline{45.91} \\
 \rowcolor{gray!30} AdaDropKV & & 28.61 & 47.03 & 51 & 60.1 & 53.24 & 35.09 & 31.35 & 23.15 & 21.22 & 7 & 100 & 66.5 & 85.21 & 44.97 & 45.97 & 35.62 & \textbf{46.00} \\ 
 \bottomrule
 \end{tabular}

 }
\end{table}

\subsection{Serving Throughput vs.\ Dynamic Cache}\label{app:serving}

\textbf{Serving throughput vs.\ dynamic cache.} As shown in Table~\ref{tab:serving-throughput}, with DropKV the maximum batch ranges from $1.75{\times}$ (Qwen, $n{=}128\mathrm{K}$) to $6.87{\times}$ (Mistral, $n{=}128\mathrm{K}$) of the dynamic baseline, with Triton and PyTorch DropKV realizing comparable capacity on Llama, a clear Triton lead on Mistral at every $n$ (peaking at $B{=}3$ vs.\ $B{=}2$ and $70.8{\to}86.5$\,tok/s at $n{=}128\mathrm{K}$), and Triton matched-batch-neutral on Qwen only at $n{=}32\mathrm{K}$ ($\delta{=}{-}0.1\%$) and positive at $n{=}4\mathrm{K}$ ($+1.9\%$) and $n{=}128\mathrm{K}$ ($+3.3\%$); Qwen's smaller $H_{\mathrm{kv}}{=}4$ halves the kernel grid $B{\cdot}H_{\mathrm{kv}}$, dominant at intermediate $n$.

\begin{table}[ht]
\centering
\caption{Serving throughput at each method's maximum batch at $5\%$ budget on A100-80GB GPU. $B$ is the largest batch that fits for the given $(n,\text{method})$; aggregate decode throughput is in tokens/s. $\times_{\mathrm{dyn}}$ is Triton's decode throughput divided by dynamic's.}
\label{tab:serving-throughput}
\begin{tabular}{l|rrrrrrrr}
\toprule
 & & \multicolumn{2}{c}{Dynamic} & \multicolumn{2}{c}{DropKV-PyTorch} & \multicolumn{2}{c}{DropKV-Triton} & \\
\cmidrule(lr){3-4}\cmidrule(lr){5-6}\cmidrule(lr){7-8}
Model & $n$ & $B$ & tok/s & $B$ & tok/s & $B$ & tok/s & $\times_{\mathrm{dyn}}$ \\
\midrule
\multirow{3}{*}{Llama-3.1-8B}     &   4k & 38 &   395 & 56 &  2111 & 56 &  2066 & $5.23$ \\
                 &  32k &  4 &    50 &  7 &   260 &  6 &   223 & $4.46$ \\
                 & 128k &  1 &    12 &  1 &    38 &  1 &    38 & $3.02$ \\
\midrule
\multirow{3}{*}{Mistral-7B-512k}  &   4k & 56 &   411 & 88 &  2805 & 99 &  2938 & $7.15$ \\
                 &  32k &  7 &    55 & 11 &   413 & 12 &   452 & $8.22$ \\
                 & 128k &  1 &    13 &  2 &    71 &  3 &    86 & $6.87$ \\
\midrule
\multirow{3}{*}{Qwen2.5-7B-1M}    &   4k & 42 &   793 & 52 &  2169 & 51 &  2141 & $2.70$ \\
                 &  32k &  5 &   100 &  6 &   246 &  6 &   246 & $2.46$ \\
                 & 128k &  1 &    23 &  1 &    41 &  1 &    41 & $1.75$ \\
\bottomrule
\end{tabular}
\end{table}

\section{Scoring Functions of Prior Eviction Methods}\label{app:prior-objectives}

The eviction baselines compared in Table~\ref{tab:ruler_128k_0.05} can all be cast as surrogates of the same NP-hard problem $\min_{\J}\|\Delta(\J)\|_2$, but they differ in \emph{what} surrogate they construct: (a) \emph{heuristic retention scores} that depend only on cache-side proxies (position, key vectors, or attention magnitudes) and do not model the value-space output perturbation, and (b) \emph{output-perturbation surrogates} that explicitly target $\|\Delta(\J)\|_2$ via different relaxations of the combinatorial objective. Throughout this section, $\Q$ denotes the SnapKV-style observation window of $w$ recent queries, $p_{j}^{(t)}$ the softmax attention weight from query $\q^{(t)}\in\Q$ to token $j$, $\a^{(t)} = \V^{\top}\p^{(t)}$ the corresponding pre-eviction attention output, $\W_{O}$ the per-head output projection, and $\bar{p}_{j} = \tfrac{1}{|\Q|}\sum_{t\in\Q}p_{j}^{(t)}$. All baseline scores below are \emph{retention} scores (large $=$ keep); DropKV's score is an \emph{eviction cost} (small $=$ evict). Table~\ref{tab:prior-objectives} summarises the per-token expressions in one line per method.

\begin{table}[h]
\centering
\renewcommand{\arraystretch}{1.3}
\caption{Per-token scoring expressions of the baselines in Table~\ref{tab:ruler_128k_0.05}. ``Type'' classifies the surrogate strategy: \textbf{H:} heuristic retention (no value-space modelling), \textbf{B:} explicit upper bound on $\|\Delta(\J)\|_1$, \textbf{R:} convex (LASSO-style \cite{tibshirani1996regression}) relaxation of the combinatorial perturbation objective, \textbf{E:} exact singleton perturbation followed by per-token decoupling. ``Direction'' indicates whether the score is read as ``keep large'' (retention score) or ``evict small'' (eviction cost). Notation: $\bar{p}_{j} = \tfrac{1}{|\Q|}\sum_{t\in\Q}p_{j}^{(t)}$; $\mu(\K) = \tfrac{1}{n}\sum_{i}\k_{i}$; $s$, $w$ are the StreamingLLM sink and recent-window sizes; $\kappa$ is the SnapKV pooling kernel size; $\gamma$ is CAKE's variance weight; and $k^{l}$ is PyramidKV's per-layer budget.}
\label{tab:prior-objectives}
\resizebox{\textwidth}{!}{%
\begin{tabular}{l c l l c}
\toprule
\textbf{Method} & \textbf{Type} & \textbf{Per-token score} & \textbf{Direction} & \textbf{Reference}\\
\midrule
StreamingLLM~\cite{xiao2023efficient} & H & $\1[j\le s] + \1[j > n - w]$ & keep large (mask) & \S3.2 \\
KeyDiff~\cite{park2025keydiff} & H & $-\,\mathrm{CosSim}\!\bigl(\k_{j},\,\mu(\K)\bigr)$ & keep large & Eq.~8 \\
SnapKV~\cite{li2024snapkv} & H & $\mathrm{MaxPool1d}_{\kappa}\!\bigl(\sum_{t\in \Q} p_{j}^{(t)}\bigr)$ & keep large & Eq.~2--3 \\
PyramidKV~\cite{cai2024pyramidkv} & H & SnapKV per-token; layer-asym.\ budget $k^{l}$ & keep large & Eq.~1, 3 \\
CAKE~\cite{qin2025cake} & H & $\mathrm{Mean}(\A[-w{:},j]) + \gamma\,\mathrm{Var}(\A[-w{:},j])$ & keep large & Eq.~11 \\
CriticalKV~\cite{feng2025identify} & B & $(\bar{p}_{j} + \epsilon)\,\|(\V\W_{O})_{j,:}\|_{1}$ & keep large & Alg.~1; Thm.~3.5 \\
AnDPro~\cite{geng2025accurate} & R & $\sum_{t\in \Q} p_{j}^{(t)}\,\inner{\a^{(t)},\,\v_{j}}$ & keep large & Eq.~15 \\
\rowcolor{gray!30}
\textbf{DropKV} (ours) & E & $\sum_{t\in \Q} \bigl(p_{j}^{(t)}/(1-p_{j}^{(t)})\bigr)^{\!2}\,\|\a^{(t)} - \v_{j}\|_{2}^{2}$ & evict small & Lem.~\ref{thm:exact} \\
\bottomrule
\end{tabular}%
}
\end{table}

\paragraph{Value-aware surrogates.} CriticalKV minimizes an explicit upper bound $\widehat{\theta}$ on the $\ell_{1}$ analog of $\|\Delta(\J)\|_2$ after the per-head output projection $\W_{O}$ (\cite{feng2025identify}, Theorems~3.3 and~3.5); the listed score is the stage-2 criterion of a two-stage selection in which a fraction $\alpha$ of the cache is filled by raw $\bar{p}_{j}$ first and the remainder by the composite score, with $\alpha\!=\!0.5$ in the paper's experiments (the listed default in Algorithm~1 is $\alpha\!=\!0.25$). AnDPro starts from the value-space reconstruction problem $\min_{\z\in\{0,1\}^{n}}\tfrac{1}{2}\|\y - \tfrac{\sum_{i}z_{i}a_{i}\v_{i}}{\sum_{i}z_{i}a_{i}}\|_{2}^{2}$, relaxes the binary mask to $\beta_{i}\!\in\![0, 1/s]$ with a LASSO penalty (Problem~P1 of \cite{geng2025accurate}, \S3.2), and uses the KKT screening condition with anchor $\theta\!=\!\y^{(t)}$ to derive the per-token retention score $p_{j}^{(t)}\inner{\a^{(t)},\v_{j}}$ (Eq.~13--15). DropKV instead uses the \emph{exact} per-token eviction perturbation $\|\Delta(\{j\})\|_{2} = \tfrac{p_{j}}{1-p_{j}}\|\a - \v_{j}\|_{2}$ from Lemma~\ref{thm:exact} as the per-token cost and decouples the multi-token objective into a window-summed sum of squared tokens; the decoupling is justified by the cone condition (Theorem~\ref{thm:optimal}), which we verify empirically at deployment scale (Appendix~\ref{app:assumption-verification}).

\paragraph{SnapKV as the attention-only baseline.} Since all methods use the same observation window $w{=}8$ and pooling kernel $\kappa{=}11$ (\S\ref{sec:experiments}), the SnapKV vs.\ DropKV gap in Table~\ref{tab:ruler_128k_0.05} directly isolates the contribution of the value-residual factor $\|\a^{(t)}-\v_j\|_2$ over attention-weighted scoring: $+2.43$, $+7.68$, and $+7.09$ average RULER points on Llama-3.1-8B, Mistral-7B-512K, and Qwen2.5-7B-1M at $n{=}128\mathrm{K}$, $5\%$ budget. The two scores differ only in pool order (SnapKV pools after summing attention; DropKV pools after summing the squared score), which is negligible at the small attention weights typical of the eviction pool.

\paragraph{Pre- vs.\ post-$\W_O$ scoring.} DropKV scores tokens by the pre-projection perturbation $\|\a^{(t)}-\v_j\|_2$ rather than the residual-stream perturbation $\|\W_O^{(h)}(\a^{(t)}-\v_j)\|_2$ that the rest of the network actually receives. By submultiplicativity, $\|\W_O^{(h)}\delta\|_2 \le \|\W_O^{(h)}\|_{\mathrm{op}}\|\delta\|_2$; when the head projection is well-conditioned on the perturbation directions, the pre- and post-projection objectives differ by at most the corresponding conditioning factor, leaving the form of Theorem~\ref{thm:optimal} unchanged while changing only this constant. Empirically, the closest baseline that explicitly uses $\W_O$-projected value representations is CriticalKV~\cite{feng2025identify}, whose stage-2 score ranks tokens by attention weight times the $L_1$ norm of $\W_O$-projected values; while this comparison also changes the norm, attention weighting, and selection procedure, DropKV --- which omits the projection --- outperforms CriticalKV at $n{=}128\mathrm{K}$ and $5\%$ budget by $+3.28$, $+7.34$, and $+6.34$ average RULER points on Llama-3.1-8B, Mistral-7B-512K, and Qwen2.5-7B-1M, respectively (Table~\ref{tab:ruler_128k_0.05}), suggesting explicit output projection is not necessary at the score level.

\paragraph{DropKV vs.\ AnDPro at $\v_{j} \approx \a$.} DropKV and AnDPro target the same value-space output but their per-token surrogates disagree in sign within a structurally important regime. By Lemma~\ref{thm:exact}, removing a single token $j$ shifts the output by $\Delta(\{j\}) = \tfrac{p_{j}}{1-p_{j}}(\a - \v_{j})$, so a token with $\v_{j} \approx \a$ produces a near-zero shift and is therefore safely evictable. DropKV correctly assigns it a near-zero eviction cost. AnDPro's KKT-derived score $p_{j}\inner{\a,\v_{j}}$, in contrast, is large precisely when $\v_{j}$ aligns with $\a$, so AnDPro \emph{retains} the same token. The two surrogates approximate the same combinatorial objective $\|\Delta(\J)\|_2$ but disagree pointwise at strong $\v_{j}$-$\a$ alignment, which is the regime projection-based scoring is designed to reward. On RULER at the $5\%$ budget, DropKV improves average accuracy over AnDPro by $+2.35$, $+3.87$, and $+4.03$ points on Llama-3.1-8B-Instruct, Qwen2.5-7B-1M, and MegaBeam-Mistral-7B-512K respectively (Table~\ref{tab:ruler_128k_0.05}), consistent with the per-token perturbation surrogate being the more faithful one.

\paragraph{Linear vs.\ quadratic attention weighting.} CriticalKV and AnDPro weight scores by $p_{j}$ linearly; DropKV uses $(p_{j}/(1-p_{j}))^{2}$. At deployment-scale eviction-pool weights $p_{j} \sim 7.6\!\times\!10^{-6}$ (uniform $1/n$ at $n = 128\mathrm{K}$), the $1/(1-p_{j})$ correction contributes at most $\approx\!1.5\!\times\!10^{-5}$ relative deviation; the substantive gap between DropKV and the prior value-aware surrogates is therefore between linear and quadratic in $p_{j}$, which polarizes DropKV's eviction cost toward the few less-tiny attention weights when they are present in the candidate pool $\L$.

\section{Theoretical Analysis}\label{sec:proof}

\subsection{NP-hardness of Exact Eviction Minimization}\label{app:nphard}

Even under the small-weights regime in Assumption~\ref{ass:small-p}, exact
minimization of $F(\J)=\|\Delta(\J)\|_2$ is NP-hard. The obstacle is the
numerator $\|\sum_{j\in\J} p_j\r_j\|_2$, which is the norm of a sum couples all elements of $\J$, so the objective is not decomposable over individual
tokens.

To see this, consider the scalar case $d=1$. Given an instance of
\textbf{Balanced Partition} \cite{hartmanis1982computers}, namely $2m$ positive integers
$a_1,\dots,a_{2m}$ with total sum $A$, decide whether they can be split into
two equal-size groups of equal sum. Set $n=2m$, $k=m$, $p_j=1/n$ for all $j\in[n]$,
and $\v_j=a_j $, $\a=A/n$, so that
$\r_j=\a-\v_j=A/n-a_j$. Then, for any feasible $|\J|=k$,
\[
\sum_{j\in\J}p_j\r_j
=\frac{1}{n}\sum_{j\in\J}(A/n-a_j)
=\frac{1}{n}\left(\frac{kA}{n}-\sum_{j\in\J}a_j\right)
=\frac{1}{n}\left(\frac{A}{2}-\sum_{j\in\J}a_j\right).
\]
This term vanishes if and only if $\sum_{j\in\J}a_j=A/2$, i.e., if and
only if the balanced partition exists. Moreover, $P_\J=k/n=1/2$ is constant
over all feasible $\J$, so the denominator $1-P_\J$ does not change the
decision. Thus deciding whether $\min_{|\J|=k}F(\J)=0$ is at least as hard as
\textbf{Balanced Partition}, which is NP-complete \cite{hartmanis1982computers}. The construction also
satisfies Assumption~\ref{ass:small-p}: taking $\varepsilon=1/n$ gives
$k\varepsilon=1/2<1$.

\subsection{Main Proofs}
\begin{customlemma}{1}[Perturbation]
    Suppose a set of tokens indexed by $\mathcal{J} \subset [n]$ is evicted from the full KV cache, then the perturbation to the original attention output $\a$ associated with the query $\q$ is
    $$
    \Delta(\mathcal{J}) = \frac{\sum_{i \in \mathcal{J}} p_i (\a-\v_i)}{1 - P_{\mathcal{J}}},
    $$
    where $P_\mathcal{J} := \sum_{i \in \mathcal{J}} p_i$ denotes the total attention weights associated with the evicted tokens. In particular, for $\mathcal{J}=\{j\}$, we have the perturbation
    $\Delta(\{j\}) = \frac{p_j}{1-p_j}\,(\a-\v_j)$ for per-token eviction.
\end{customlemma}

\begin{proof}[Proof of Lemma~\ref{thm:exact}]
    Let $\a_{-\mathcal{J}}$ be the attention output computed after removing the tokens in $\mathcal{J}$ from the KV cache. Because the softmax probabilities must be re-normalized over the remaining tokens. Specifically, $\forall \, i\notin \mathcal{J}$, it is easy to show that the associated re-normalized probability is given by $p^\prime_i = \frac{p_i}{\sum_{j \notin \mathcal{J}} p_j}$. 
    
    Therefore, this updated attention output is given by
  $$\a_{-\mathcal{J}} = \sum_{i\notin \mathcal{J}} p_i^\prime \v_i = \frac{\sum_{i \notin \mathcal{J}} p_i \v_i}{\sum_{j \notin \mathcal{J}} p_j} = \frac{\a-\sum_{i\in \mathcal{J}} p_i \v_i}{1- \sum_{i\in \mathcal{J}} p_i} = \frac{\a-\sum_{i\in \mathcal{J}} p_i \v_i}{1-P_\mathcal{J}},
  $$
  where we used $\sum_{i=1}^n p_i = 1$ and $\a = \sum_{i=1}^n p_i\v_i$.
Thus,
\[
\Delta(\mathcal{J}) = \a_{-\J}-\a
= \frac{\a-\sum_{i\in \mathcal{J}} p_i v_i}{1-P_\mathcal{J}}-\a
= \frac{P_\mathcal{J} \a -\sum_{i\in \mathcal{J}} p_i \v_i}{1-P_\mathcal{J}}
= \frac{\sum_{i\in \mathcal{J}} p_i(\a-\v_i)}{1-P_\mathcal{J}}.
\]
\end{proof}

\begin{customthm}{1}[Near-Optimality]
Let $\J^*=\arg\min\{F(\J) := \|\Delta(\J)\|_2:\J\subseteq \L,\ |\J|=k\}$ be the optimal eviction set and
$\hat{\J}=\arg\mathrm{topk}_{j\in\L}^{-} \left(\|\Delta(\{j\})\|_2, k\right)$ the proxy.
Under Assumptions~\ref{ass:small-p} and \ref{ass:cone}, we have
\[
F(\hat{\J})
\le \frac{1}{c(1-\varepsilon)(1-k\varepsilon)}\,F(\J^*).
\]
\end{customthm}

\begin{proof}[Proof of Theorem~\ref{thm:optimal}]
For any $\J\subseteq \L$ with $|\J|=k$, we first define an auxiliary objective function:
$$
G(\J): = \sum_{j\in\J} \|\Delta(\{j\})\|_2 = \sum_{j\in \J}\frac{p_j}{1-p_j}\|\r_j\|_2. 
$$
Then, by the definition of $\hat{\J}:=\arg\mathrm{topk}^{-}_{j\in\L}(\|\Delta(\{j\})\|_2, k)$, it is easy to see that $G(\J)\geq G(\hat{\J})$, $\forall \, \J\subseteq\L$ with $|\J|=k$. That is, $\hat{\J}$ minimizes $G$. Moreover, using the triangle inequality, we have
\[
 G(\J) = \sum_{j\in \J}\frac{p_j}{1-p_j}\|\r_j\|_2  \geq \sum_{j\in \J} p_j\|\r_j\|_2 \geq \Big\|\sum_{j\in \J} p_j \r_j\Big\|_2.
\]
Hence, by Assumption~\ref{ass:small-p}, $P_\J \leq k\varepsilon$, we have
\begin{equation}\label{eq:ftog}
 F(\J)=\frac{\big\|\sum_{j\in \J} p_j \r_j\big\|_2}{1-P_\J}
\le \frac{G(\J)}{1-P_\J}
\le \frac{G(\J)}{1-k\varepsilon}.   
\end{equation}
On the other hand, since $\u^\top \z\leq \|\u\|_2\|\z\|_2 = \|\z\|_2$ for all $\z\in\R^d$, and by Assumption~\ref{ass:cone}, $\u^\top \r_j \;\ge\; c \|\r_j\|_2$ for all $j\in\J$, we have
\begin{equation*}
   \Big\|\sum_{j\in \J} p_j \r_j\Big\|_2
\ge \u^\top\sum_{j\in \J}p_j\r_j
\ge c\sum_{j\in \J} p_j\|\r_j\|_2. 
\end{equation*}
Then, since $p_j\le\varepsilon$, we have
\begin{equation*}
    G(\J)=\sum_{j\in \J}\frac{p_j}{1-p_j}\|\r_j\|_2 \le \frac{1}{1-\varepsilon}\sum_{j\in \J} p_j\|\r_j\|_2\leq \frac{1}{c(1-\varepsilon)}\Big\|\sum_{j\in \J} p_j \r_j\Big\|_2.
\end{equation*}
Therefore,
\begin{equation}\label{eq:gtof}
 F(\J) =\frac{\big\|\sum_{j\in \J} p_j \r_j\big\|_2}{1-P_\J} \ge\Big\|\sum_{j\in \J} p_j \r_j\Big\|_2 \ge c(1-\varepsilon) G(\J).   
\end{equation}
Note that $G(\hat{\J})\le G(\J^*)$, we have
$$
F(\J^*)\geq c(1-\varepsilon)G(\J^*)\geq c(1-\varepsilon)G(\hat{\J})\geq c(1-\varepsilon)(1-k\varepsilon)F(\hat{\J}),
$$
where we used (\ref{eq:gtof}) in the first inequality and (\ref{eq:ftog}) in the last one. This completes the proof.
\end{proof}

\paragraph{Extension to the multi-query score.}
Theorem~\ref{thm:optimal} concerns a single query, whereas DropKV sums squared perturbations over the observation window $\Q=\{\q^{(t)}\}_{t=1}^w$ in~\eqref{eq:DropKV}. For each $t\in[w]$, let $p^{(t)}_j$, $\a^{(t)}$, $\r^{(t)}_j:=\a^{(t)}-\v_j$, and $\Delta^{(t)}(\J)$ denote the quantities of Lemma~\ref{thm:exact} for query $\q^{(t)}$. Let $\L$ be a candidate pool visible to every query in $\Q$. Define
$$
F_\Sigma(\J):=\sum_{t=1}^w\big\|\Delta^{(t)}(\J)\big\|_2,
\qquad
\hat{\J}_w:=\arg\mathrm{topk}^{-}_{j\in\L}\left(\mathrm{score}_j,k\right).
$$

\begin{prop}[Multi-Query Near-Optimality]\label{prop:multiquery}
Suppose Assumptions~\ref{ass:small-p} and~\ref{ass:cone} hold for every query $\q^{(t)}$, $t\in[w]$, with common constants $\varepsilon$ and $c$ satisfying $k\varepsilon<1$. The cone directions $\u_t$ may differ across queries. Then
$$
F_\Sigma(\hat{\J}_w)\;\le\;\frac{\sqrt{w}}{c(1-\varepsilon)(1-k\varepsilon)}\,\min_{\J\subseteq\L,\ |\J|=k}F_\Sigma(\J).
$$
\end{prop}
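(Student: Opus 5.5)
We mirror the proof of Theorem~\ref{thm:optimal}, applying its two sandwich inequalities query by query and then stitching the window together with Cauchy--Schwarz. For each $t\in[w]$ write $g^{(t)}_j:=\|\Delta^{(t)}(\{j\})\|_2=\frac{p^{(t)}_j}{1-p^{(t)}_j}\|\r^{(t)}_j\|_2$, so that $\mathrm{score}_j=\sum_t (g^{(t)}_j)^2$. Let $G^{(t)}(\J):=\sum_{j\in\J}g^{(t)}_j$ be the per-query auxiliary objective.

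The first step is to reuse \eqref{eq:ftog} and \eqref{eq:gtof} verbatim for each query. This is legitimate because Assumptions~\ref{ass:small-p} and~\ref{ass:cone} hold for every $t$, with the cone direction $\u_t$ allowed to depend on $t$; the direction enters only inside the per-query argument. We obtain, for all feasible $\J\subseteq\L$,
\[
c(1-\varepsilon)\,G^{(t)}(\J)\le \|\Delta^{(t)}(\J)\|_2\le \frac{G^{(t)}(\J)}{1-k\varepsilon}.
\]
Summing over $t$ gives
\[
c(1-\varepsilon)\,G_\Sigma(\J)\le F_\Sigma(\J)\le G_\Sigma(\J)/(1-k\varepsilon),
\qquad G_\Sigma(\J):=\sum_t G^{(t)}(\J)=\sum_{j\in\J}\sum_t g^{(t)}_j.
\]

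The main obstacle is that $\hat{\J}_w$ minimizes $\sum_{j\in\J}\|g_j\|_2$ rather than $G_\Sigma(\J)=\sum_{j\in\J}\|g_j\|_1$. Here $g_j:=(g^{(1)}_j,\dots,g^{(w)}_j)\in\R^w$, and the ranking by $\mathrm{score}_j=\|g_j\|_2^2$ coincides with the ranking by $\|g_j\|_2$. We bridge the two norms with $\|x\|_2\le\|x\|_1\le\sqrt{w}\,\|x\|_2$ for $x\in\R^w_{\ge0}$. Set $H(\J):=\sum_{j\in\J}\|g_j\|_2$, which $\hat{\J}_w$ minimizes over feasible sets because it is a sum of per-token terms. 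Then for $\J^\star_\Sigma:=\arg\min F_\Sigma$,
\[
G_\Sigma(\hat{\J}_w)\le\sqrt{w}\,H(\hat{\J}_w)\le \sqrt{w}\,H(\J^\star_\Sigma)\le\sqrt{w}\,G_\Sigma(\J^\star_\Sigma).
\]
This is where the $\sqrt{w}$ factor arises, and it is the only place the squared-score choice costs anything.

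To conclude, chain the inequalities:
\[
F_\Sigma(\hat{\J}_w)\le \frac{G_\Sigma(\hat{\J}_w)}{1-k\varepsilon}\le \frac{\sqrt{w}\,G_\Sigma(\J^\star_\Sigma)}{1-k\varepsilon}\le \frac{\sqrt{w}}{c(1-\varepsilon)(1-k\varepsilon)}F_\Sigma(\J^\star_\Sigma),
\]
which is the claimed bound. We should also check that restricting the top-$k$ to $\L$ matches the definition of $\hat{\J}_w$, and that $g^{(t)}_j\ge0$, so that the norm comparisons apply.
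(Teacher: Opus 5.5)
Your proposal is correct and follows the paper's proof essentially step for step. You apply \eqref{eq:ftog} and \eqref{eq:gtof} per query with query-dependent directions $\u_t$ and sum to get $c(1-\varepsilon)G_\Sigma \le F_\Sigma \le G_\Sigma/(1-k\varepsilon)$, then use $\|x\|_2\le\|x\|_1\le\sqrt{w}\,\|x\|_2$ on the per-token vectors to bridge the $\ell_2$-ranked selection and $G_\Sigma$ at the cost of $\sqrt{w}$.
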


\begin{proof}[Proof of Proposition~\ref{prop:multiquery}]
Let $g^{(t)}_j:=\|\Delta^{(t)}(\{j\})\|_2$, $\gamma_j:=(g^{(1)}_j,\dots,g^{(w)}_j)\in\R^w$, and $G_\Sigma(\J):=\sum_{j\in\J}\|\gamma_j\|_1 = \sum_{t=1}^w \sum_{j\in\J} \|\Delta^{(t)}(\{j\})\|_2$ with $|\J| = k$. Apply~\eqref{eq:ftog} and~\eqref{eq:gtof} to each query $t$, using direction $\u_t$, and sum over $t$. This gives, for every $\J\subseteq\L$ with $|\J|=k$,
\begin{equation}\label{eq:mq-sandwich}
c(1-\varepsilon)\,G_\Sigma(\J)\;\le\;F_\Sigma(\J)\;\le\;\frac{G_\Sigma(\J)}{1-k\varepsilon}.
\end{equation}
Since $\mathrm{score}_j=\|\gamma_j\|_2^2= \sum_{t=1}^w \|\Delta^{(t)}(\{j\})\|_2^2$, the set $\hat{\J}_w$ minimizes $\sum_{j\in\J}\|\gamma_j\|_2$ over all $k$-subsets of $\L$. Combining this with $\|\gamma\|_2\le\|\gamma\|_1\le\sqrt{w}\,\|\gamma\|_2$ gives
$$
G_\Sigma(\hat{\J}_w)\le\sqrt{w}\sum_{j\in\hat{\J}_w}\|\gamma_j\|_2\le\sqrt{w}\sum_{j\in\J}\|\gamma_j\|_2\le\sqrt{w}\,G_\Sigma(\J).
$$
Take $\J=\J^*_\Sigma\in\arg\min_{\J\subseteq\L,|\J|=k}F_\Sigma(\J)$ and combine with~\eqref{eq:mq-sandwich}:
$$
F_\Sigma(\hat{\J}_w)\le\frac{G_\Sigma(\hat{\J}_w)}{1-k\varepsilon}
\le\frac{\sqrt{w}\,G_\Sigma(\J^*_\Sigma)}{1-k\varepsilon}
\le\frac{\sqrt{w}}{c(1-\varepsilon)(1-k\varepsilon)}\,F_\Sigma(\J^*_\Sigma).
$$
\end{proof}

For $w=1$, Proposition~\ref{prop:multiquery} reduces to Theorem~\ref{thm:optimal}. Under GQA, scores are averaged over the $\mathfrak{g}$ query heads that share a KV head. The same argument applies to the resulting $\mathfrak{g}w$ (query head, window position) pairs and gives the factor $\sqrt{\mathfrak{g}w}$. Replacing Assumption~\ref{ass:cone} by Assumption~\ref{ass:approx_cone} for each query adds $(1+c)\sum_t\mu^{(t)}$ to $F_\Sigma(\J^*_\Sigma)$ inside the bound, as in Theorem~\ref{thm:robust_optimality}. The analysis concerns $\mathrm{score}_j$ before the max-pooling step of Algorithm~\ref{alg:dropkv}.

\subsection{Numerical Verification of Theorem Assumptions}
\label{app:assumption-verification}

This appendix reports two complementary empirical analyses of Theorem~\ref{thm:optimal}: (i) verification of the two assumptions and the resulting worst-case bound factor on real $n{=}128\mathrm{K}$ prefills, and (ii) the proxy's realized approximation ratio measured by brute-force enumeration on smaller-scale ($n{=}2\mathrm{K}$) prefills, where the full per-query attention vector fits in memory under the eager backend. Both analyses cover Llama-3.1-8B-Instruct, MegaBeam-Mistral-7B-512K, and Qwen2.5-7B-Instruct-1M; for each model, we sample (layer, head, query-position) triples and extract the attention weights $\p$ and residuals $\r_j:=\a-\v_j$. The candidate pool $\L$ is the low-attention eviction pool, excluding the final observation-window tokens.

\paragraph{Assumption~\ref{ass:small-p}: small eviction probabilities.}
For each triple, let $\varepsilon=\max_{j\in\L}p_j$. Table~\ref{tab:assumption-small-p} shows that the typical low-attention candidate pool is in the small-weight regime: the median $\varepsilon$ is $2.6$--$4.3\times10^{-6}$ and the 95th percentile is at most ${\approx}1.2\times10^{-4}$ across all three models. The maximum values are larger because a small number of early-layer/head triples contain isolated high-probability tokens; therefore we also report the finite-bound coverage below instead of claiming the union-bound denominator $1-k\varepsilon$ is positive uniformly over all triples.

\begin{table}[h]
\centering
\small
\caption{Verification of Assumption~\ref{ass:small-p} at $n=128\mathrm{K}$: distribution of $\varepsilon=\max_{j\in\L}p_j$ across sampled triples.}
\label{tab:assumption-small-p}
\begin{tabular}{lrrrrr}
\hline
Model & triples & median $\varepsilon$ & p95 $\varepsilon$ & max $\varepsilon$ & $\varepsilon\le 0.005$ \\
\hline
Llama-3.1-8B-Instruct      & $32\mathrm{K}$ & $3.1{\times}10^{-6}$ & $1.1{\times}10^{-4}$ & $0.146$ & $97.89\%$ \\
MegaBeam-Mistral-7B-512K   & $32\mathrm{K}$ & $2.6{\times}10^{-6}$ & $9.7{\times}10^{-5}$ & $0.222$ & $97.19\%$ \\
Qwen2.5-7B-Instruct-1M     & $25{,}088$ & $4.3{\times}10^{-6}$ & $1.2{\times}10^{-4}$ & $0.176$ & $97.52\%$ \\
\hline
\end{tabular}
\end{table}

\paragraph{Assumption~\ref{ass:cone}: cone condition.}
For each triple, we compute
\[
c^\star=\max_{\|\u\|=1}\min_{i\in\L}\frac{\u^\top\r_i}{\|\r_i\|_2},
\]
so Assumption~\ref{ass:cone} holds exactly when $c^\star>0$. Table~\ref{tab:assumption-cone-dist} reports the distribution of $c^\star$, and Table~\ref{tab:assumption-cone-threshold} reports the fraction of triples exceeding several positive thresholds. The strict cone condition holds on $91.21\%$, $91.12\%$, and $76.45\%$ of triples for Llama, Mistral, and Qwen2.5, respectively.

\begin{table}[h]
\centering
\small
\caption{Cone constant $c^\star$ distribution across sampled triples at $n=128\mathrm{K}$.}
\label{tab:assumption-cone-dist}
\begin{tabular}{lrrrrrr}
\hline
Model & min & p05 & \textbf{p50} & mean & p95 & max \\
\hline
Llama-3.1-8B-Instruct             & $-0.325$ & $-0.148$ & $\mathbf{+0.107}$ & $+0.120$ & $+0.389$ & $+0.882$ \\
MegaBeam-Mistral-7B-512K          & $-0.334$ & $-0.131$ & $\mathbf{+0.094}$ & $+0.110$ & $+0.368$ & $+0.849$ \\
Qwen2.5-7B-Instruct-1M            & $-0.567$ & $-0.196$ & $\mathbf{+0.074}$ & $+0.088$ & $+0.423$ & $+0.845$ \\
\hline
\end{tabular}
\end{table}

\begin{table}[h]
\centering
\small
\caption{Fraction of sampled triples satisfying $c^\star>\tau$.}
\label{tab:assumption-cone-threshold}
\begin{tabular}{lrrrrrr}
\hline
Model & $\tau=0.00$ & $0.05$ & $0.10$ & $0.20$ & $0.30$ & $0.50$ \\
\hline
Llama-3.1-8B-Instruct             & $91.21\%$ & $74.16\%$ & $53.01\%$ & $21.43\%$ & $8.93\%$ & $1.29\%$ \\
MegaBeam-Mistral-7B-512K          & $91.12\%$ & $72.37\%$ & $46.85\%$ & $17.64\%$ & $8.49\%$ & $0.99\%$ \\
Qwen2.5-7B-Instruct-1M            & $76.45\%$ & $57.29\%$ & $42.99\%$ & $22.50\%$ & $12.33\%$ & $2.14\%$ \\
\hline
\end{tabular}
\end{table}

\paragraph{Worst-case bound factor.}
Combining the measured $(c^\star,\varepsilon)$ values with the union-bound denominator $1-k\varepsilon$ from Theorem~\ref{thm:optimal} yields the conservative multiplicative factor in Table~\ref{tab:assumption-bound-factor}. The factor is finite only when $c^\star>0$ \emph{and} $1-k\varepsilon>0$; over this finite subset, the median is $10.4$, $11.8$, and $11.3$ across the three models, with heavy right tails (p95 between $115$ and $198$).

\begin{table}[h]
\centering
\small
\caption{Empirical distribution of the conservative multiplicative factor in Theorem~\ref{thm:optimal}, computed as $1/[c^\star(1-\varepsilon)(1-k\varepsilon)]$ from the measured assumption constants. ``finite'' is the fraction of sampled triples with $c^\star>0$ and $1-k\varepsilon>0$; p50/p95/max are computed over this finite subset.}
\label{tab:assumption-bound-factor}
\begin{tabular}{lrrrr}
\hline
Model & finite & p50 & p95 & max \\
\hline
Llama-3.1-8B-Instruct       & $89.13\%$ & $10.42$ & $122$ & $8.2{\times}10^{4}$ \\
\hline
MegaBeam-Mistral-7B-512K    & $87.53\%$ & $11.83$ & $115$ & $1.5{\times}10^{6}$ \\
\hline
Qwen2.5-7B-Instruct-1M      & $73.43\%$ & $11.32$ & $198$ & $2.2{\times}10^{5}$ \\
\hline
\end{tabular}
\end{table}

\paragraph{Realized approximation ratio.}
Beyond the worst-case bound characterized above, we directly measure the proxy's actual approximation ratio. For each sampled (layer, head, query-position) triple, we draw $n_{\mathrm{small}}{=}20$ keys from the candidate pool $\mathcal{L}$ and brute-force enumerate all $\binom{20}{k}$ subsets ($k\in\{10, 18\}$) to find the joint optimum $F(\mathcal{J}^*) := \min_{\mathcal{J},|\mathcal{J}|=k}\|\sum_{i\in\mathcal{J}}p_i(\a-\v_i)\|_2/(1-P_\mathcal{J})$. Four stratifications target progressively harder regimes: (i) uniform random sampling from $\mathcal{L}$; (ii) the \emph{low-attention tail}, the bottom-$n_{\mathrm{small}}$ entries by $p_j$; (iii) the \emph{near-proxy-threshold} band, entries closest to the median DropKV score within $\mathcal{L}$, where the proxy is most likely to misrank; and (iv) the \emph{rank-disagreement} band, entries where the attention-only and DropKV rankings diverge most, the regime where the value-residual factor matters most.

Across $1{,}200$ records per model on Llama-3.1-8B-Instruct, MegaBeam-Mistral-7B-512K, and Qwen2.5-7B-Instruct-1M, the DropKV proxy ratio $F(\hat{\mathcal{J}})/F(\mathcal{J}^*)$ has median $\le 1.16$ and 95th percentile $\le 1.43$ in every (model$\,\times\,$stratification$\,\times\,k$) cell, with the worst single record at $1.82$ (Table~\ref{tab:exact-optimal-gap}). Compared with the median bound factor of $10$--$12$ in Table~\ref{tab:assumption-bound-factor}, the empirical realized ratio is roughly an order of magnitude tighter; the looseness of Theorem~\ref{thm:optimal}'s constant therefore reflects worst-case mathematical conservatism rather than slack in the algorithm itself. DropKV and raw-attention selection behave nearly identically on uniform-random and low-attention-tail strata (medians within $0.02$), but diverge on the rank-disagreement stratification (DropKV median $1.03$ vs.\ attention-only median $1.09$--$1.15$; 95th-percentile $1.17$--$1.26$ vs.\ $1.49$--$1.58$ across the three models), confirming that the value-residual factor $\|\a-\v_j\|_2$ is decisive precisely where attention rank alone is ambiguous. Random selection is consistently dominated, with median $1.16$--$2.93$ at $k{=}10$.

\begin{table}[h]
\centering
\small
\caption{Empirical approximation gap: DropKV proxy ratio $F(\hat{\mathcal{J}})/F(\mathcal{J}^*)$ at $n_{\mathrm{small}}{=}20$, broken down by stratification and $k$. Median (p50) and 95th percentile (p95) computed over $150$ triples per (model, stratification, $k$) cell. Compare against the worst-case bound (median $10$--$12$) in Table~\ref{tab:assumption-bound-factor}.}
\label{tab:exact-optimal-gap}
\begin{tabular}{l|cc|cc|cc}
\hline
                       & \multicolumn{2}{c|}{Llama-3.1-8B} & \multicolumn{2}{c|}{Mistral-7B-512K} & \multicolumn{2}{c}{Qwen2.5-7B-1M} \\
Stratification         & p50 & p95 & p50 & p95 & p50 & p95 \\
\hline
\multicolumn{7}{l}{$k=10$ (50\% budget)} \\
random                 & $1.000$ & $1.095$ & $1.000$ & $1.107$ & $1.000$ & $1.107$ \\
low-attention tail     & $1.036$ & $1.164$ & $1.029$ & $1.130$ & $1.035$ & $1.136$ \\
near-threshold         & $1.140$ & $1.278$ & $1.151$ & $1.344$ & $1.160$ & $1.430$ \\
rank-disagreement      & $1.034$ & $1.169$ & $1.029$ & $1.185$ & $1.034$ & $1.262$ \\
\hline
\multicolumn{7}{l}{$k=18$ (10\% budget, matches the deployment budget in \S\ref{sec:experiments})} \\
random                 & $1.000$ & $1.099$ & $1.000$ & $1.093$ & $1.000$ & $1.081$ \\
low-attention tail     & $1.012$ & $1.066$ & $1.012$ & $1.079$ & $1.011$ & $1.073$ \\
near-threshold         & $1.032$ & $1.084$ & $1.032$ & $1.109$ & $1.038$ & $1.109$ \\
rank-disagreement      & $1.007$ & $1.104$ & $1.000$ & $1.085$ & $1.003$ & $1.094$ \\
\hline
\end{tabular}
\end{table}

The verification above is limited to Assumptions~\ref{ass:small-p} and~\ref{ass:cone}, which are the two assumptions used by Theorem~\ref{thm:optimal}. Assumption~\ref{ass:approx_cone} below is a relaxed condition for the additive robustness result; verifying it directly would require estimating the out-of-cone weighted mass parameter $\mu$ for each triple, which we leave outside this numerical check.

\subsection{Further Robustness Analysis}\label{subsec:robust_analysis}
When the candidate pool $\L$ does not strictly satisfy the cone condition as in Assumption~\ref{ass:cone}, we establish near-optimality bound up to an additive error to account for the outlier value residuals $\r_i:=\a-\v_i$. Specifically, we make the following relaxed assumption instead:
\begin{assumption}[Approximate Cone Condition]\label{ass:approx_cone}
    There exist constants $\mu\geq0$, $c\in(0,1]$, and a unit vector $\u\in\R^d$, such that
    $$
    \sum_{j\in\L\setminus\L_{\u,c}} p_j\|\r_j\|_2 \leq \mu,
    $$
    where $\L_{\u,c}:= \{l\in\L: \u^\top \r_l\ge c \|\r_l\|_2\}$ denotes the set of indices whose residuals $\r_l$ reside within a primary alignment cone centered at $\u$.
\end{assumption}

This relaxed assumption imposes that the weighted mass of outlier residuals is upper-bounded by $\mu$. Therefore, the strict cone condition, Assumption~\ref{ass:cone}, is a special case of Assumption~\ref{ass:approx_cone} with $\L_{\u, c} = \L$ and $\mu=0$; See Fig.~\ref{fig:cone_comparison} for an illustration.

\begin{figure}[ht]
\centering
\begin{subfigure}[b]{0.45\textwidth}
\centering
\begin{tikzpicture}[scale=1, >=stealth]
  \fill[blue!8] (0,0) -- (25:4) arc[start angle=25, end angle=-25, radius=4] -- cycle;
  \draw[blue!55, dashed, thick] (0,0) -- (25:4);
  \draw[blue!55, dashed, thick] (0,0) -- (-25:4);
  \draw[->, red!80!black, very thick] (0,0) -- (2.0,0) node[right, font=\small] {$\u$};
  \draw[->, thick] (0,0) -- (5:2.9) node[right, font=\footnotesize] {$\r_1$};
  \draw[->, thick] (0,0) -- (-12:3.3) node[right, font=\footnotesize] {$\r_2$};
  \draw[->, thick] (0,0) -- (23:2.3) node[right, font=\footnotesize] {$\r_3$};
  \draw[->, thick] (0,0) -- (-3:3.6) node[right, font=\footnotesize] {$\r_4$};
  \draw[->, thick] (0,0) -- (-22:2.7) node[below right, font=\footnotesize] {$\r_5$};
  \draw[<->, blue, thick] (1.0,0) arc[start angle=0, end angle=25, radius=1.0];
  \node[font=\scriptsize, blue, anchor=west] at (1.5, 0.55) {$\arccos(c)$};
  \fill[black] (0,0) circle (0.045);
  \node[below left, font=\footnotesize] at (0,0) {$O$};
  \node[red!80!black, very thick, font=\small] at (3.5,0.6) {$\L$};
\end{tikzpicture}
\end{subfigure}
\begin{subfigure}[b]{0.45\textwidth}
\centering
\begin{tikzpicture}[scale=1, >=stealth]
  \fill[blue!8] (0,0) -- (25:4) arc[start angle=25, end angle=-25, radius=4] -- cycle;
  \draw[blue!55, dashed, thick] (0,0) -- (25:4);
  \draw[blue!55, dashed, thick] (0,0) -- (-25:4);
  \draw[->, red!80!black, very thick] (0,0) -- (2.0,0) node[right, font=\small] {$\u$};
  \draw[->, thick] (0,0) -- (5:2.9) node[right, font=\footnotesize] {$\r_1$};
  \draw[->, thick] (0,0) -- (-12:3.3) node[right, font=\footnotesize] {$\r_2$};
  \draw[->, thick] (0,0) -- (23:2.3) node[right, font=\footnotesize] {$\r_3$};
  \draw[->, thick] (0,0) -- (-183:1.8) node[above, font=\footnotesize] {$\r_4$};
  \draw[->, thick] (0,0) -- (-22:2.7) node[below right, font=\footnotesize] {$\r_5$};
  \draw[<->, blue, thick] (1.0,0) arc[start angle=0, end angle=25, radius=1.0];
  \node[font=\scriptsize, blue, anchor=west] at (1.5, 0.55) {$\arccos(c)$};
  \fill[black] (0,0) circle (0.045);
  \node[below left, font=\footnotesize] at (0,0) {$O$};
  \node[red!80!black, very thick, font=\small] at (3.5,0.6) {$\L_{\u,c}$};
\end{tikzpicture}

\end{subfigure}
\caption{\textbf{Left:} Strict Cone Condition (Assumption~\ref{ass:cone}). \textbf{Right:} Approximate Cone Condition (Assumption~\ref{ass:approx_cone}): $\L_{\u,c} = \{1,2,3,5\}$ and $p_4\|\r_4\|_2\leq \mu$.}\label{fig:cone_comparison}
\end{figure}

\begin{theorem}[Robust Near-Optimality]\label{thm:robust_optimality}
Let $\J^*=\arg\min\{F(\J) := \|\Delta(\J)\|_2:\J\subseteq \L,\ |\J|=k\}$ be the optimal eviction set and
$\hat{\J}=\arg\mathrm{topk}_{j\in\L}^{-} \left(\|\Delta(\{j\})\|_2, k\right)$ the proxy.
Under Assumptions~\ref{ass:small-p} and \ref{ass:approx_cone}, we have
$$
F(\widehat{\J})
\;\le\;
\frac{1}{c(1-\varepsilon)(1-k\varepsilon)}\, \Big(F(\J^\star) + (1+c)\mu\Big).
$$
\end{theorem}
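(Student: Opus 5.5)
The plan is to rerun the sandwich argument from the proof of Theorem~\ref{thm:optimal}. The upper half of that argument never used the cone condition, so it carries over unchanged. Only the lower bound on $F(\J^*)$ needs to change, and there the outlier residuals will produce the additive term $(1+c)\mu$.

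\textbf{Upper bound on the proxy.} As before, set $G(\J):=\sum_{j\in\J}\frac{p_j}{1-p_j}\|\r_j\|_2$. By construction $\hat{\J}$ minimizes $G$ over all $k$-subsets of $\L$. Inequality \eqref{eq:ftog} relied only on the triangle inequality and Assumption~\ref{ass:small-p}, so it still holds. It gives
\[
F(\hat{\J})\le \frac{G(\hat{\J})}{1-k\varepsilon}\le \frac{G(\J^*)}{1-k\varepsilon}.
\]
It therefore suffices to show
\[
G(\J^*)\le \frac{F(\J^*)+(1+c)\mu}{c(1-\varepsilon)}.
\]

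\textbf{Lower bound on $F(\J^*)$ via a split.} Partition $\J^*$ into the in-cone part $A:=\J^*\cap\L_{\u,c}$ and the outlier part $B:=\J^*\setminus\L_{\u,c}$. Since $B\subseteq\L\setminus\L_{\u,c}$, Assumption~\ref{ass:approx_cone} gives $\sum_{j\in B}p_j\|\r_j\|_2\le\mu$. Project onto $\u$ and use $1-P_{\J^*}\le1$, then Cauchy--Schwarz:
\[
F(\J^*)\ge\Big\|\sum_{j\in\J^*}p_j\r_j\Big\|_2\ge \u^\top\sum_{j\in A}p_j\r_j+\u^\top\sum_{j\in B}p_j\r_j .
\]
On $A$ the cone inequality bounds the first term below by $c\sum_{j\in A}p_j\|\r_j\|_2$. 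On $B$, using only $\u^\top\r_j\ge-\|\r_j\|_2$, the second term is at least $-\mu$. Rearranging,
\[
\sum_{j\in A}p_j\|\r_j\|_2\le \frac{F(\J^*)+\mu}{c}.
\]

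\textbf{Assembling.} Add back the outlier mass to get
\[
\sum_{j\in\J^*}p_j\|\r_j\|_2\le \frac{F(\J^*)+\mu}{c}+\mu=\frac{F(\J^*)+(1+c)\mu}{c}.
\]
Then use $p_j\le\varepsilon$, as in the original proof, to get $G(\J^*)\le\frac{1}{1-\varepsilon}\sum_{j\in\J^*}p_j\|\r_j\|_2$. Chaining this with the upper bound on $F(\hat{\J})$ yields exactly the stated inequality. As a check, setting $\mu=0$ recovers Theorem~\ref{thm:optimal}.

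The one delicate step is the outlier term. Cone alignment gives no control there, so the crude bound $\u^\top\r_j\ge-\|\r_j\|_2$ is needed. The $\mu$ it costs, together with the $\mu$ added back when passing from $A$ to all of $\J^*$, is precisely what produces the factor $(1+c)$.
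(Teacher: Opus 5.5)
Your proof is correct and follows the paper's argument essentially step for step. It uses the same auxiliary $G$, the same unchanged upper bound $F(\hat{\J})\le G(\hat{\J})/(1-k\varepsilon)$, and the same split of $\u^\top\sum_{j\in\J}p_j\r_j$ into in-cone and outlier parts, with the crude bound $\u^\top\r_j\ge-\|\r_j\|_2$ on the outliers. The only difference is bookkeeping: the paper adds $c\sum_{j\in\J\setminus\L_{\u,c}}p_j\|\r_j\|_2$ to both sides for a general $\J$ before invoking $\mu$, whereas you bound the outlier mass by $\mu$ immediately and add it back afterward, which produces the same $(1+c)\mu$.
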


\begin{proof}
For any $\J\subseteq\L$ with $|\J|=k$, following the proof of Theorem~\ref{thm:optimal}, we define 
$$
G(\J): = \sum_{j\in\J} \|\Delta(\{j\})\|_2 = \sum_{j\in \J}\frac{p_j}{1-p_j}\|\r_j\|_2.
$$
Then, $G(\J)\geq G(\hat{\J})$, $\forall \, \J\subseteq\L$ with $|\J|=k$. Moreover, we have $F(\J)\;\le\;\frac{G(\J)}{1-k\varepsilon}.$

On the other hand, we have
\begin{align*}
\|\sum_{j\in\J} p_j \r_j\|_2 \geq  \u^\top \sum_{j\in\J} p_j \r_j = & \; \u^\top \sum_{j\in\J\cap \L_{\u,c}} p_j \r_j + \u^\top \sum_{j\in\J\setminus \L_{\u,c}} p_j \r_j \\
\ge & \; c\sum_{j\in\J\cap \L_{\u,c}} p_j\|\r_j\|_2 -  \sum_{j\in\J\setminus \L_{\u,c}} p_j  \|\r_j \|_2,
\end{align*}
using $\u^\top \r_j\ge c\|\r_j\|_2$ on $\L_{\u,c}$ and $\|\z\|_2\ge\u^\top \z\ge -\|\z\|_2$ for all $\z\in\R^d$.

Rearranging and adding the term $c\sum_{j\in\J\setminus \L_{\u,c}} p_j  \|\r_j \|_2$ on both sides gives
$$
c\sum_{j\in\J} p_j\|\r_j\|_2
\le \Big\|\sum_{j\in\J} p_j \r_j\Big\|_2 + (1+c)\sum_{j\in\J\setminus \L_{\u,c}} p_j\|\r_j\|_2.
$$
Since $p_j\le\varepsilon$ implies $G(\J)\le \frac{1}{1-\varepsilon}\sum_{j\in\J}p_j\|\r_j\|_2$, we obtain
\begin{align}\label{eq1}
    G(\J) & \le \frac{1}{c(1-\varepsilon)}\Big(\Big\|\sum_{j\in\J} p_j \r_j\Big\|_2 + (1+c)\sum_{j\in\J\setminus \L_{\u,c}} p_j\|\r_j\|_2\Big) \notag \\
    & \le \frac{1}{c(1-\varepsilon)}\Big(F(\J) + (1+c)\sum_{j\in\J\setminus \L_{\u,c}} p_j\|\r_j\|_2\Big) \notag \\
    & \leq \frac{1}{c(1-\varepsilon)}\Big(F(\J) + (1+c) \mu\Big)
\end{align}
where we used $F(\J)=\|\sum_{j\in\J}p_j\r_j\|_2/(1-P_\J)\ge \|\sum_{j\in\J}p_j\r_j\|_2$ and Assumption~\ref{ass:approx_cone}:
$$
\sum_{j\in\J\setminus \L_{\u,c}} p_j\|\r_j\|_2 \leq \sum_{j\in\L\setminus \L_{\u,c}} p_j\|\r_j\|_2\leq \mu.
$$
Applying \eqref{eq1} to $\J=\J^\star$, we have
$$
G(\J^*)\leq \frac{1}{c(1-\varepsilon)}\Big(F(\J^*) + (1+c)\mu\Big)
$$
Finally, the facts $G(\widehat{\J})\le G(\J^\star)$ and $F(\widehat{\J})\le \frac{G(\widehat{\J})}{1-k\varepsilon}$ yield the claim.
\end{proof}

\section{Triton Implementation Details}\label{sec:triton-pseudocode}

This section describes the Triton implementation sketched in
\S\ref{subsec:triton}. For clarity, the notation is written for one
attention head; the actual kernels are launched independently over the
batch-head axis. The notation is head-local; model-specific head layout
is handled before this scoring interface. Queries have shape
$\boldsymbol{Q}\in\R^{B\times H\times W\times D}$, keys and values
have shape $\K,\V\in\R^{B\times H\times L\times D}$, and the output
score tensor has shape $\mathrm{score}\in\R^{B\times H\times L}$. Let
$s:=1/\sqrt{D}$, let $\BLOCKN$ denote the key/value tile size, and let
$\BLOCKK$ denote the inner dimension tile used for dot products.

\paragraph{Auxiliary forward and split-K merge.}
Kernel~A is a standard FlashAttention-style forward pass over the $W$
window queries. For each query $\q^{(t)}$, it computes the attention
output $\a^{(t)}$ and retains the row-wise log-sum-exp
\[
L^{(t)}=\log\sum_{j=1}^{L}\exp\!\left(s\,\k_j^\top \q^{(t)}\right),
\]
which allows Kernel~B to recover $p_j^{(t)}$ without materializing the
attention matrix. When split-K is enabled, Kernel~A writes partial
online-softmax states $(m_r,\ell_r,\acc_{y,r})$ for split $r$, and the
standard merge Kernel~C computes
\[
m^\star=\max_r m_r,\qquad
\ell^\star=\sum_r e^{m_r-m^\star}\ell_r,\qquad
\acc_y^\star=\sum_r e^{m_r-m^\star}\acc_{y,r},
\]
then returns $\a=\acc_y^\star/\ell^\star$ and
$L=m^\star+\log\ell^\star$. These two auxiliary kernels are unchanged
from the usual online-softmax computation except that $L^{(t)}$ is
kept for the scoring pass.

\begin{algorithm}[t]
\caption{Kernel B: DropKV score accumulation}\label{alg:triton-score}
\begin{algorithmic}[1]
\REQUIRE $\q^{(1:W)}\in\R^{W\times D}$, $\K,\V\in\R^{L\times D}$, outputs $\a^{(1:W)}$, Log-Sum-Exp $L^{(1:W)}$
\REQUIRE key tile $\mathcal{T}=\{j_0,\dots,j_0+\BLOCKN-1\}$, scale $s$
\ENSURE $\mathrm{score}_j$ for $j\in\mathcal{T}$
\STATE Load $\k_j,\v_j$ for $j\in\mathcal{T}$; precompute $\|\v_j\|_2^2$
\STATE $\mathrm{score}\gets 0^{\BLOCKN}$
\FOR{$t=1,\dots,W$}
  \STATE $\logits_j\gets s\,\k_j^\top\q^{(t)}$ for all $j\in\mathcal{T}$; apply causal and padding masks
  \STATE $p_j\gets\exp(\logits_j-L^{(t)})$
  \STATE $\tilde p_j\gets\mathrm{cast}_{\texttt{BF16}}(p_j)$ \hfill\COMMENT{matches the \texttt{BF16} reference path}
  \STATE $\omega_j\gets\big(\tilde p_j/(1-\tilde p_j+\epsilon)\big)^2$
  \STATE $d_j^2\gets\|\a^{(t)}\|_2^2+\|\v_j\|_2^2-2\langle\a^{(t)},\v_j\rangle$
  \STATE $\mathrm{score}_j\gets\mathrm{score}_j+\omega_j d_j^2$ for all $j\in\mathcal{T}$
\ENDFOR
\STATE Write $\mathrm{score}_j$ for $j\in\mathcal{T}$
\end{algorithmic}
\end{algorithm}

\paragraph{Launch and numerical guards.}
Kernel~B is launched over $(B\cdot H,\lceil L/\BLOCKN\rceil)$, so each
program owns one contiguous key tile for one batch-head pair. Causal and
padding masks are applied to logits before either the log-sum-exp or the
score is formed; padded query rows may be zero-filled only after being
excluded by the mask. The $\epsilon$ in $(1-p+\epsilon)$ avoids division
near $p\to 1$. The \texttt{BF16} cast of $p$ before squaring is used to match the
\texttt{BF16} PyTorch reference path; without it, precision loss in $(1-p)$ can
saturate the denominator differently and flip top-$k$ decisions on
peak-attention tokens. We verify the Triton and PyTorch paths with a
precision-matching test suite.

\subsection{GPU Profiling of the Triton Scoring Kernels}
\label{sec:gpu-profiling}

We further profile the scoring path with Nsight Compute (\texttt{ncu},
\texttt{--set full}) on an A100-80GB GPU using the Llama-3.1-8B shape
($B{=}1$, $H_q{=}32$, $H_{kv}{=}8$, $D{=}128$, $W{=}8$) in \texttt{BF16}.
The goal of this profiling is to explain the implementation advantage
behind Fig.~\ref{fig:scoring}: DropKV's Triton scorer is not merely a
faster implementation of the same dense computation, but removes the
dominant materialized attention and repeated-KV traffic from the scoring
path.

\paragraph{Full-call advantage over PyTorch.}
Table~\ref{tab:app-ncu-aggregate} compares the matched PyTorch scoring
reference with the Triton scoring pipeline at $n{=}16\mathrm{K}$. Triton is
$15.7{\times}$ faster in this isolated \texttt{ncu} run, launches fewer
kernels, executes $41{\times}$ fewer instructions, and moves $26{\times}$
less HBM traffic. The PyTorch reference has higher aggregate SM throughput
and active-warps only because it creates far more work by materializing
dense $W{\times}L$ attention tensors and repeated K/V tensors. In contrast,
the Triton path streams K/V tiles and accumulates scores directly, which is
the desired behavior for long-context KV scoring.

\begin{table}[t]
\centering
\caption{{Nsight Compute full-call profiling at $n{=}16\mathrm{K}$.}
Both columns aggregate all launches inside the scoring call: time/bytes/
instructions are summed, percentage metrics are time-weighted, and register
or shared-memory footprints are maxima over launches.}
\label{tab:app-ncu-aggregate}
\small
\begin{tabular}{lrr}
\toprule
Metric & PyTorch scorer & Triton scorer \\
\midrule
Runtime (ms) & 3.63 & \textbf{0.23} \\
Speedup & 1.0$\times$ & \textbf{15.7$\times$} \\
Kernels launched & 34 & \textbf{8} \\
\midrule
HBM read (MB) & 2{,}175 & \textbf{138} \\
HBM write (MB) & 1{,}545 & \textbf{5} \\
Mem\% of peak HBM & 50.3 & 30.4 \\
\midrule
SM throughput (\%) & 48.4 & 18.1 \\
Active warps (\%) & 77.3 & 12.8 \\
HMMA pipe (\%) & 4.6 & \textbf{10.1} \\
FMA pipe (\%) & 27.4 & 15.3 \\
Instructions (M) & 851 & \textbf{20.5} \\
\midrule
Regs/thread (max) & 150 & 154 \\
SMEM/block, KB (max) & 99.3 & 37.9 \\
\bottomrule
\end{tabular}
\end{table}

\paragraph{Per-kernel behavior.}
The Triton scoring call consists of three named kernels: Kernel~A computes
the auxiliary FlashAttention-style forward/log-sum-exp, Kernel~B accumulates
DropKV scores, and Kernel~C merges split-K partial states. The named
A/B/C launches in Table~\ref{tab:app-ncu-kernels} sum to $212.0\,\mu$s,
while the aggregate Triton row in Table~\ref{tab:app-ncu-aggregate} is
$231.5\,\mu$s over all eight launches; the remaining auxiliary launches
therefore add only $19.5\,\mu$s. This confirms that the two \texttt{ncu}
views are internally consistent.

At long context, Kernels~A and~B are memory-throughput limited: at
$n{=}16\mathrm{K}$, Kernel~A reaches 49.0\% of peak HBM bandwidth versus
22.7\% SM throughput, while Kernel~B reaches 54.0\% peak HBM bandwidth
versus 39.6\% SM throughput. Kernel~B alone therefore sustains roughly
1.10\,TB/s on A100 HBM, despite the low aggregate active-warps in
Table~\ref{tab:app-ncu-aggregate}. The low active-warps are not evidence of
wasted dense computation; they reflect resource ceilings from registers and
shared memory, while the useful long-context kernels are already limited
primarily by streaming K/V traffic.

\begin{table}[t]
\centering
\caption{{Per-kernel \texttt{ncu} measurements for the Triton scoring
pipeline at $n{=}16\mathrm{K}$.} SM\% and Mem\% are fractions of peak sustained
SM issue and peak HBM bandwidth. Active warps is measured achieved occupancy;
Ceiling is the resource-limited occupancy ceiling.}
\label{tab:app-ncu-kernels}
\small
\resizebox{\linewidth}{!}{%
\begin{tabular}{lrrrrrr}
\toprule
Kernel & Time ($\mu$s) & SM\% & Mem\% & Active warps & Ceiling & Main limiter \\
\midrule
A: forward/LSE & 69.82 & 22.7 & \textbf{49.0} & 11.6\% & 18.75\% & Registers \\
B: score accumulation & 63.52 & 39.6 & \textbf{54.0} & 22.6\% & 25.00\% & SMEM \\
C: split-K merge & 78.66 & 0.5 & 2.3 & 6.3\% & 43.75\% & Reduction overhead \\
\bottomrule
\end{tabular}
}
\end{table}

\paragraph{Why this supports the three-kernel design.}
The profiling identifies the remaining bottleneck as structural rather than
an avoidable PyTorch-style materialization overhead. Kernel~A uses 154
registers per thread and is capped at 12 resident warps/SM; Kernel~B uses
37.9\,KB shared memory per block and is capped at 16 resident warps/SM.
A persistent fused A+B kernel would collapse Kernel~B's parallel grid from
$(B\cdot H_{kv},\lceil L/\BLOCKN\rceil)$ to $(B\cdot H_{kv},)$ after the
forward phase, leaving only eight CTAs for Llama-3.1-8B at $B{=}1$ and
substantially reducing memory-level parallelism. Thus the current split
keeps enough tile-level parallelism for the memory-bound scoring kernel,
while still avoiding the PyTorch reference's dense attention matrix and
repeated-KV tensors. This is the central implementation advantage of DropKV:
it turns an otherwise materialization-heavy scoring formula into a compact,
streaming, bandwidth-efficient Triton pipeline.

\section{Details of RULER and LongBench Benchmarks}
\label{sec:ruler_longbench_info}

{\bf RULER: }
Below is a brief overview of the different tasks in RULER.

\begin{itemize}
	\itemsep0em
	
	\item \textbf{Single NIAH (S-NIAH): } A basic information retrieval task. In this scenario, a keyword sentence, referred to as the ``needle'', is embedded within a lengthy text, called the ``haystack''. The goal is to retrieve the ``needle'' from the context. The ``haystack'' may consist of repetitive, noisy sentences or essays by Paul Graham~\cite{Needleinahaystack}.

	\item \textbf{Multi-keys NIAH (MK-NIAH): } Multiple ``needles'' are inserted into the ``haystack'', but the task requires retrieving only one of them.

	\item \textbf{Multi-values NIAH (MV-NIAH): } Multiple ``needles'' in the form of key-value pairs are placed within the ``haystack''. The task is to retrieve all values associated with a given key.

	\item \textbf{Multi-queries NIAH (MQ-NIAH): } Multiple ``needle'' in the form of key-value pairs are inserted into the ``haystack''. All ``needles'' corresponding to the keys specified in the queries need to be retrieved.

	\item \textbf{Variable Tracking (VT): } A series of variable binding statements (e.g., $X2 = X1, X3 = X2, ...$) are inserted at various positions within a long text to simulate the recognition of entities and relationships. The task objective is to return all variable names that point to the same value $V$.
	
	\item \textbf{Common words extraction task (CWE): } The input sequence is constructed by sampling words from a discrete uniform distribution within a pre-defined (synthetic) word list. Words that appear most frequently are termed ``common words''. The model is required to identify all common words. The number of common words remains constant while the number of uncommon words increases with the length of the sequence.

	\item \textbf{Frequent words extraction task (FWE): } The input sequence is constructed by sampling words from a Zeta distribution within a pre-defined (synthetic) word list. The model need to return the top-K frequent words in the context.
	
	\item \textbf{Single-Hop QA (S-QA)} Answer questions based on passages. The context is extended to simulate long-context input by adding distracting information(golden paragraphs). And answers come from a single piece of evidence.
	
	\item \textbf{Multi-Hop QA (M-QA)} Answer questions based on passages. For Multi-hop QA, which requires integrating information from multiple sources.
	
\end{itemize}

{\bf LongBench}
spans multiple task domains with an average length of 6711, including single-document QA~\cite{dasigi2021dataset,kovcisky2018narrativeqa}, multi-document QA~\cite{ho-etal-2020-constructing,trivedi2022musique,yang2018hotpotqa}, summarization~\cite{fabbri2019multi,huang2021efficient,zhong2021qmsum}, few-shot learning~\cite{gliwa2019samsum,joshi2017triviaqalargescaledistantly,li2002learning}, synthetic tasks~\cite{bai2024longbench}, and code generation~\cite{guo2023longcoderlongrangepretrainedlanguage,liu2023repobenchbenchmarkingrepositorylevelcode}.

\begin{itemize}
	\item \textbf{Single-Doc QA:} Single-document QA requires models to obtain the answer from a single piece of source. The test samples are derived from diverse datasets including NarrativeQA ~\citep{kovcisky2018narrativeqa}, qasper ~\citep{dasigi2021dataset}, and MultiFieldQA~\citep{liu2023lost}, encompassing a range of documents such as legal files, governmental reports, encyclopedia, and academic papers.

	\item \textbf{Multi-Doc QA:} Multi-document QA requires models to extract and combine information from several documents to obtain the answer. The test samples are built from three Wikipedia-based multi-hop QA datasets: HotpotQA ~\citep{yang2018hotpotqa}, 2WikiMultihopQA ~\citep{ho2020constructing} and MuSiQue ~\citep{trivedi2022musique}. 
	
	\item \textbf{Summarization:} Summarization task requires a comprehensive understanding of the context. The test samples are drawn from the GovReport dataset ~\citep{huang2021efficient}, and QMSum ~\citep{zhong2021qmsum}. Additionally, the MultiNews dataset originates from a multi-document summarization corpus detailed in ~\citep{fabbri2019multi}
	
	\item \textbf{Few-shot Learning:} Few-shot Learning task have integrated classification, summarization, and reading comprehension tasks to maintain task diversity. For classification, the TREC dataset \citep{li2002learning} is included. The SAMSum dataset \citep{gliwa2019samsum}, which comprises messenger-style conversations with human-annotated summaries, has been incorporated for summarization tasks. Additionally, TriviaQA~\citep{joshi2017triviaqalargescaledistantly}, a dataset of question-answer pairs, is utilized for reading comprehension tasks.
		
	\item \textbf{Synthetic Task:} Synthetic Tasks are designed to assess the model's ability in handling specific scenarios and patterns. Two synthetic tasks, PassageRetrieval-en and PassageCount, are included in LongBench. PassageRetrieval-en is derived from English Wikipedia. 30 passages is randomly selected and use GPT-3.5-Turbo to summarize one of them. The task challenges the model to identify the original paragraph that matches the generated summary. PassageCount is designed to be more challenging, this task requires the model to leverage the entire context to solve it. Several passages are randomly pickedfrom English Wikipedia, randomly duplicate each paragraph multiple times, and then shuffle the paragraphs. The task is to determine the number of unique passages within the provided set.
	
	\item \textbf{Code Completion:} Code Completion task is to assist users by completing code based on previous code input and context. The LCC dataset, derived from the original Long Code Completion dataset ~\citep{guo2023longcoderlongrangepretrainedlanguage}. The RepoBench-P dataset ~\citep{liu2023repobenchbenchmarkingrepositorylevelcode} is designed to aggregating information from code across files. Model is required to predict the next line of code.

\end{itemize}

\section{Broader Impact}
\label{sec:impact}
By significantly reducing the memory footprint and HBM traffic during inference, DropKV reduces the total energy consumption required to serve long-context models. This contributes to the ``Green AI'' initiative by lowering the carbon footprint of large-scale AI deployments. While efficiency is a technical merit, making long-context LLMs cheaper and faster to run also lowers the barrier for malicious actors to generate high-volume, contextually dense disinformation or to conduct automated surveillance on large corpora of private communications.

\newpage

\section*{NeurIPS Paper Checklist}

\begin{enumerate}

\item {\bf Claims}
    \item[] Question: Do the main claims made in the abstract and introduction accurately reflect the paper's contributions and scope?
    \item[] Answer: \answerYes{} 
    \item[] Justification: Our main contribution is to propose DropKV with theoretical Near-Optimality guarantees and the high-performance Triton implementation which is clearly described in the abstract and introduction.
    \item[] Guidelines:
    \begin{itemize}
        \item The answer \answerNA{} means that the abstract and introduction do not include the claims made in the paper.
        \item The abstract and/or introduction should clearly state the claims made, including the contributions made in the paper and important assumptions and limitations. A \answerNo{} or \answerNA{} answer to this question will not be perceived well by the reviewers. 
        \item The claims made should match theoretical and experimental results, and reflect how much the results can be expected to generalize to other settings. 
        \item It is fine to include aspirational goals as motivation as long as it is clear that these goals are not attained by the paper. 
    \end{itemize}

\item {\bf Limitations}
    \item[] Question: Does the paper discuss the limitations of the work performed by the authors?
    \item[] Answer: \answerYes{} 
    \item[] Justification: We discuss the limitations of the work in Section~\ref{sec:discussion}.
    \item[] Guidelines:
    \begin{itemize}
        \item The answer \answerNA{} means that the paper has no limitation while the answer \answerNo{} means that the paper has limitations, but those are not discussed in the paper. 
        \item The authors are encouraged to create a separate ``Limitations'' section in their paper.
        \item The paper should point out any strong assumptions and how robust the results are to violations of these assumptions (e.g., independence assumptions, noiseless settings, model well-specification, asymptotic approximations only holding locally). The authors should reflect on how these assumptions might be violated in practice and what the implications would be.
        \item The authors should reflect on the scope of the claims made, e.g., if the approach was only tested on a few datasets or with a few runs. In general, empirical results often depend on implicit assumptions, which should be articulated.
        \item The authors should reflect on the factors that influence the performance of the approach. For example, a facial recognition algorithm may perform poorly when image resolution is low or images are taken in low lighting. Or a speech-to-text system might not be used reliably to provide closed captions for online lectures because it fails to handle technical jargon.
        \item The authors should discuss the computational efficiency of the proposed algorithms and how they scale with dataset size.
        \item If applicable, the authors should discuss possible limitations of their approach to address problems of privacy and fairness.
        \item While the authors might fear that complete honesty about limitations might be used by reviewers as grounds for rejection, a worse outcome might be that reviewers discover limitations that aren't acknowledged in the paper. The authors should use their best judgment and recognize that individual actions in favor of transparency play an important role in developing norms that preserve the integrity of the community. Reviewers will be specifically instructed to not penalize honesty concerning limitations.
    \end{itemize}

\item {\bf Theory assumptions and proofs}
    \item[] Question: For each theoretical result, does the paper provide the full set of assumptions and a complete (and correct) proof?
    \item[] Answer: \answerYes{} 
    \item[] Justification: The proofs of all theorems can be found in Appendix~\ref{sec:proof}.
    \item[] Guidelines:
    \begin{itemize}
        \item The answer \answerNA{} means that the paper does not include theoretical results. 
        \item All the theorems, formulas, and proofs in the paper should be numbered and cross-referenced.
        \item All assumptions should be clearly stated or referenced in the statement of any theorems.
        \item The proofs can either appear in the main paper or the supplemental material, but if they appear in the supplemental material, the authors are encouraged to provide a short proof sketch to provide intuition. 
        \item Inversely, any informal proof provided in the core of the paper should be complemented by formal proofs provided in appendix or supplemental material.
        \item Theorems and Lemmas that the proof relies upon should be properly referenced. 
    \end{itemize}

    \item {\bf Experimental result reproducibility}
    \item[] Question: Does the paper fully disclose all the information needed to reproduce the main experimental results of the paper to the extent that it affects the main claims and/or conclusions of the paper (regardless of whether the code and data are provided or not)?
    \item[] Answer: \answerYes{} 
    \item[] Justification: We provide sufficient details to reproduce the main experimental results in Section~\ref{sec:experiments}. Specifically, we describe the model architectures, datasets (RULER, LongBench), evaluation pipeline (lm-evaluation-harness), and all key hyperparameters, including KV-cache budget, query window size, and pooling kernel size. We also detail the implementation of our method, including the scoring procedure and the Triton-based optimization. These descriptions enable independent reproduction of our results.
    \item[] Guidelines:
    \begin{itemize}
        \item The answer \answerNA{} means that the paper does not include experiments.
        \item If the paper includes experiments, a \answerNo{} answer to this question will not be perceived well by the reviewers: Making the paper reproducible is important, regardless of whether the code and data are provided or not.
        \item If the contribution is a dataset and\slash or model, the authors should describe the steps taken to make their results reproducible or verifiable. 
        \item Depending on the contribution, reproducibility can be accomplished in various ways. For example, if the contribution is a novel architecture, describing the architecture fully might suffice, or if the contribution is a specific model and empirical evaluation, it may be necessary to either make it possible for others to replicate the model with the same dataset, or provide access to the model. In general. releasing code and data is often one good way to accomplish this, but reproducibility can also be provided via detailed instructions for how to replicate the results, access to a hosted model (e.g., in the case of a large language model), releasing of a model checkpoint, or other means that are appropriate to the research performed.
        \item While NeurIPS does not require releasing code, the conference does require all submissions to provide some reasonable avenue for reproducibility, which may depend on the nature of the contribution. For example
        \begin{enumerate}
            \item If the contribution is primarily a new algorithm, the paper should make it clear how to reproduce that algorithm.
            \item If the contribution is primarily a new model architecture, the paper should describe the architecture clearly and fully.
            \item If the contribution is a new model (e.g., a large language model), then there should either be a way to access this model for reproducing the results or a way to reproduce the model (e.g., with an open-source dataset or instructions for how to construct the dataset).
            \item We recognize that reproducibility may be tricky in some cases, in which case authors are welcome to describe the particular way they provide for reproducibility. In the case of closed-source models, it may be that access to the model is limited in some way (e.g., to registered users), but it should be possible for other researchers to have some path to reproducing or verifying the results.
        \end{enumerate}
    \end{itemize}

\item {\bf Open access to data and code}
    \item[] Question: Does the paper provide open access to the data and code, with sufficient instructions to faithfully reproduce the main experimental results, as described in supplemental material?
    \item[] Answer: \answerYes{} 
    \item[] Justification: We use open-sources datasets and will release our code once the paper is accepted for publication.
    \item[] Guidelines:
    \begin{itemize}
        \item The answer \answerNA{} means that paper does not include experiments requiring code.
        \item Please see the NeurIPS code and data submission guidelines (\url{https://neurips.cc/public/guides/CodeSubmissionPolicy}) for more details.
        \item While we encourage the release of code and data, we understand that this might not be possible, so \answerNo{} is an acceptable answer. Papers cannot be rejected simply for not including code, unless this is central to the contribution (e.g., for a new open-source benchmark).
        \item The instructions should contain the exact command and environment needed to run to reproduce the results. See the NeurIPS code and data submission guidelines (\url{https://neurips.cc/public/guides/CodeSubmissionPolicy}) for more details.
        \item The authors should provide instructions on data access and preparation, including how to access the raw data, preprocessed data, intermediate data, and generated data, etc.
        \item The authors should provide scripts to reproduce all experimental results for the new proposed method and baselines. If only a subset of experiments are reproducible, they should state which ones are omitted from the script and why.
        \item At submission time, to preserve anonymity, the authors should release anonymized versions (if applicable).
        \item Providing as much information as possible in supplemental material (appended to the paper) is recommended, but including URLs to data and code is permitted.
    \end{itemize}

\item {\bf Experimental setting/details}
    \item[] Question: Does the paper specify all the training and test details (e.g., data splits, hyperparameters, how they were chosen, type of optimizer) necessary to understand the results?
    \item[] Answer: \answerYes{} 
    \item[] Justification: We follow the standard benchmarks and provide necessary details for all experiments in Section~\ref{sec:experiments}.
    \item[] Guidelines:
    \begin{itemize}
        \item The answer \answerNA{} means that the paper does not include experiments.
        \item The experimental setting should be presented in the core of the paper to a level of detail that is necessary to appreciate the results and make sense of them.
        \item The full details can be provided either with the code, in appendix, or as supplemental material.
    \end{itemize}

\item {\bf Experiment statistical significance}
    \item[] Question: Does the paper report error bars suitably and correctly defined or other appropriate information about the statistical significance of the experiments?
    \item[] Answer: \answerYes{} 
    \item[] Justification: We follow previous relevant works in the field to report experimental statistical information.
    \item[] Guidelines:
    \begin{itemize}
        \item The answer \answerNA{} means that the paper does not include experiments.
        \item The authors should answer \answerYes{} if the results are accompanied by error bars, confidence intervals, or statistical significance tests, at least for the experiments that support the main claims of the paper.
        \item The factors of variability that the error bars are capturing should be clearly stated (for example, train/test split, initialization, random drawing of some parameter, or overall run with given experimental conditions).
        \item The method for calculating the error bars should be explained (closed form formula, call to a library function, bootstrap, etc.)
        \item The assumptions made should be given (e.g., Normally distributed errors).
        \item It should be clear whether the error bar is the standard deviation or the standard error of the mean.
        \item It is OK to report 1-sigma error bars, but one should state it. The authors should preferably report a 2-sigma error bar than state that they have a 96\% CI, if the hypothesis of Normality of errors is not verified.
        \item For asymmetric distributions, the authors should be careful not to show in tables or figures symmetric error bars that would yield results that are out of range (e.g., negative error rates).
        \item If error bars are reported in tables or plots, the authors should explain in the text how they were calculated and reference the corresponding figures or tables in the text.
    \end{itemize}

\item {\bf Experiments compute resources}
    \item[] Question: For each experiment, does the paper provide sufficient information on the computer resources (type of compute workers, memory, time of execution) needed to reproduce the experiments?
    \item[] Answer: \answerYes{} 
    \item[] Justification: We have provided the information in Section~\ref{sec:experiments}
    \item[] Guidelines:
    \begin{itemize}
        \item The answer \answerNA{} means that the paper does not include experiments.
        \item The paper should indicate the type of compute workers CPU or GPU, internal cluster, or cloud provider, including relevant memory and storage.
        \item The paper should provide the amount of compute required for each of the individual experimental runs as well as estimate the total compute. 
        \item The paper should disclose whether the full research project required more compute than the experiments reported in the paper (e.g., preliminary or failed experiments that didn't make it into the paper). 
    \end{itemize}
    
\item {\bf Code of ethics}
    \item[] Question: Does the research conducted in the paper conform, in every respect, with the NeurIPS Code of Ethics \url{https://neurips.cc/public/EthicsGuidelines}?
    \item[] Answer: \answerYes{} 
    \item[] Justification: The research conducted in this paper conform, in every respect, with the NeurIPS Code of Ethics.
    \item[] Guidelines:
    \begin{itemize}
        \item The answer \answerNA{} means that the authors have not reviewed the NeurIPS Code of Ethics.
        \item If the authors answer \answerNo, they should explain the special circumstances that require a deviation from the Code of Ethics.
        \item The authors should make sure to preserve anonymity (e.g., if there is a special consideration due to laws or regulations in their jurisdiction).
    \end{itemize}

\item {\bf Broader impacts}
    \item[] Question: Does the paper discuss both potential positive societal impacts and negative societal impacts of the work performed?
    \item[] Answer: \answerYes{} 
    \item[] Justification: We have included broader impacts in Appendix~\ref{sec:impact}.
    \item[] Guidelines:
    \begin{itemize}
        \item The answer \answerNA{} means that there is no societal impact of the work performed.
        \item If the authors answer \answerNA{} or \answerNo, they should explain why their work has no societal impact or why the paper does not address societal impact.
        \item Examples of negative societal impacts include potential malicious or unintended uses (e.g., disinformation, generating fake profiles, surveillance), fairness considerations (e.g., deployment of technologies that could make decisions that unfairly impact specific groups), privacy considerations, and security considerations.
        \item The conference expects that many papers will be foundational research and not tied to particular applications, let alone deployments. However, if there is a direct path to any negative applications, the authors should point it out. For example, it is legitimate to point out that an improvement in the quality of generative models could be used to generate Deepfakes for disinformation. On the other hand, it is not needed to point out that a generic algorithm for optimizing neural networks could enable people to train models that generate Deepfakes faster.
        \item The authors should consider possible harms that could arise when the technology is being used as intended and functioning correctly, harms that could arise when the technology is being used as intended but gives incorrect results, and harms following from (intentional or unintentional) misuse of the technology.
        \item If there are negative societal impacts, the authors could also discuss possible mitigation strategies (e.g., gated release of models, providing defenses in addition to attacks, mechanisms for monitoring misuse, mechanisms to monitor how a system learns from feedback over time, improving the efficiency and accessibility of ML).
    \end{itemize}
    
\item {\bf Safeguards}
    \item[] Question: Does the paper describe safeguards that have been put in place for responsible release of data or models that have a high risk for misuse (e.g., pre-trained language models, image generators, or scraped datasets)?
    \item[] Answer: \answerNA{} 
    \item[] Justification: This paper poses no such risks.
    \item[] Guidelines:
    \begin{itemize}
        \item The answer \answerNA{} means that the paper poses no such risks.
        \item Released models that have a high risk for misuse or dual-use should be released with necessary safeguards to allow for controlled use of the model, for example by requiring that users adhere to usage guidelines or restrictions to access the model or implementing safety filters. 
        \item Datasets that have been scraped from the Internet could pose safety risks. The authors should describe how they avoided releasing unsafe images.
        \item We recognize that providing effective safeguards is challenging, and many papers do not require this, but we encourage authors to take this into account and make a best faith effort.
    \end{itemize}

\item {\bf Licenses for existing assets}
    \item[] Question: Are the creators or original owners of assets (e.g., code, data, models), used in the paper, properly credited and are the license and terms of use explicitly mentioned and properly respected?
    \item[] Answer: \answerYes{} 
    \item[] Justification: The creators or original owners of assets used in the paper are properly credited. The license and terms of use are properly respected.
    \item[] Guidelines:
    \begin{itemize}
        \item The answer \answerNA{} means that the paper does not use existing assets.
        \item The authors should cite the original paper that produced the code package or dataset.
        \item The authors should state which version of the asset is used and, if possible, include a URL.
        \item The name of the license (e.g., CC-BY 4.0) should be included for each asset.
        \item For scraped data from a particular source (e.g., website), the copyright and terms of service of that source should be provided.
        \item If assets are released, the license, copyright information, and terms of use in the package should be provided. For popular datasets, \url{paperswithcode.com/datasets} has curated licenses for some datasets. Their licensing guide can help determine the license of a dataset.
        \item For existing datasets that are re-packaged, both the original license and the license of the derived asset (if it has changed) should be provided.
        \item If this information is not available online, the authors are encouraged to reach out to the asset's creators.
    \end{itemize}

\item {\bf New assets}
    \item[] Question: Are new assets introduced in the paper well documented and is the documentation provided alongside the assets?
    \item[] Answer: \answerNA{} 
    \item[] Justification: This paper does not release new assets.
    \item[] Guidelines:
    \begin{itemize}
        \item The answer \answerNA{} means that the paper does not release new assets.
        \item Researchers should communicate the details of the dataset\slash code\slash model as part of their submissions via structured templates. This includes details about training, license, limitations, etc. 
        \item The paper should discuss whether and how consent was obtained from people whose asset is used.
        \item At submission time, remember to anonymize your assets (if applicable). You can either create an anonymized URL or include an anonymized zip file.
    \end{itemize}

\item {\bf Crowdsourcing and research with human subjects}
    \item[] Question: For crowdsourcing experiments and research with human subjects, does the paper include the full text of instructions given to participants and screenshots, if applicable, as well as details about compensation (if any)? 
    \item[] Answer: \answerNA{} 
    \item[] Justification: The paper does not involve crowdsourcing nor research with human subjects.
    \item[] Guidelines:
    \begin{itemize}
        \item The answer \answerNA{} means that the paper does not involve crowdsourcing nor research with human subjects.
        \item Including this information in the supplemental material is fine, but if the main contribution of the paper involves human subjects, then as much detail as possible should be included in the main paper. 
        \item According to the NeurIPS Code of Ethics, workers involved in data collection, curation, or other labor should be paid at least the minimum wage in the country of the data collector. 
    \end{itemize}

\item {\bf Institutional review board (IRB) approvals or equivalent for research with human subjects}
    \item[] Question: Does the paper describe potential risks incurred by study participants, whether such risks were disclosed to the subjects, and whether Institutional Review Board (IRB) approvals (or an equivalent approval/review based on the requirements of your country or institution) were obtained?
    \item[] Answer: \answerNA{} 
    \item[] Justification:  The paper does not involve crowdsourcing nor research with human subjects.
    \item[] Guidelines:
    \begin{itemize}
        \item The answer \answerNA{} means that the paper does not involve crowdsourcing nor research with human subjects.
        \item Depending on the country in which research is conducted, IRB approval (or equivalent) may be required for any human subjects research. If you obtained IRB approval, you should clearly state this in the paper. 
        \item We recognize that the procedures for this may vary significantly between institutions and locations, and we expect authors to adhere to the NeurIPS Code of Ethics and the guidelines for their institution. 
        \item For initial submissions, do not include any information that would break anonymity (if applicable), such as the institution conducting the review.
    \end{itemize}

\item {\bf Declaration of LLM usage}
    \item[] Question: Does the paper describe the usage of LLMs if it is an important, original, or non-standard component of the core methods in this research? Note that if the LLM is used only for writing, editing, or formatting purposes and does \emph{not} impact the core methodology, scientific rigor, or originality of the research, declaration is not required.
    \item[] Answer: \answerNA{} 
    \item[] Justification: The use of large language models (LLMs) in this work is limited to writing assistance, including grammar correction and language polishing. LLMs are not involved in the development of the core methodology, experimental design, or scientific contributions of this research.
    \item[] Guidelines:
    \begin{itemize}
        \item The answer \answerNA{} means that the core method development in this research does not involve LLMs as any important, original, or non-standard components.
        \item Please refer to our LLM policy in the NeurIPS handbook for what should or should not be described.
    \end{itemize}

\end{enumerate}

\end{document}